\def\safedef#1{%
   \ifx#1\undefined
      \expandafter\def\expandafter#1%
   \else
      \errmessage{The \string#1 is defined already}%
      \expandafter\def\expandafter\tmp
   \fi
}
\newcommand{\guide}[1]{{\color{Violet}[#1]}}
\newcommand{\gray}[1]{{ \color[rgb]{.6,.6,.6} #1 }}
\newcommand{\blue}[1]{{\color[rgb]{.3,.5,1}#1}}
\definecolor{kjgray}{rgb}{.7,.7,.7}
\definecolor{kjgray}{rgb}{.7,.7,.7}
\renewcommand{\paragraph}{%
  \@startsection{paragraph}{4}%
  {\z@}{0.50ex \@plus 1ex \@minus .2ex}{-1em}%
  {\normalfont\normalsize\bfseries}%
}
\def\ddefloop#1{\ifx\ddefloop#1\else\ddef{#1}\expandafter\ddefloop\fi}
\def\ddef#1{\expandafter\def\csname #1#1\endcsname{\ensuremath{\mathbb{#1}}}}
\def\ddef#1{\expandafter\def\csname c#1\endcsname{\ensuremath{\mathcal{#1}}}}
\def\ddef#1{\expandafter\def\csname b#1\endcsname{\ensuremath{{\mathbf{#1}}}}}
\def\ddef#1{\expandafter\def\csname b#1\endcsname{\ensuremath{{\boldsymbol{#1}}}}}
\def\ddef#1{\expandafter\def\csname h#1\endcsname{\ensuremath{\hat{#1}}}}
\def\ddef#1{\expandafter\def\csname hc#1\endcsname{\ensuremath{\hat{\mathcal{#1}}}}}
\def\ddef#1{\expandafter\def\csname hb#1\endcsname{\ensuremath{\hat{\mathbf{#1}}}}}
\def\ddef#1{\expandafter\def\csname hb#1\endcsname{\ensuremath{\hat{\boldsymbol{#1}}}}}
\def\ddef#1{\expandafter\def\csname t#1\endcsname{\ensuremath{\tilde{#1}}}}
\def\ddef#1{\expandafter\def\csname tc#1\endcsname{\ensuremath{\tilde{\mathcal{#1}}}}}
\def\ddef#1{\expandafter\def\csname tb#1\endcsname{\ensuremath{\tilde{\mathbf{#1}}}}}
\def\ddef#1{\expandafter\def\csname tb#1\endcsname{\ensuremath{\tilde{\boldsymbol{#1}}}}}
\def\ddef#1{\expandafter\def\csname bar#1\endcsname{\ensuremath{\bar{#1}}}}
\def\ddef#1{\expandafter\def\csname barc#1\endcsname{\ensuremath{\bar{\mathcal{#1}}}}}
\def\ddef#1{\expandafter\def\csname barb#1\endcsname{\ensuremath{\bar{\mathbf{#1}}}}}
\def\ddef#1{\expandafter\def\csname barb#1\endcsname{\ensuremath{\bar{\boldsymbol{#1}}}}}
\def\ddef#1{\expandafter\def\csname war#1\endcsname{\ensuremath{\overline{#1}}}}
\def\ddef#1{\expandafter\def\csname warc#1\endcsname{\ensuremath{\overline{\mathcal{#1}}}}}
\def\ddef#1{\expandafter\def\csname warb#1\endcsname{\ensuremath{\overline{\mathbf{#1}}}}}
\def\ddef#1{\expandafter\def\csname warb#1\endcsname{\ensuremath{\overline{\boldsymbol{#1}}}}}
\def\tilr{\tilde r}
\def\dt{\delta}
\def\eps{\varepsilon}
\def\epsilon{\varepsilon}
\def\th{\theta}
\def\Dt{\Delta}
\def\Th{\Theta}
\def\greeksymbols{alpha,beta,gamma,gam,delta,dt,eps,epsilon,zeta,eta,theta,th,iota,kappa,kap,lambda,lam,mu,nu,xi,pi,rho,sigma,sig,tau,phi,chi,psi,omega,om,Gamma,Gam,Delta,Dt,Theta,Th,Lambda,Lam,Pi,Sigma,Sig,Phi,Psi,Omega,Om}
\def\greeksymbolsnoeta{alpha,beta,gamma,gam,delta,dt,eps,epsilon,zeta,theta,th,iota,kappa,kap,lambda,lam,mu,nu,xi,pi,rho,sigma,sig,tau,phi,chi,psi,omega,om,Gamma,Gam,Delta,Dt,Theta,Th,Lambda,Lam,Pi,Sigma,Sig,Phi,Psi,Omega,Om} %
\xdef\csname barb\x\endcsname{\noexpand\ensuremath{\noexpand\bar{\noexpand\boldsymbol{ \csname \x\endcsname}}}}
\providecommand{\normz}[2][-1]{
\ensuremath{\mathinner{
\ifthenelse{\equal{#1}{-1}}{ %
\!\left\|#2\right\|}{}
\ifthenelse{\equal{#1}{0}}{ %
\|#2\|}{}
\ifthenelse{\equal{#1}{1}}{ %
\bigl\|#2\bigr\|}{}
\ifthenelse{\equal{#1}{2}}{ %
\Bigl\|#2\Bigr\|}{}
\ifthenelse{\equal{#1}{3}}{ %
\biggl\|#2\biggr\|}{}
\ifthenelse{\equal{#1}{4}}{ %
\Biggl\|#2\Biggr\|}{}
}} %
}  %
\providecommand{\floor}[2][-1]{
\ensuremath{\mathinner{
\ifthenelse{\equal{#1}{-1}}{ %
\!\left\lfloor#2\right\rfloor}{}
\ifthenelse{\equal{#1}{0}}{ %
\lfloor#2\rfloor}{}
\ifthenelse{\equal{#1}{1}}{ %
\!\bigl\lfloor#2\bigr\rfloor}{}
\ifthenelse{\equal{#1}{2}}{ %
\!\Bigl\lfloor#2\Bigr\rfloor}{}
\ifthenelse{\equal{#1}{3}}{ %
\!\biggl\lfloor#2\biggr\rfloor}{}
\ifthenelse{\equal{#1}{4}}{ %
\!\Biggl\lfloor#2\Biggr\rfloor}{}
}} %
}
\providecommand{\ceil}[2][-1]{
\ensuremath{\mathinner{
\ifthenelse{\equal{#1}{-1}}{ %
\!\left\lceil#2\right\rceil}{}
\ifthenelse{\equal{#1}{0}}{ %
\lceil#2\rceil}{}
\ifthenelse{\equal{#1}{1}}{ %
\!\bigl\lceil#2\bigr\rceil}{}
\ifthenelse{\equal{#1}{2}}{ %
\!\Bigl\lceil#2\Bigr\rceil}{}
\ifthenelse{\equal{#1}{3}}{ %
\!\biggl\lceil#2\biggr\rceil}{}
\ifthenelse{\equal{#1}{4}}{ %
\!\Biggl\lceil#2\Biggr\rceil}{}
}} %
}
\newcommand{\fr}[2]{ { \frac{#1}{#2} }}
\newcommand{\til}[1]{{\ensuremath{\tilde{#1}}}}
\def\wed{\wedge}
\def\tsty{\textstyle}
\def\cd{\cdot}
\def\rarrow{\ensuremath{\rightarrow}}
\definecolor{mygrn}{rgb}{0,.8,0}
\definecolor{myred}{rgb}{.8,0,0}
\DeclareMathOperator{\EE}{\mathbb{E}} %
\DeclareMathOperator{\VV}{\mathbb{V}}
\DeclareMathOperator{\PP}{\mathbb{P}}
\DeclareMathOperator{\Var}{{\mathrm{Var}}}
\DeclareMathOperator{\one}{\mathds{1}\hspace{-.1em}}
\providecommand{\onec}[2][-1]{
\ensuremath{\mathinner{
\one\cbr{#2}
} %
}  %
}
\DeclarePairedDelimiterX{\inp}[2]{\langle}{\rangle}{#1, #2}
\newcommand\declareop[3]{%
  \newcommand#1{%
    \mskip\muexpr\medmuskip*#2\relax
    {#3}%
    \mskip\muexpr\medmuskip*#2\relax
}}
\declareop\capprox{1}{{\sr{\const}{\approx}}} %
\declareop\logapprox{1}{{\sr{\mathsf{log}}{\approx}}} %
\newcommand{\gsim}{\mathop{}\!\gtrsim}
\def\const{\mathsf{const}}
\def\suchthat{\ensuremath{\text{ s.t. }}}
\newcommand{\sr}{\stackrel}
\newcommand{\vast}{\bBigg@{3}}
\newcommand{\Vast}{\bBigg@{4}}
\newenvironment{talign*}
 {\let\displaystyle\textstyle\csname align*\endcsname}
 {\endalign}
\def\chrulefill{\leavevmode\leaders\hrule height 0.7ex depth \dimexpr0.4pt-0.7ex\hfill\kern0pt}
\newcommand{\qedhere}{\ifmmode\hfill\ensuremath{\jmlrQED}\else\unskip\nobreak\hfill\ensuremath{\jmlrQED}\fi}
\renewcommand{\cite}{\citep}
\setlist{nolistsep} %
\setlist{itemsep=.0em} %
\setlist[itemize]{topsep=.5pt,itemsep=0pt,parsep=2pt}
\setlist[enumerate]{topsep=.5pt,itemsep=0pt,parsep=2pt}
\newcommand{\jon}[1]{{\color{orange}[JR: #1]}}
\newcommand{\jk}[1]{{\color{red}[JK: #1]}}
\newcommand{\kj}[1]{{\color{ForestGreen}[KJ: #1]}}
\newif\ifFINAL
  \def\blue#1{#1}
  \def\guide#1{}
  \def\gray#1{}
\renewcommand{\jon}[1]{}
\renewcommand{\jk}[1]{}
\renewcommand{\kj}[1]{}
\newcommand{\defeq}{\triangleq}
\newcommand{\E}{\EE}
\newcommand{\ie}{i.e., }
\newcommand{\wealth}{\mathsf{W}}
\newcommand{\wealthcrp}[1]{\mathsf{W}^{\mathsf{CRP}(#1)}}
\newcommand{\wealthup}{\mathsf{W}^{\mathsf{UP}}}
\newcommand{\wealthpcrp}{\mathsf{W}^{\mathsf{pCRP}^\star}}
\newcommand{\piref}{\pi_{\mathsf{ref}}}
\newcommand{\ips}{\tilr}
\newcommand{\mean}{\mu}
\newcommand{\var}{\mathsf{V}}
\newcommand{\lcbup}[2]{%
  \ifthenelse{\equal{#2}{}}%
    {\hat{\mean}_{\mathsf{UP}}}%
    {\hat{\mean}_{\mathsf{UP}}^{(#2)}}%
}
\newcommand{\lcbpcrp}[2]{%
  \ifthenelse{\equal{#2}{}}%
    {\hat{\mean}_{\mathsf{pCRP}^\star}}%
    {\hat{\mean}_{\mathsf{pCRP}^\star}^{(#2)}}%
}
\newcommand{\pcrplcb}{pCRP$^\star$-LCB}
\newcommand{\selectup}{\blue{{\hpi}_{\mathsf{UP}}^{(\dt)}}}
\newcommand{\selectupsimple}{\blue{{\hpi}_{\mathsf{UP}}}}
\newcommand{\selectpcrpsimple}{\blue{{\hpi}_{\mathsf{pCRP}^\star}}}
\newcommand{\lcbeb}[2]{\hat{\mean}_{\mathsf{EB}}^{(#2)}}
\newcommand\StartAppendixEntries{}
  \renewcommand\StartAppendixEntries{\value{tocdepth}=-10000\relax}%
  \edef\maintocdepth{\the\value{tocdepth}}%
  \renewcommand\StartAppendixEntries{\value{tocdepth}=\maintocdepth\relax}%
\newtheorem{assumption}[theorem]{Assumption}
\numberwithin{theorem}{section}
\numberwithin{equation}{section}
\title[Improved Offline Contextual Bandits with Second-Order Bounds]
{Improved Offline Contextual Bandits with Second-Order Bounds:\\ Betting and Freezing}
\begin{document}

\maketitle

\begin{abstract}%
We consider off-policy selection and learning in contextual bandits, where the learner aims to select or train a reward-maximizing policy using data collected by a fixed behavior policy. Our contribution is two-fold. First, we propose a novel off-policy selection method that leverages a new betting-based confidence bound applied to an inverse propensity weight sequence. Our theoretical analysis reveals that this method achieves a significantly improved, variance-adaptive guarantee over prior work. Second, we propose a novel and generic condition on the optimization objective for off-policy learning that strikes a different balance between bias and variance. One special case, which we call freezing, tends to induce low variance, which is preferred in small-data regimes. Our analysis shows that it matches the best existing guarantees. In our empirical study, our selection method outperforms existing methods, and freezing exhibits improved performance in small-sample regimes.
\end{abstract}

\begin{keywords}%
offline contextual bandits; confidence bounds; martingale; second-order bounds%
\end{keywords}

\textfloatsep=1em

\section{Introduction}

The offline contextual bandit problem has emerged as a critical area of study in sequential decision-making, with significant implications for decision systems for various domains including recommendation~\citep{li10acontextual} and online advertising~\citep{schwartz2017customer}.
In this problem, a behavior policy $\piref(a|x)$ is deployed in the environment for a nontrivial period of time, where the policy defines a conditional distribution over the actions $a$ (e.g., items to be recommended) given each context information $x$ (e.g., user being served).
Specifically, at each time step $t\in[n] \defeq \{1,\ldots,n\}$, an agent observes a context $x_t\sim \cD$ from an unknown distribution $\cD$, takes an action $a_t \sim \piref(a|x_t)$, and then receives a reward $r_t = r(x_t, a_t) \in [0,1]$ where $r$ is an unknown  (possibly stochastic) reward function. 
Given offline logs of interactions $D_n\defeq \{(x_t,a_t,r_t)\}_{t=1}^n$ obtained via the behavior policy, we wish to find a policy $\pi$ that maximizes the expected reward $\mean(\pi)\defeq \EE_{x\sim\cD,a\sim \pi(a|x)}[r(x,a)]$, which we call the \emph{value} of the policy. 
This setting is called \emph{off-policy}, in contrast to its online counterpart, where a policy can be updated continually using feedback from the environment.
While online interactions may allow for more effective policy optimization, in many real-world scenarios this is either infeasible due to system constraints or too costly due to operational risks.
The offline problem naturally arises as a viable alternative in this context.

\begin{figure*}
  \begin{center}
  \scalebox{.975}{
    \begin{tabular}{c}
      $\underbrace{\sqrt{\EE\biggl[\fr{(\ips_1^{\pi^*} - v(\pi^*))^2}{1 + \beta 
          (\ips_1^{\pi^*}\!-\!v(\pi^*))}\biggr]}}_{\tsty\text{PUB (\textbf{ours}; see Eq.~\eqref{eq:selection_refined})}} 
      \le \displaystyle\EE\biggl[\fr{(\ips_1^{\pi^*} - v(\pi^*))^2}{1 + \beta 
          (\ips_1^{\pi^*}\!-\!v(\pi^*))}\biggr]
      \lesssim\!\! \underbrace{1 \!+\! \EE\biggl[\fr{(\ips_1^{\pi^*})^2}{1 + \beta 
          \ips_1^{\pi^*}}\biggr]}_{\tsty\text{\citet{sakhi24logarithmic}} } 
      \le\!\! \underbrace{1 \!+\! \EE\biggl[
        \frac{(\ips_1^{\pi^\star})^2}{\pi^\star r+\beta\ips_1^{\pi^\star}}
        \biggr]}_{\tsty\text{\citet{gabbianelli24importance}} }$.
\vspace{.5em}\\
(a) \textbf{Off-Policy Selection} 
\vspace{1em}\\
       $  
        \underbrace{\EE\biggl[
        \frac{(\ips_1^{\pi^\star})^2}{c_u + \gamma\ips_1^{\pi^\star}}
        \biggr]}_{ 
             \tsty\shortstack{\text{Solve \eqref{eq:op-learning-alg-0} + Assumption~\ref{ass:phi-new} (\textbf{ours}; see Eq.~\eqref{eq:learning})} \\ \text{\& \citet{sakhi24logarithmic}}} 
         } 
       \le \underbrace{
       \EE\biggl[
        \frac{(\ips_1^{\pi^\star})^2}{\pi^\star r+\gamma\ips_1^{\pi^\star}}
        \biggr]
        }_{\tsty\text{\citet{gabbianelli24importance}}}. 
       $
       \vspace{.5em}\\
       (b) \textbf{Off-Policy Learning}
    \end{tabular}}
  \end{center}\vspace{-1em}
  \caption{
  Comparison of different bounds on the offline regret (see Eq.~\eqref{eq:offline_regret}) for the off-policy (a) selection and (b) learning where $\beta \approx \sqrt{1/n}$ in (a), and $\gamma>0$ in (b) is a hyperparameter.
  We hide a factor of $\sqrt{1/n}$ and other constants. 
  The symbol above $\lesssim$ holds for $n$ sufficiently large. %
  For selection, our method achieves an improved bound.
  For learning, we propose a broad family of methods that achieves the same order of bound as~\citet{sakhi24logarithmic}.
  }
  \label{fig:comparison}
\end{figure*}

The main challenge in such an offline setting is in the discrepancy between the behavior policy used to log the offline data and the set of candidate policies whose performance we wish to evaluate.
That is, we cannot simply use the offline data $D_n$ to estimate the expected reward of an arbitrary policy $\pi$, since $\pi$ may choose actions that are different from $a_t$'s chosen by $\piref$, in which case we have not observed the corresponding rewards.
This is in stark contrast to the supervised learning setup where a classifier's generalization error can be estimated by simply computing the average error on a test dataset.
To circumvent the problem, researchers have proposed unbiased estimators of the expected reward of a policy, such as the Importance Weighted (IW) estimator\footnote{
Also known as Inverse Propensity Score (IPS) or Inverse Propensity Weighting (IPW) estimators.
}~\citep{horvitz52generalization,liu19competing} 
and Doubly Robust (DR) estimator~\citep{robins95semiparametric}, with numerous extensions of them.
The IW estimator is defined as
\begin{align}\label{eq:iw}
    \hat{\mean}_n^{\mathsf{IW}}(\pi) 
    \defeq 
    \frac{1}{n}\sum_{t=1}^n \ips_t^\pi, \text{~~ where ~~} \ips_t^\pi = w_t^\pi r_t
    ,
\end{align}
is called the \emph{importance-weighted reward}, and we refer to $w_t^\pi\defeq \frac{\pi(a_t|x_t)}{\piref(a_t|x_t)}$ as the importance weight.

Depending on the goal, there are three representative types of off-policy (OP) problems.
Below, we contrast each with its counterpart in a supervised learning setup.

\begin{itemize}
\item \textbf{Off-policy evaluation}:
Given a policy $\pi$, we wish to estimate its value (i.e., expected reward).
In supervised learning, this corresponds to estimating the generalization error of a classifier using test data or obtaining confidence bounds for it.

\item \textbf{Off-policy selection}:
Given $\Pi$, a finite set of candidate policies, we wish to find the best policy---that is, the one with the highest expected reward (i.e., value).
In supervised learning, this corresponds to performing model selection using hold-out validation data.

\item \textbf{Off-policy learning (optimization)}:
Given a policy class $\Pi$ (typically with $|\Pi| = \infty$, as in the case of neural-network policies), we wish to find the best policy $\pi$ that achieves the highest value.
In supervised learning, this corresponds to learning a classifier using training data.
\end{itemize}

In selection and learning, we wish to establish a guarantee on the \emph{offline regret} (or suboptimality gap) for the policy $\hat{\pi} \in \Pi$ selected by an algorithm, which is defined as, for $\pi^\star \defeq  \arg\max_{\pi\in\Pi} \mean(\pi)$,
\begin{align}
\label{eq:offline_regret}
    \mathsf{Reg}_n(\hat{\pi}) 
    \defeq \mean(\pi^\star)-\mean(\hat{\pi}).
\end{align}

We note that the key difference between selection and learning lies in the cardinality of the policy class $\Pi$. 
In the selection problem, since $\Pi$ is finite, we can exhaustively evaluate the performance of each policy.
In the learning (optimization) problem, however, $\Pi$ is typically a continuously parameterized class (e.g., neural networks), and thus solving it requires \emph{computational efficiency} in the optimization process.
In the literature, such requirement on the learning objective is called \emph{oracle efficiency}~\citep{langford07epoch,wang24oracle}, meaning that the learning objective is optimizable efficiently assuming access to an optimization oracle.
More concretely, we prefer objectives that are convex, or at least amenable to stochastic gradient-based optimization.
\paragraph{Contributions.}
In this paper, we make two main contributions.
First, we propose a novel OP selection method called PUB (\textbf{P}essimism via semi-\textbf{U}nbounded-coin-\textbf{B}etting).
PUB is an algorithm for computing a lower confidence bound (LCB) of any nonnegative random variable and is based on a variation of betting-based confidence bound~\citep{Waudby-Smith--Ramdas2020b,orabona24tight,Ryu--Bhatt2024}.
By applying our new LCB to the importance-weighted rewards $\{\til r_t^\pi\}_{t=1}^n$ (defined in Eq.~\eqref{eq:iw}) for each policy under consideration, we can establish a guarantee on the performance measure called offline regret (defined in Eq.~\eqref{eq:offline_regret}), which is strictly tighter than existing works to our knowledge.
We highlight two features in our guarantee. First, our regret bound scales with the \emph{standard deviation} of $\ips^\pi$, significantly improving the prior art scaling with the raw second moment. Second, more crucially, we achieve the improved guarantee \emph{without any hyperparameter tuning}. This is crucial in practice as tuning a parameter in the existing estimators is infeasible in general due to the lack of knowledge on the second-order statistics of $\ips^\pi$.
We summarize the comparison in Figure~\ref{fig:comparison}(a) and provide details in Section~\ref{sec:op-selection}.
Our LCB can be also applied to OP evaluation to construct both lower- and upper- confidence bounds, provably converging to the value of a target policy; we defer this discussion to Appendix~\ref{sec:evaluation}. 
Second, we propose a broad family of optimization objectives for OP learning in the form of
\begin{align}\label{eq:op-learning-alg-0}
    \hpi_n\defeq \arg\max_{\pi \in \Pi} 
    \sum_{t=1}^n \phi (\beta \ips_t^\pi),
\end{align}
where $\beta$ is a hyperparameter, and $\phi\colon \mathbb{R}_+\to\mathbb{R}$ is a \emph{score function}.
Under a mild condition on $\phi$, we show that such an optimal policy $\hpi_n$ guarantees a regret bound that depends on a second moment, matching the rate of the state-of-the-art method known as \emph{logarithmic smoothing}~\citep{sakhi24logarithmic}.
One extreme instance of our generic family is called \textit{freezing}, in which the score function $\phi(x)$ is zero for $x$ sufficiently large.
This greatly reduces variance at the cost of introducing bias, and we empirically show that freezing achieves the best performance especially in the small-data regime.
Our analysis not only matches the same smoothed second-order bound as the state-of-the-art~\citep{sakhi24logarithmic}, but also reveals that, depending on the problem instance, more aggressive methods such as freezing may be preferable.
We summarize our achieved bounds along with those of existing work in Figure~\ref{fig:comparison}(b), and explain the details in Section~\ref{sec:learning}.

Finally, we conduct an empirical evaluation of the proposed selection and learning methods, following the suite of experiments in \citep{wang24oracle}.
We demonstrate that PUB outperforms all existing methods, and that the new learning methods either outperform or match the performance of baseline methods.
We conclude the paper by outlining promising directions for future research.
Due to space constraints, we defer the discussion of related work to Appendix~\ref{sec:related}.

\paragraph{Notation.} For a random variable $X$, we denote its expectation and variance by $\EE[X]$ and $\VV[X]=\EE[X^2]-\EE[X]^2$, respectively.
We use $a_{1:n}$ to denote a sequence of numbers $a_1,\ldots,a_n$.
For real numbers $a,b\in\mathbb{R}$, we use shorthand notations $a\wed b\defeq \max\{a,b\}$ and $a\vee b \defeq \min\{a,b\}$.

\section{Problem Setting}
We are given a log of interactions $D_n=\{(x_t,a_t,r_t)\}_{t=1}^n$ from a contextual bandit, collected using a behavior (or reference) policy $\piref(a|x)$.
That is, for each $t\ge 1$, 
$
(x_t,a_t,r_t)\sim p(x) \piref(a|x) p(r|x,a).
$
Based on the bandit-logged data $D_n$, our goal is to evaluate the value of a target policy $\pi(a|x)$:
\[
\mean(\pi)\defeq \E_{(x,a,r)\sim p(x)\pi(a|x)p(r|a,x)}[r] ~.
\]
With a slight abuse of notation, we will occasionally write $r=r(x,a)$, where $r(x,a) \in[0,1]$ denotes a (possibly stochastic) reward function.
One simple yet popular unbiased estimator for $\mean(\pi)$ is the importance weighted (IW) estimator~\citep{horvitz52generalization} defined in Eq.~\eqref{eq:iw}.
Hereafter, we denote the variance of the importance-weighted reward by
\[\tilde{\var}(\pi)\defeq \VV[\ips_1^\pi].\]
While the IW estimator is unbiased, i.e., $\EE[\hat{\mean}_n^{\mathsf{IW}}(\pi) ]=\mean(\pi)$, the variance $\tilde{\var}(\pi)$ can be undesirably large for policies that frequently choose different actions not explored by $\piref$.
The effect is exacerbated when the IW estimator is used as the objective function for selection and/or learning tasks.
That is, we may end up choosing a poor policy because, if $\ips_1^\pi$ exhibits disproportionately high variance, then the value can be largely overestimated with nontrivial probability.

This led to development of the \textit{pessimism} principle~\citep{swaminathan15batch}, which aims to find the policy that maximizes a lower confidence bound on the value.
This has the benefit of penalizing policies with large variance, effectively serving as a form of regularization to promote stability.
Theoretically, pessimism is known to enjoy a property called \textit{single-policy concentrability}, which means that the primary factor determining the convergence of the offline regret (see Eq.~\eqref{eq:offline_regret}) scales with a quantity that depends on the variability of the optimal policy $\pi^*$, rather than the worst-case variability over all policies $\pi\in\Pi$, a condition commonly referred to as \textit{all-policy concentrability}.
Intuitively, single-policy concentrability ensures that the convergence to the optimal policy is not affected by ill-behaved policies, as long as the optimal policy remains well-behaved.

\section{Off-Policy Selection}
\label{sec:op-selection}
In the selection problem, we wish to choose a policy that maximizes the expected reward from a set of finite policies.
Developing a new lower confidence bound technique,
we will follow the standard \emph{pessimism} under uncertainty: construct the LCB on the expected reward for each policy, and choose the policy that maximizes the LCB.
In what follows, we first introduce a betting-based (time-uniform) confidence bound for mean-parameter estimation when the random variables are $[0,\infty)$-valued.
We then show how the pessimism strategy with our confidence bound performs in the selection task (Theorem~\ref{thm:main_selection}) and discuss its superiority against existing methods.

\subsection{New Betting-Based Lower Confidence Bound for \texorpdfstring{$[0,\infty)$}{[0,infty]}-Valued Random Variables}
To construct a confidence bound, 
we draw ideas from gambling and the martingale theory, which have been widely used in the recent literature~\citep{orabona24tight,Waudby-Smith--Ramdas2020a,Waudby-Smith--Ramdas2020b,Waudby-Smith--Wu--Ramdas--Karampatziakis--Mineiro2022,Ryu--Bhatt2024,Ryu--Wornell2024}.
The most general form of gambling is stock market investment~\citep{Cover--Thomas2006}, but we focus on betting in a two-stock market, following the convention of \citet{Ryu--Bhatt2024}.

\subsubsection{A Generic Betting-Based Construction}
Suppose that there are two stocks, say stock 1 and stock 2. 
On each day $t\in\mathbb{N}$, a gambler must make her betting $\bb_t=(b_t,1-b_t)$, for $b_t\in[0,1]$, over the two stocks at the beginning.
At the end of the day, the \emph{price relative vector} $\bx_t=(x_{t1},x_{t2})\in\mathbb{R}_+^2$ is revealed, where $x_{ti}>0$ captures the multiplicative change in the price of stock $i$.
Note that the betting $\bb_t$ must be causal, that is, $\bb_t$ can be only a function of the past observations $\bx_{1:t-1}$.
If we denote the gambler's wealth at day $t$ by $\wealth_t$, then the multiplicative gain of the wealth can be written as 
\[
\frac{\wealth_t}{\wealth
_{t-1}}=\bb_t^\intercal\bx_t
=b_t x_{t1} + (1-b_t) x_{t2}.
\]
If we assume that $(\bx_t)_{t=1}^\infty$ is stochastic and satisfies $\EE[\bx_t|\bx_{1:t-1}]\le [1,1]^\intercal$ (coordinate-wise), 
then it is easy to check that the wealth process $(\wealth_t)_{t=1}^\infty$ is \emph{super-martingale}, \ie $\EE[\wealth_t|\bx_{1:t-1}]\le \wealth_{t-1}$, regardless of the choice of $\bb_t$.

Now, suppose that we have a random process $(Y_t)_{t=1}^{\infty}$ such that $\EE[Y_t|Y^{t-1}] = \EE[Y_1] \defeq \blue{\mu}$ for any $t\ge1$, and we wish to construct a confidence set for the unknown mean parameter $\mu$.
We can then construct a confidence sequence based on betting as follows.
First, we construct a \emph{hypothetical} stock market $\bx_t(\nu)$ as a function of candidate mean parameter $\nu$, such that any wealth process from the market becomes super-martingale when $\nu=\mu$.
If we denote the resulting wealth by $\wealth_n(\bx_{1:n}(\nu))$,
then, by applying Ville's inequality~\citep{ville39etude} to a super-martingale $(\wealth_t(\bx_{1:t}(\mu)))_{t=1}^{\infty}$, we have
\begin{align*}
  1-\dt \le \PP\del[3]{ \sup_{n\ge 1} \wealth_n(\bx_{1:n}(\mu)) < \fr1\dt }.
\end{align*}
Given that this good event happens, we can now construct a confidence set at level $(1-\dt)$ at each time step $t$, by collecting all candidate parameters that result in wealth that \emph{does not exceed} the threshold $1/\delta$, as they cannot be $\mu$.
This outlines the general recipe for constructing confidence sequences based on betting.
To derive a confidence sequence based on this meta-algorithm, one needs to specify: (1) how to construct the hypothetical stock market, and (2) which betting strategy to employ.\looseness=-1

The construction for bounded random processes have been extensively studied in the recent literature~\citep{Waudby-Smith--Ramdas2020a,Waudby-Smith--Ramdas2020b,orabona24tight,Ryu--Bhatt2024}.
Specifically, for a $[0,1]$-valued random process $(Y_t)_{t=1}^\infty$ with mean parameter $\mu\in(0,1)$, one can set the stock market $\bx_{t}(\nu) \defeq [\fr{Y_t}{\nu}, \fr{1-Y_t}{1-\nu}]^\intercal$ for each $\nu\in(0,1)$.
In particular, \citet{orabona24tight} showed that a variant of Cover's universal portfolio leads to confidence bounds that are of empirical-Bernstein type (i.e., confidence width adapts to the empirical variance), and provably never worse than the Bernoulli-KL-based confidence bound.
The latter property does not hold for the empirical Bernstein bound~\cite{maurer09empirical} in the small-sample regime.

What if we are interested in a nonnegative random process \((Y_t)_{t=1}^{\infty}\) that may be unbounded (i.e., \(Y_t \in [0,\infty)\))?
The unbounded nature of \(Y_t\) breaks the nonnegativity of the market sequence \(\bx_t(\nu)\) in the construction above, thereby violating Ville's inequality and preventing us from obtaining confidence bounds.  
As a solution, \citet{Waudby-Smith--Wu--Ramdas--Karampatziakis--Mineiro2022} considered a stock market which, when rephrased in the two-stock market language of \citet{Ryu--Bhatt2024}, takes the form
\begin{align*}
\bx_t(\nu)\defeq \biggl[\frac{Y_t}{\nu},1\biggr]^\intercal,
\end{align*}
so that the resulting wealth process remains nonnegative.
Here, the first stock depends on the underlying process $Y_t$, while the second stock can be interpreted as \emph{cash}.
We call this a \emph{one-sided betting} formulation.
For a betting strategy $(\bx_{1:t-1}(\nu)\mapsto \bb_t)_{t=1}^\infty$, the cumulative wealth is then
\begin{align*}
\wealth_n(\bx_{1:n}(\nu))
\defeq \wealth_n(Y_{1:n}; \nu)
\defeq \prod_{t=1}^n \biggl(1-b_t+b_t\frac{Y_t}{\nu}\biggr),
\end{align*}
assuming that we start from a unit initial wealth $\wealth_0=1$.
Thus, for $\dt\in(0,1)$, defining
\begin{align*}
C_{\mathsf{bet}}^{(\dt)}(Y_{1:n})
\defeq \biggl\{
\nu\in(0,1)\colon \wealth_n(Y_{1:n};\nu)\le \frac{1}{\dt}
\biggr\}
\quad\text{and}\quad
\hat{\mean}_{\mathsf{bet}}^{(\dt)}(Y_{1:n})\defeq \inf C_{\mathsf{bet}}^{(\dt)}(Y_{1:n}),
\end{align*}
we have that
$\hat{\mean}_{\mathsf{bet}}^{(\dt)}(Y_{1:n})$ is a $(1-\dt)$-lower confidence bound (LCB) for $\EE[Y_1]$ by Ville's inequality:
\begin{proposition}[{\citealp[Proposition~1]{Waudby-Smith--Wu--Ramdas--Karampatziakis--Mineiro2022}}]
\label{prop:lcb}
{Let $(Y_t)_{t=1}^\infty$ a non-negative real-valued random process $(Y_t)_{t=1}^\infty$ such that $\EE[Y_t|Y^{t-1}]\defeq \EE[Y_1]=\mu>0$ for any $t\ge 1$.}
For any causal betting strategy, $C_{\mathsf{bet}}^{(\dt)}(Y_{1:n})$ is a (time-uniform) lower confidence set at level $1-\dt$, that is,
\begin{align*}
\PP\Bigl(\forall t \ge 1, \mu \ge \hat{\mean}_{\mathsf{bet}}^{(\dt)}(Y_{1:t})\Bigr) \ge 1-\dt.
\end{align*}    
\end{proposition}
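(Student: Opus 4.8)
The plan is to follow the generic betting-martingale recipe described just above the statement: plug the \emph{true} mean $\mu$ into the hypothetical market to obtain a non-negative super-martingale, apply Ville's inequality to control the whole trajectory at once, and then decode the resulting maximal bound as membership of $\mu$ in every confidence set $C_{\mathsf{bet}}^{(\dt)}(Y_{1:t})$.

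First I would fix $\mu$ and study the wealth evaluated at the candidate $\nu=\mu$, namely $M_t\defeq \wealth_t(Y_{1:t};\mu)=\prod_{s=1}^t\bigl(1-b_s+b_s\tfrac{Y_s}{\mu}\bigr)$ with $M_0=1$, relative to the filtration $\cF_{t-1}\defeq \sigma(Y_1,\ldots,Y_{t-1})$. Two facts are needed. (i) \emph{Non-negativity}: since $Y_s\ge 0$, $\mu>0$, and $b_s\in[0,1]$, each factor obeys $1-b_s+b_s\tfrac{Y_s}{\mu}\ge 1-b_s\ge 0$, so $M_t\ge 0$. (ii) \emph{Martingale property}: because the bet $b_t$ is causal, i.e. $\cF_{t-1}$-measurable, and using the hypothesis $\EE[Y_t\mid\cF_{t-1}]=\mu$, I get $\EE[M_t\mid\cF_{t-1}]=M_{t-1}\,\EE[1-b_t+b_t\tfrac{Y_t}{\mu}\mid\cF_{t-1}]=M_{t-1}(1-b_t+b_t\tfrac{\mu}{\mu})=M_{t-1}$. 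Hence $(M_t)_{t\ge 0}$ is a non-negative martingale (in particular a super-martingale) with unit initial value.

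Next I would apply Ville's inequality to $(M_t)$, which gives $\PP\bigl(\sup_{t\ge 1}M_t\ge 1/\dt\bigr)\le \dt\,\EE[M_0]=\dt$, equivalently $\PP\bigl(\forall t\ge 1,\ \wealth_t(Y_{1:t};\mu)<1/\dt\bigr)\ge 1-\dt$. This is exactly the step that yields the \emph{time-uniform} (anytime) guarantee, since Ville controls the entire trajectory simultaneously rather than a fixed horizon. On the good event $\{\forall t:\ \wealth_t(Y_{1:t};\mu)<1/\dt\}$, the true mean $\mu$ satisfies the defining constraint of the confidence set at every $t$, so $\mu\in C_{\mathsf{bet}}^{(\dt)}(Y_{1:t})$; taking the infimum gives $\hat{\mean}_{\mathsf{bet}}^{(\dt)}(Y_{1:t})=\inf C_{\mathsf{bet}}^{(\dt)}(Y_{1:t})\le \mu$ for all $t$. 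Intersecting over $t$ on this single high-probability event establishes $\PP\bigl(\forall t\ge 1,\ \mu\ge \hat{\mean}_{\mathsf{bet}}^{(\dt)}(Y_{1:t})\bigr)\ge 1-\dt$.

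The only genuinely delicate point is step (ii): recognizing that evaluating the market at $\nu=\mu$ makes the per-step conditional gain exactly $1$, so that \emph{any} causal betting strategy leaves $(M_t)$ a (super-)martingale regardless of how $b_t$ is chosen; once this is in hand, the rest is a direct application of Ville plus bookkeeping with the infimum. A minor edge case, $\mu\ge 1$, is immaterial here, since $C_{\mathsf{bet}}^{(\dt)}(Y_{1:t})\subseteq(0,1)$ forces $\inf C\le \mu$ anyway, and in the intended application $\mu=\mean(\pi)\in(0,1)$.
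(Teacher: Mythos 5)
Your proof is correct and follows essentially the same route as the paper: evaluate the wealth at the true mean $\nu=\mu$, observe that causality of $b_t$ and $\EE[Y_t\mid \cF_{t-1}]=\mu$ make $(\wealth_t(Y_{1:t};\mu))_t$ a nonnegative (super-)martingale with unit initial wealth, apply Ville's inequality to get the time-uniform good event, and read off $\mu\in C_{\mathsf{bet}}^{(\dt)}(Y_{1:t})$ hence $\hat{\mean}_{\mathsf{bet}}^{(\dt)}(Y_{1:t})\le\mu$ for all $t$ simultaneously. Your explicit verification of nonnegativity and your remark on the $\mu\ge 1$ edge case (given the set is defined over $\nu\in(0,1)$) are careful touches the paper's own one-line martingale argument leaves implicit.
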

The proof of the result above, deferred to Appendix, is based on a standard martingale argument.
We note that $(\hat{\mean}_{\mathsf{bet}}^{(\dt)}(Y_{1:t}))_{t=1}^\infty$ satisfies a strong, \emph{time-uniform} guarantee, but in its application to OP problems below, we will only invoke the LCB only for the last time step.
We also note in passing that obtaining an \textit{upper} confidence bound for $[0,\infty)$-valued random variables is nontrivial and is beyond the scope of our work.
While any choice of betting strategy yields a valid, time-uniform LCB, we need a \emph{good} betting strategy to obtain a \textit{tight} (i.e., sample-efficient) LCB.
Specifically, since the LCB is a random variable as well, one may wish to establish its sample efficiency of the LCB by bounding the gap between the LCB and the true mean.
To make such a bound meaningful, it is important that the bound consists of \textit{deterministic} quantities (e.g., variance) because random quantities (e.g., the empirical variance) may behave poorly---even with a large number of samples---rendering the bound unreliable.
This consideration is particularly important for $[0,\infty)$-valued random variables, for which the empirical variance may fail to converge.\footnote{
  For $[0,1]$-valued random variables, one can easily show that the empirical variance will converge to the true variance.
}
To the best of our knowledge, however, existing LCBs for this setting either do not provide any sample efficiency guarantees~\citep{Waudby-Smith--Wu--Ramdas--Karampatziakis--Mineiro2022}, or provide regret guarantees that scale worse than $\sqrt{\Var(\ips^\pi_1)}$ (or a comparable quantity)~\citep{gabbianelli24importance,sakhi24logarithmic}, which we consider to be a desirable dependence. 
Alternatively, some methods rely on additional assumptions, such as an upper bound on the variance or convergence of the sample variance~\citep{wang2023catoni}.
This observation motivates the novel LCB we introduce below.

\subsubsection{LCB Induced by Universal Portfolio}
We propose to use \citet{Cover1991}'s universal portfolio as the betting strategy, in the same spirit as \citet{orabona24tight}, who first studied its application to \emph{bounded} processes.
A constant betting strategy $\bb_t=(b,1-b)$, also referred to as a \emph{constantly rebalanced portfolio} (CRP), yields
\begin{align}
\label{eq:wealth_crp}
\wealthcrp{b}_n(Y_{1:n};\nu)
&\defeq \prod_{t=1}^n \del[2]{1-b+b\frac{Y_t}{\nu}}
\end{align}
as the cumulative wealth, for some $b\in[0,1]$.
\citet{Cover1991} proposed a strategy called the \emph{$w$-weighted universal portfolio}, or \emph{universal portfolio} (UP) in short,
to track the wealth achieved by the best CRP in hindsight asymptotically up to the first-order exponent.
Cover's UP is defined as the \emph{mixture} of CRP wealths with respect to a mixture distribution $w(b)$ over $b\in[0,1]$, that is,
\begin{align}
\label{eq:wealth_up}
\wealthup_n(Y_{1:n};\nu)
&\defeq \int_0^1 \wealthcrp{b}_n(Y_{1:n};\nu) w(b) db.
\end{align}
Intuitively, Cover's UP can be understood as a buy-and-hold strategy of the set of constant betting strategies, where a unit wealth is distributed according to the weight $w(b)$.
\citet{Cover--Ordentlich1996} showed that, with the particular choice of weight distribution $w(b)=\frac{1}{\sqrt{\pi b(1-b)}}$, which is the density of the $\mathsf{Beta}(\frac12,\frac12)$ distribution and the default in our work,
the UP's wealth is minimax optimal with respect to the class of CRPs.
For our specific stock market, the guarantee of \citet[Theorem~2]{Cover--Ordentlich1996} simplifies as follows: for any sequence $y_{1:n}\in\mathbb{R}_+^n$,
\begin{align}
{\wealthup_n(y_{1:n};\nu)}
\ge 
\wealthpcrp_n(y_{1:n};\nu) 
\defeq \frac{1}{\sqrt{\pi(n+1)}}\sup_{b\in(0,1)}\wealthcrp{b}(y_{1:n};\nu).
\label{eq:up_vs_pcrp}
\end{align}
In words, this shows that the Cover's UP can achieve the performance of the best CRP up to a polynomial factor $\sqrt{\pi(n+1)}$.
We refer to the right-hand side as the \emph{penalized best CRP wealth}.

In Figure~\ref{fig:ex_betting_lcb}, we visualize the logarithmic wealth functions $\nu\mapsto\ln\wealthup(Y_{1:n};\nu)$ of different CRPs and that of Cover's UP, at different time steps.
We first note that the wealth function of each CRP is log-convex and monotonically decreasing, and thus so is that of Cover's UP. 
This ensures that there exists a unique root for the equation $\wealthup(Y_{1:n};\nu)=\dt^{-1}$, and thus
\begin{align}\label{eq:uplcb}
    \lcbup{n}{\dt}(Y_{1:n})    \defeq \min\cbr[2]{\nu>0: \wealthup_n(Y_{1:n};\nu) \le \fr1\dt}
\end{align}
is well-defined and a valid $(1-\dt)$-LCB.
We refer to the resulting bound as the \emph{UP-LCB}.
We also remark that the curve of Cover's UP, $\nu\mapsto\wealthup_n(Y_{1:n};\nu)$, closely approximates the frontier $\nu\mapsto\sup_{b\in[0,1]}\wealthcrp{b}_n(Y_{1:n};\nu)$.
This follows from the fact that Cover's UP asymptotically tracks the wealth of the best CRP for any stock market, as implied by Eq.~\eqref{eq:up_vs_pcrp}.

One can also use the penalized best CRP wealth in Eq.~\eqref{eq:up_vs_pcrp} to construct an LCB, which is slightly looser than UP-LCB, yet simpler to compute; see below for computational details.
Specifically, 
\begin{align}
    \lcbpcrp{n}{\dt}(Y_{1:n}) \defeq \min\cbr[2]{\nu>0: \wealthpcrp(Y_{1:n};\nu) \le \fr{1}{\dt}}
\label{eq:pcrplcb}
\end{align}
is a valid $(1-\dt)$-LCB,
which we call the \emph{penalized-best-CRP-LCB} or \emph{\pcrplcb{}} in short.

{
\subsubsection{Finite-Sample Guarantees}
\label{sec:finite_sample}
Our main technical contribution in this section is the following statement, which establishes the rate of convergence of UP-LCB and \pcrplcb{} to the true mean, automatically adapting to the underlying variance. 
For $n$ sufficiently large, we further show that the convergence is proportional to a \emph{smoothed variance}, defined as follows. This guarantee will be handy later for comparing the bound with \citep{sakhi24logarithmic}.
For a nonnegative random variable $Y$, we define a \emph{$b$-smoothed variance}
\begin{align*}
\WW_b[Y]\defeq \EE\biggl[\fr{(Y-\EE[Y])^2}{1+b\fr{Y-\EE[Y]}{\EE[Y]}}\biggr]
= \EE[Y]
\EE\biggl[\fr{(Y-\EE[Y])^2}{bY+(1-b)\EE[Y]}\biggr]
\end{align*}
for $b\in[0,1]$.
We note that $\WW_b[Y]$ interpolates the two extreme quantities $\WW_0[Y]=\VV[Y]$, the variance, and $\WW_1[Y] = \EE[Y]\EE[\fr{(Y-\EE[Y])^2}{Y}]$.
Under a mild assumption, i.e., $\EE[Y]<\infty$ and $\EE[\fr{1}{Y}]<\infty$,
$b\mapsto \WW_b[Y]$ is strictly convex, unless $Y$ is constant with probability 1.
Under such condition, 
\begin{align*}
\WW_b[Y] < \WW_0[Y]\wed \WW_1[Y] = \VV[Y] \wed \EE[Y]\EE\biggl[\fr{(Y-\EE[Y])^2}{Y}\biggr].
\end{align*}

In what follows, we assume that $(Y_t)_{t=1}^\infty$ is an independent and identically distributed (i.i.d.), nonnegative random process, with 
\[
\text{$\mu\defeq\EE[Y_1]$ 
\quad and \quad 
$\sigma^2\defeq \VV[Y_1].$}
\]
\begin{theorem}[Convergence rate of UP-LCB and \pcrplcb{}]
\label{thm:up_lcb_rate}
Let $n\ge 1$ and
define
$\blue{F_n^{(\dt)}} \defeq \ln\fr{\sqrt{\pi(n+1)}}{\dt^2}$.
Then, with probability $\ge 1-2\dt$,
\[
0
\le \mu - \lcbup{n}{\dt}(Y_{1:n})
\le \mu - \lcbpcrp{n}{\dt}(Y_{1:n})
\le \sqrt{\fr{48\sigma^2}{n} F_n^{(\dt)}} \vee \fr{12\mu}{n}F_n^{(\dt)}.
\]
Moreover, if 
$n\ge 108\bigl(1\vee 36\fr{\mu^2}{\sigma^2}\bigr) F_n^{(\dt)}$, for $b_n^{(\dt)}\defeq \sqrt{\fr{\mu^2}{2\sigma^2}\fr{F_n^{(\dt)}}{n}}$,
we further have
\begin{align}
\mu - \lcbpcrp{n}{\dt}(Y_{1:n})
&\le 
\inf_{b\in(0,\fr34]}
\biggl\{
\frac{b}{\mu}
\WW_b[Y_1]
+\fr{\mu}{b}
\fr{F_n^{(\dt)}}{n}
\biggr\}
\le 2\sqrt{
\frac{\WW_{b_n^{(\dt)}}[Y_1]}{n}
F_n^{(\dt)}}.
\label{eq:lcb_rate_refined}
\end{align}
\end{theorem}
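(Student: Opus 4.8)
The plan is to reduce the entire chain of inequalities to a single upper bound on $\mu - \lcbpcrp{n}{\dt}(Y_{1:n})$, obtaining the other two relations almost for free. The leftmost inequality $0\le \mu-\lcbup{n}{\dt}(Y_{1:n})$ is just validity of the LCB, i.e. Proposition~\ref{prop:lcb} applied to Cover's UP, holding with probability $\ge 1-\dt$. The middle inequality $\mu-\lcbup{n}{\dt}(Y_{1:n})\le \mu-\lcbpcrp{n}{\dt}(Y_{1:n})$ is deterministic: since $\wealthup_n(Y_{1:n};\nu)\ge \wealthpcrp_n(Y_{1:n};\nu)$ for every $\nu$ by \eqref{eq:up_vs_pcrp} and both wealth curves are decreasing in $\nu$, their first down-crossings of the level $1/\dt$ satisfy $\lcbup{n}{\dt}(Y_{1:n})\ge \lcbpcrp{n}{\dt}(Y_{1:n})$. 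Hence it suffices to upper bound $\mu-\lcbpcrp{n}{\dt}(Y_{1:n})$ on an event of probability $\ge 1-\dt$, and a union bound with the validity event yields the stated $1-2\dt$.

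To control $\mu-\lcbpcrp{n}{\dt}(Y_{1:n})$, I fix a target $\eps>0$, set $\nu\defeq \mu-\eps$, and certify that the penalized wealth clears the threshold there, i.e. $\wealthpcrp_n(Y_{1:n};\nu)>1/\dt$; by monotonicity this forces $\lcbpcrp{n}{\dt}(Y_{1:n})>\nu$, that is, $\mu-\lcbpcrp{n}{\dt}(Y_{1:n})<\eps$. Because $\wealthpcrp_n=\tfrac{1}{\sqrt{\pi(n+1)}}\sup_b \wealthcrp{b}_n$, it is enough to exhibit one constant bet $b\in(0,1)$ with $\ln\wealthcrp{b}_n(Y_{1:n};\nu)\ge \tfrac12\ln(\pi(n+1))+\ln\tfrac1\dt$. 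Writing $V_t\defeq 1-b+b\tfrac{Y_t}{\nu}>0$, I lower bound $\sum_t\ln V_t$ through a Chernoff/Ville argument on the nonnegative unit-mean martingale $\prod_{t\le n}V_t^{-1}/\EE[V_t^{-1}]$: Markov's inequality at time $n$ gives, with probability $\ge 1-\dt$, that $\ln\wealthcrp{b}_n(Y_{1:n};\nu)\ge -\ln\tfrac1\dt+nR(b,\nu)$ with per-step growth rate $R(b,\nu)\defeq -\ln\EE[V_1^{-1}]$. The goal then reduces to the deterministic condition $n\,R(b,\mu-\eps)\ge F_n^{(\dt)}$, where the $\dt^2$ inside $F_n^{(\dt)}$ exactly pays for one $\ln\tfrac1\dt$ from the martingale and one from the threshold.

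The analytic heart is lower bounding $R$ and inverting it for $\eps$. Using the exact identity $(1+u)^{-1}=1-u+\tfrac{u^2}{1+u}$ with $u=bZ$ and $Z\defeq \tfrac{Y}{\nu}-1$ (so $\EE[Z]=\eps/\nu$), I obtain $\EE[V_1^{-1}]=1-b\tfrac{\eps}{\nu}+b^2\EE[\tfrac{Z^2}{1+bZ}]$, and then $-\ln(1-x)\ge x$ gives $R(b,\nu)\ge b\tfrac{\eps}{\nu}-b^2\EE[\tfrac{Z^2}{1+bZ}]$. The key observation is that the correction is a $\nu$-centered smoothed variance, $\nu\,\EE[\tfrac{Z^2}{1+bZ}]=\EE[\tfrac{(Y-\nu)^2}{(1-b)\nu+bY}]$, the exact analogue of $\tfrac{1}{\mu}\WW_b[Y]$ with $\mu$ replaced by $\nu$. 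Substituting into $n\,R\ge F_n^{(\dt)}$ and solving for $\eps$ yields the sufficient condition $\eps\ge \tfrac{\nu}{b}\tfrac{F_n^{(\dt)}}{n}+b\,\EE[\tfrac{(Y-\nu)^2}{(1-b)\nu+bY}]$, which is the common prototype of both claimed bounds.

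Finally I split into two regimes. For the coarse bound valid at all $n$, I use $Z\ge -1$ to get $\tfrac{Z^2}{1+bZ}\le \tfrac{Z^2}{1-b}$, hence $\EE[\tfrac{Z^2}{1+bZ}]\le \tfrac{\sig^2+\eps^2}{(1-b)\nu^2}$; fixing $b$ to a numerical constant and using $\nu\le\mu$ turns the sufficient condition into an explicit inequality in $\eps$ whose solution reproduces the coarse two-regime bound in the first display of the theorem. For the refined bound I bootstrap: the coarse bound already guarantees $\eps\le\eps_0$ with $\eps_0$ small once $n\ge 108(1\wed 36\tfrac{\mu^2}{\sig^2})F_n^{(\dt)}$, so $\nu=\mu-\eps$ is within a controlled factor of $\mu$; this lets me replace the $\nu$-centered smoothed variance by $\tfrac1\mu\WW_b[Y_1]$ (absorbing the mismatch into the constants) and $\tfrac\nu b$ by $\tfrac\mu b$, giving $\eps\le \tfrac{b}{\mu}\WW_b[Y_1]+\tfrac{\mu}{b}\tfrac{F_n^{(\dt)}}{n}$ for every admissible $b$. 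Taking the infimum over $b\in(0,\tfrac34]$ and applying AM--GM at the balancing choice $b_n^{(\dt)}$ (which the sample-size hypothesis places in $(0,\tfrac34]$) then yields the final $2\sqrt{\WW_{b_n^{(\dt)}}[Y_1]F_n^{(\dt)}/n}$ of \eqref{eq:lcb_rate_refined}. I expect the main obstacle to be precisely this bootstrapping step: taming the self-referential dependence $\nu=\mu-\eps$ and bounding the gap between the $\nu$-centered and $\mu$-centered smoothed variances with explicit constants, while keeping $b_n^{(\dt)}$ inside the admissible range---this is exactly what dictates the stated threshold on $n$.
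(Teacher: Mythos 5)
Your skeleton matches the paper's proof nearly lemma-for-lemma: validity via Ville plus the deterministic domination $\wealthup_n\ge\wealthpcrp_n$ from \eqref{eq:up_vs_pcrp}; the inverse-wealth martingale $\prod_t \frac{1/(1+Z_t)}{\EE[1/(1+Z_t)]}$ with Markov and $\ln t\le t-1$ is exactly Lemma~\ref{lem:240903_concentration_basic}; your bound $\EE[\tfrac{Z^2}{1+bZ}]\le\tfrac{\sigma^2+\eps^2}{(1-b)\nu^2}$ is Lemma~\ref{lem:240903_var}; and certifying $\wealthpcrp_n(Y_{1:n};\mu-\eps)>1/\dt$ at a deterministic test point is precisely the mechanism of Lemma~\ref{lem:ucb} with $\nu_o=\mu-G_n^{(\dt)}$. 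Your bootstrap for the refined bound (coarse bound $\Rightarrow$ $\nu$ within a constant factor of $\mu$ $\Rightarrow$ swap the $\nu$-centered smoothed variance for $\tfrac1\mu\WW_b[Y_1]$) is also the paper's step, and your choice to keep the deterministic anchor $\nu=\mu-\eps$ in the refined part is arguably cleaner than the paper's rearrangement of $\wealthup_n(Y_{1:n};\hv)=1/\dt$ at the random root $\hv$, where the fixed-$\nu$ concentration lemma is applied at a data-dependent point. However, two steps fail as you state them. First, ``fixing $b$ to a numerical constant'' cannot reproduce the $\sqrt{48\sigma^2F_n^{(\dt)}/n}$ regime: with $b$ a fixed constant, your sufficient condition $b\tfrac{\eps}{\nu}-\tfrac{b^2}{1-b}\tfrac{\sigma^2+\eps^2}{\nu^2}\ge \tfrac{F_n^{(\dt)}}{n}$ already forces $\eps\gtrsim \tfrac{b}{1-b}\cdot\tfrac{\sigma^2}{\mu}$ just for the left side to be positive---an $n$-independent error. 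The bet must shrink with the target gap; the paper takes $b^*=\tfrac{A}{2(A+2B)}$ with $A=\tfrac{\mu-\nu_o}{\nu_o}$ and $B=\tfrac{\sigma^2+(\mu-\nu_o)^2}{\nu_o^2}$, so the growth rate becomes $\tfrac{A^2}{2(A+2B)}$, and then closes with a two-case contradiction argument.

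Second, the last inequality of \eqref{eq:lcb_rate_refined} does not follow from ``AM--GM at the balancing choice $b_n^{(\dt)}$'': AM--GM bounds $f(b)+g(b)$ \emph{below} by $2\sqrt{f(b)g(b)}$, and evaluating the infimum at $b_n^{(\dt)}$ only yields, via $\WW_{b}[Y_1]\le \tfrac{\sigma^2}{1-b}$, something of order $2\sqrt{2\sigma^2F_n^{(\dt)}/n}$. That is strictly weaker than the adaptive $2\sqrt{\WW_{b_n^{(\dt)}}[Y_1]F_n^{(\dt)}/n}$ whenever $\WW_{b_n^{(\dt)}}[Y_1]\ll\sigma^2$, which is exactly the heavy-tailed regime the theorem is designed for (the smoothing damps the tail contribution to the second moment). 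The paper's route has real content here that your sketch elides: it locates an \emph{equalization} point $b_o$ with $f(b_o)=g(b_o)$, where the sum genuinely equals twice the geometric mean $2\sqrt{F_n^{(\dt)}\WW_{b_o}[Y_1]/n}$; it shows $b_o\le\tfrac14$ by verifying $f(\tfrac14)\ge g(\tfrac14)$ under the sample-size condition; it lower-bounds $b_o\ge b_n^{(\dt)}$ because the linear majorant $\eta(b)=\tfrac{2\sigma^2}{\mu}b\ge f(b)$ crosses $g$ at $b_n^{(\dt)}$; and it transfers $\WW_{b_o}$ back to (essentially) $\WW_{b_n^{(\dt)}}$ via the pointwise denominator comparison of Lemma~\ref{lem:basic}. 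Both gaps are locally repairable and leave your architecture intact, but without the instance-dependent bet in the coarse part and the crossing-point argument in the refined part, the stated constants and the $\WW_{b_n^{(\dt)}}$-dependence are not actually derived.
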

This shows that both UP-LCB and \pcrplcb{} converge to the true mean from below at the rate of $O(\sqrt{\frac{\sigma^2}{n}\ln\frac{n^{1/4}}{\dt}}\vee \frac{\mu}{n}\ln\frac{n^{1/4}}{\dt})$.
It is analogous to Bernstein's inequality, but importantly, it holds uniformly over time and applies to any $[0,\infty)$-valued random variables.
To our knowledge, this is the first LCB with a finite-sample guarantee for $[0,\infty)$-valued random variables, which is much stronger than merely statistically valid bounds. 
}

\begin{figure}[t]
    \centering
    \vspace{-.25em}    \includegraphics[width=.8\linewidth]{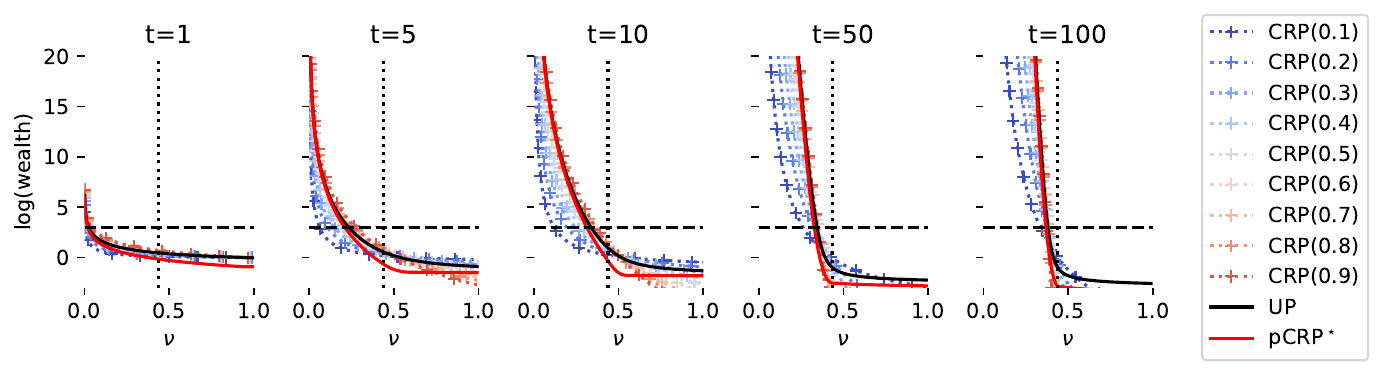}
    \vspace{-1em}
    \caption{Example of the evolution of cumulative wealths achieved by different CRPs in Eq.~\eqref{eq:wealth_crp}, Cover's UP in Eq.~\eqref{eq:wealth_up}, and the penalized best CRP wealth in Eq.~\eqref{eq:up_vs_pcrp}.
    The underlying process is a sequence of independent and identically distributed (i.i.d.) Gamma random variables with shape and scale parameters of 6 and $1/8$, respectively, and thus mean $3/4$.
    }\label{fig:ex_betting_lcb}\vspace{-1em}
\end{figure}

\paragraph{Empirical-Bernstein-type relaxation.}
We can also derive a loose outer bound of the UP-LCB and \pcrplcb{}, which is simply a function of the empirical mean and variance in a similar spirit to \citep[Theorem~6]{orabona24tight}; see Theorem~\ref{thm:emp_bern_relaxation} for a formal statement.
This relaxation can be understood as a statistically valid empirical Bernstein inequality for $[0,\infty)$-valued random variables.
While this bound is statistically valid, it does not characterize the rate of convergence or even the asymptotic consistency of the LCB as an estimator of the mean.

\vspace{.25em}\noindent\textbf{Implementation.~}
We can compute the UP-LCB up to numerical precision using a dynamic programming approach combined with binary search over $\nu$, similar to the method described in \citep{Ryu--Bhatt2024}.
Its time complexity for processing a length-$n$ trajectory is, however, $O(n^2)$.
In practice, one can implement \pcrplcb{} (Eq.~\eqref{eq:pcrplcb}), or techniques in \citet{orabona24tight} or \citet{Ryu--Bhatt2024} to compute reasonably accurate proxy in linear complexity $O(n)$. 
In the experiment below, we used the pCRP$^\star$-LCB.
We defer the detailed discussion about implementation to Appendix~\ref{app:sec:implementation_lcbs}, including its fast, approximate version with $O(n)$ complexity.

\paragraph{On the gambling technique.}
At first glance, using the gambling technique may seem excessive when strong time-uniformity is not needed, especially given their looser bounds. However, in this paper, Cover’s UP plays a central role due to its principled adaptation to the process statistics \emph{without prior knowledge}, enabled by mixture wealth. Here, time-uniformity is a \emph{byproduct} of the gambling framework, not the main goal. The $\log n$ regret term reflects the cost of this adaptation. If additional information---e.g., bounds on variance or sub-Gaussianity---were known, UP might be unnecessary and the $\log n$ term potentially avoidable. Whether this dependence can be reduced remains open.

\subsection{Off-Policy Selection with Pessimism via semi-Unbounded-coin-Betting (PUB)}
We propose a selection strategy, termed \emph{\textbf{P}essimism by semi-\textbf{U}nbounded-coin-\textbf{B}etting} (PUB), as follows: apply the UP-LCB $\lcbup{n}{\dt}(\cdot)$ from Eq.~\eqref{eq:uplcb} or the \pcrplcb{} $\lcbpcrp{n}{\dt}(\cdot)$ from Eq.~\eqref{eq:pcrplcb} to the underlying processes $\{\ips_{1:n}^\pi\}_{\pi \in \Pi}$, and select the policy with the highest lower confidence bound:
\begin{align}
\selectup\defeq \arg\max_{\pi\in\Pi} \lcbup{n}{\dt}(\ips_{1:n}^\pi)
\qquad\text{and}
\qquad
\hpi_{\mathsf{pCRP}^\star}^{(\dt)} \defeq \arg\max_{\pi\in\Pi} \lcbpcrp{n}{\dt}(\ips_{1:n}^\pi).
\label{eq:up_select}
\end{align}
Hereafter, we will omit the superscript $^{(\dt)}$.
The following guarantee is immediate from Theorem~\ref{thm:up_lcb_rate}.
\begin{theorem}[Selection]
\label{thm:main_selection} 
Let $\dt'=|\Pi|/\dt$.
With probability $\ge 1-2\dt$, 
for any $\pi^*\in \Pi$ and $\hpi\in \{\selectupsimple,\selectpcrpsimple\}$, we have\vspace{-.25em}
\begin{align*}
0\le \mean(\pi^*) - \mean(\hpi) 
&\le \sqrt{\fr{48\tilde{\var}(\pi^*)}{n} F_n^{(\dt')}
} \vee \fr{12\mean(\pi^*)}{n} F_n^{(\dt')}.
\end{align*}
Moreover, if 
$n\ge 108\bigl(1\vee 36\fr{\mean(\pi^*)^2}{\til{\mathsf{V}}(\pi^*)}\bigr) F_n^{(\dt')}$,
for $b_n^{(\dt)}\defeq \sqrt{\fr{\mu^2}{2\sigma^2}\fr{F_n^{(\dt)}}{n}}$, we further have
\begin{align}
\mean(\pi^*) - \mean(\hpi) 
&\le 
\inf_{b\in(0,\fr34]}
\biggl\{
\frac{b}{\mean(\pi^*)}
\WW_b[\ips_1^{\pi^*}]
+\fr{\mean(\pi^*)}{b}
\fr{F_n^{(\dt')}}{n}
\biggr\}
\le 2\sqrt{
\frac{F_n^{(\dt')}}{n}
\WW_{b_n^{(\dt')}}[\ips_1^{\pi^*}]}.
\label{eq:selection_refined}
\end{align}
\end{theorem}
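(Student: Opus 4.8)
The plan is to obtain Theorem~\ref{thm:main_selection} as a corollary of the single-process convergence rate in Theorem~\ref{thm:up_lcb_rate}, applied policy-by-policy and glued together by a union bound over the finite class $\Pi$. Throughout, write $\hmu(\pi) \defeq \lcbup{n}{\dtp}(\ips_{1:n}^\pi)$ for the UP-LCB (the argument for $\lcbpcrp{n}{\dtp}(\ips_{1:n}^\pi)$ is identical). For each fixed $\pi$, the process $\ips_{1:n}^\pi$ is i.i.d.\ and nonnegative with $\EE[\ips_1^\pi] = \mean(\pi)$ and $\VV[\ips_1^\pi] = \tilde{\var}(\pi)$, so Theorem~\ref{thm:up_lcb_rate} applies verbatim: with probability at least $1 - 2\dtp$ it yields both the one-sided validity $\hmu(\pi) \le \mean(\pi)$ (the ``$0 \le \mu - \mathrm{LCB}$'' part) and the quantitative gap bound $\mean(\pi) - \hmu(\pi) \le \sqrt{48\,\tilde{\var}(\pi)F_n^{(\dtp)}/n} \vee (12\mean(\pi)/n)F_n^{(\dtp)}$, together with the refined smoothed-variance bound under the stated sample-size condition.

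The one genuinely load-bearing step is the union bound. I would choose the per-policy level $\dtp$ with $|\Pi|\cdot 2\dtp = 2\dt$ (i.e.\ $\dtp = \dt/|\Pi|$), so that intersecting the $|\Pi|$ events above incurs total failure probability at most $2\dt$; on the complementary event of probability $\ge 1 - 2\dt$, the validity $\hmu(\pi) \le \mean(\pi)$ holds for \emph{every} $\pi \in \Pi$ simultaneously, and the gap bound holds in particular at $\pi^*$. Uniform validity is precisely what is needed to control the \emph{data-dependent} selected policy $\hpi$, which is the only reason the union bound over the whole class (rather than just at $\pi^*$) is required.

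On this good event I would split the regret into three pieces by inserting $\pm\hmu(\pi^*)$ and $\pm\hmu(\hpi)$,
\[
\mean(\pi^*) - \mean(\hpi)
= \bigl[\mean(\pi^*) - \hmu(\pi^*)\bigr]
+ \bigl[\hmu(\pi^*) - \hmu(\hpi)\bigr]
+ \bigl[\hmu(\hpi) - \mean(\hpi)\bigr],
\]
and bound each. The middle bracket is $\le 0$ because $\hpi$ maximizes $\hmu(\cdot)$ over $\Pi$ by its definition in Eq.~\eqref{eq:up_select}; the last bracket is $\le 0$ by the uniform validity applied at the random policy $\hpi$; and the first bracket is at most the gap bound of Theorem~\ref{thm:up_lcb_rate} evaluated at $\pi^*$, i.e.\ with $\mu = \mean(\pi^*)$ and $\sigma^2 = \tilde{\var}(\pi^*)$. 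Summing gives the first displayed inequality; using instead the refined branch of Theorem~\ref{thm:up_lcb_rate} for the first bracket (valid once $n \ge 108\,(1 \vee 36\,\mean(\pi^*)^2/\tilde{\var}(\pi^*))\,F_n^{(\dtp)}$) gives the refined bound~\eqref{eq:selection_refined}. The lower bound $0 \le \mean(\pi^*) - \mean(\hpi)$ is immediate from $\pi^* = \argmax_{\pi \in \Pi}\mean(\pi)$. Since Theorem~\ref{thm:up_lcb_rate} certifies the same rate for both $\lcbup{n}{\dt}$ and $\lcbpcrp{n}{\dt}$ --- indeed $\mu - \lcbup{n}{\dt} \le \mu - \lcbpcrp{n}{\dt}$ --- the identical decomposition covers both $\selectupsimple$ and $\selectpcrpsimple$.

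Because all of the analytic work is already carried by Theorem~\ref{thm:up_lcb_rate}, there is no real obstacle; the only point demanding care is the probability bookkeeping, where the factor of two in Theorem~\ref{thm:up_lcb_rate} (validity plus tightness) must be threaded through the $|\Pi|$-fold union bound, so that the $|\Pi|$ dependence enters only logarithmically, through $F_n^{(\dtp)} = \ln(\sqrt{\pi(n+1)}/\dtp^2) = F_n^{(\dt)} + 2\ln|\Pi|$.
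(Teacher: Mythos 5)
Your proposal is correct and follows essentially the same route as the paper's proof: both apply Theorem~\ref{thm:up_lcb_rate} policy-by-policy at level $\dt'=\dt/|\Pi|$ (your bookkeeping correctly uses $\dt/|\Pi|$, fixing the typo ``$\dt'=|\Pi|/\dt$'' in the theorem statement) and union bound, and your three-term decomposition is just the expanded form of the paper's chain $\mean(\pi^*)-\mean(\hpi)\le \mean(\pi^*)-\lcbup{n}{}(\ips_{1:n}^{\hpi})\le \mean(\pi^*)-\lcbup{n}{}(\ips_{1:n}^{\pi^*})\le 2G_n^{(\dtp)}[\ips_1^{\pi^*}]$, built from the same three ingredients (uniform validity at the data-dependent $\hpi$, maximality of the selected LCB, and the rate bound at $\pi^*$). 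The refined bound~\eqref{eq:selection_refined} is obtained in both cases by substituting the second part of Theorem~\ref{thm:up_lcb_rate}, exactly as you state.
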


We compare the guarantee with that of the Logarithmic Smoothing (LS) estimator of \citet{sakhi24logarithmic}; see the definition in Appendix~\ref{app:sec:estimators}.
In Proposition~6 therein, it is proved that the estimator achieves the regret bound $\beta \EE\Bigl[\fr{(\ips_1^{\pi^*})^2}{1+\beta\ips_1^{\pi^*}}\Bigr] + \fr{2}{\beta n}\ln\fr{2|\Pi|}{\dt}$. First, we note that, our first term inside the infimum can be viewed as a \emph{centered} version of their first term. 
Also, similar to that the first term of their regret is always bounded by $\beta \mean(\pi^*)$, 
our first term is also bounded by $\frac{b}{\mean(\pi^*)}\WW_0[\ips_1^{\pi^*}]\wed \WW_1[\ips_1^{\pi^*}]$.
In the second part of the statement, we further show in Eq.~\eqref{eq:selection_refined} that, for $n$ sufficiently large, the regret scales as $\tilde{O}(\fr{1}{\sqrt{n}})$, where the leading factor is $\sqrt{\WW_{b_n^{(\dt')}}[\ips_1^{\pi^*}]}$, which can be rewritten as
\[
\sqrt{\WW_{b_n^{(\dt')}}[\ips_1^{\pi^*}]}
= \sqrt{\EE\biggl[
\fr{(\ips_1^{\pi^*}-\mean(\pi^*))^2}{1+O(\sqrt{1/n})(\ips_1^{\pi^*}-\mean(\pi^*))}
\biggr]}
\quad
\text{vs.}
\quad
\underbrace{1 + {\EE\biggl[\fr{(\ips_1^{\pi^*})^2}{1 + O(\sqrt{1/n}) \ips_1^{\pi^*}}\biggr]}}_{\text{\citet[Proposition~6]{sakhi24logarithmic}}}.
\]
As noted in Figure~\ref{fig:comparison}, we note that $\sqrt{\WW_{b_n^{(\dt')}}[\ips_1^{\pi^*}]}\approx \til{\mathsf{V}}(\pi^*)$ is strictly smaller than $1+\EE[(\ips_1^{\pi^*})^2]$ for $n$ sufficiently large.
Arguably the most appealing property of our selection method is that we achieve these bounds in a \emph{parameter-free} sense, unlike the existing methods that require tuning $\beta$.
This is implemented by the infimum in the bound unlike the LS estimator, which shows that our estimator automatically adapts to the ``optimal'' hyperparameter, as a consequence of applying the wealth of Cover's UP or penalized-best-CRP.
Note that the price of adaptivity is only a logarithmic factor.

\section{Off-Policy Learning}
\label{sec:learning}
As a natural extension of the betting-based method for OP learning, we can formulate an optimization problem via Lagrange multipliers as follows:
\begin{align*}
  \max_{\pi\in\Pi} \max_{\alpha \ge 0} \min_\nu \Biggl\{\nu + \alpha\biggl(\max_{b\in[0,1]} \sum_{t=1}^n \ln\Bigl(1 + b \fr{\ips_t^{\pi}-\nu}{\nu}\Bigr) - \ln\fr{\sqrt{\pi(n+1)}}{\dt/|\Pi|}\biggr)
  \Biggr\}.
\end{align*}
However, this optimization is not straightforward to implement in practice.
Thus motivated, in this section, we consider a broad class of \emph{pessimistic} objective functions that take the following simple form, involving a \textit{score function} $\phi\colon\mathbb{R}_+\to\mathbb{R}$, where $\beta\ge 0$ is a hyperparameter:
\begin{align}\label{eq:op-learning-alg}
    \hpi_n \defeq \arg \max_{\pi \in \Pi} \sum_{t=1}^n %
    \phi (\beta \ips_t^\pi).
\end{align}
This form of objective admits a practical optimization with stochastic-gradient-based algorithms.
To guarantee statistical efficiency, we further restrict our attention to the following assumption:
\begin{assumption}\label{ass:phi-new}
For some $c_1, c_2 \in (0,1]$,
  a score function $\phi:\RR_+\rarrow\RR$ satisfies
  \begin{align*}
      -\ln\biggl(1 - x + \fr{x^2}{c_1+c_2x}\biggr) \le \phi(x) \le \ln(1+x).
  \end{align*}
\end{assumption}
Below, we provide a few concrete examples from this family.
\begin{proposition}[Examples of score functions]
\label{prop:phi}
  The following satisfy Assumption~\ref{ass:phi-new}:
  \begin{itemize}
    \item Logarithmic smoothing~\citep{sakhi24logarithmic}: $\phi^{\text{LS}} (x)=\ln(1+x)$ with $c_1 = c_2 = 1$.
    \item Clipping: $\phi^{\text{clipping}} (x) = \ln(1 + (x \wed 1))$ with $c_1 = c_2 = \fr12$.
    \item Freezing: $\phi^{\text{freezing}}(x) = \ln(1 + x\cd \onec{x \le 1})$ with $c_1 = c_2 = \fr12$.
  \end{itemize}
\end{proposition}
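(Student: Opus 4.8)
The plan is to verify the two-sided bound of Assumption~\ref{ass:phi-new} directly for each of the three score functions, exploiting the fact that the lower-bound expression collapses to a clean closed form for the stated values of $(c_1,c_2)$. The first step I would carry out is this simplification. Writing $g(x)\defeq 1-x+\fr{x^2}{c_1+c_2x}$ and clearing the common denominator, for $c_1=c_2=1$ one gets $g(x)=\fr{(1-x)(1+x)+x^2}{1+x}=\fr{1}{1+x}$, so the lower bound $-\ln g(x)$ equals exactly $\ln(1+x)$. Since the upper bound is also $\ln(1+x)$, logarithmic smoothing $\phi^{\text{LS}}(x)=\ln(1+x)$ meets both bounds with equality, disposing of that case at once.

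For the remaining two functions I use $c_1=c_2=\fr12$, where the same algebra gives $g(x)=\fr{(1-x)(1+x)+2x^2}{1+x}=\fr{1+x^2}{1+x}$, which is strictly positive on $\RR_+$ (so the lower bound is well defined) and produces the lower bound $\ln\fr{1+x}{1+x^2}$. The task then reduces to checking $\ln\fr{1+x}{1+x^2}\le\phi(x)\le\ln(1+x)$ for clipping and freezing. On $[0,1]$ both functions coincide with $\ln(1+x)$, so there the upper bound holds with equality and the lower bound reduces to the trivial $1+x^2\ge 1$. Hence the only substantive region is $x>1$, where I would split the argument by function.

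On $x>1$, clipping is the constant $\ln 2$: the upper bound $\ln 2\le\ln(1+x)$ follows from $x>1$, and the lower bound $\ln\fr{1+x}{1+x^2}\le\ln 2$ rearranges to $2x^2-x+1\ge 0$. Freezing is the constant $0$ there: the upper bound $0\le\ln(1+x)$ is immediate, and the lower bound $\ln\fr{1+x}{1+x^2}\le 0$ rearranges to $x^2\ge x$. Since each step is a one-line polynomial comparison, there is no serious obstacle; the only point that needs a moment's care is the clipping lower bound, where $2x^2-x+1$ does not factor and one must instead observe that its discriminant $1-8<0$ forces nonnegativity for all $x$ (the freezing bound $x^2\ge x$ is clear from $x>1$). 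Finally, continuity of every expression at the boundary $x=1$ confirms that the two cases glue together, which completes the verification.
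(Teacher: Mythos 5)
Your proposal is correct and takes essentially the same route as the paper: the paper's proof rearranges the lower bound into $f(x)\defeq\fr{e^{-\phi(x)}-1+x}{x^2}\le\fr{2}{1+x}$ and verifies it with the same case split at $x=1$, which is algebraically identical to your closed-form check of $\ln\fr{1+x}{1+x^2}\le\phi(x)\le\ln(1+x)$ with $c_1=c_2=\fr12$ (and the LS case is equality in both treatments). One cosmetic remark: your closing appeal to continuity at $x=1$ is both unnecessary and slightly inaccurate---$\phi^{\text{freezing}}$ is discontinuous there---but since your two cases $[0,1]$ and $(1,\infty)$ already exhaust $\RR_+$, nothing in the argument depends on it.
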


\begin{wrapfigure}{r}{0.38\linewidth}
  \centering%
  \vspace{-.25em}%
  \includegraphics[width=.95\linewidth]{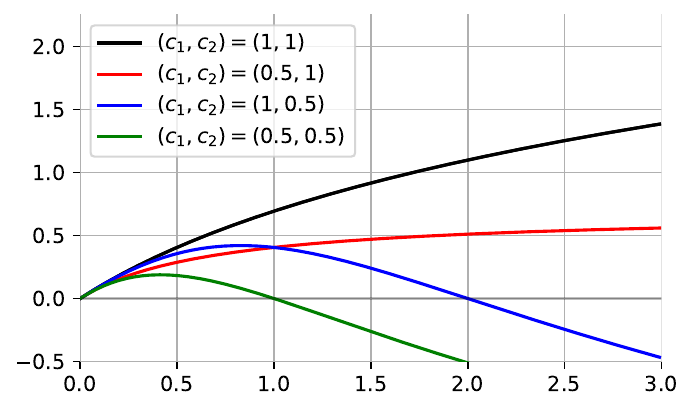} %
  \vspace{-.75em}
  \caption{Examples of the score function $\phi$ in Assumption~\ref{ass:phi-new}.
  }
  \label{fig:phi-second-order}
  \vspace{-2.5em}
\end{wrapfigure}
The clipping score function simply truncates the score function at $\ln 2$.
Note that the clipping is applied directly to $\ips^\pi$, rather than applied to $w_t^\pi$.
Penalizing large values of $\ips^\pi$, clipping may help reduce variance and improve sample efficiency by implementing a more aggressive pessimism.

The freezing score function implements an even more aggressive pessimism by zeroing out $\ips^\pi$ higher than $1$.
The potential benefit of freezing is to effectively remove samples $(x_t,a_t,r_t)$ for policies $\pi$ whose $\ips_t^{\pi}$ is too large.
This may have an even higher degree of variance reduction effect.

While \citet{sakhi24logarithmic} also proposed a family of choices for implementing pessimism, their condition only guarantees \textit{correctness} of the pessimism~\cite[Corollary 4]{sakhi24logarithmic}, but without \textit{sample efficiency}.
Their sample efficiency result is only proved for the logarithmic smoothing.
In contrast, we establish a sample-efficiency guarantee for a broad class of score functions.

\paragraph{Main Result.}
In the following, we show a smoothed second-order bound for a fairly large class of score functions satisfying Assumption~\ref{ass:phi-new}. Notably, our guarantee only depends on the optimal policy $\pi^\star$. The proof is deferred to Appendix~\ref{app:sec:proofs_learning}.
\begin{theorem}[Learning]
\label{thm:learning-new}
  Let $\hpi_n$ denote the estimator defined in Eq.~\eqref{eq:op-learning-alg} with a score function $\phi$ satisfying Assumption~\ref{ass:phi-new}, and let $\pi^\star$ be the optimal policy. 
  Then, with probability $\ge 1-\delta$, we have
  \begin{align}
  \label{eq:learning}
    \mean(\pi^\star) - \mean({\hpi_n}) 
    &\le \beta \EE\biggl[\fr{(\ips^{\pi^*})^2}{c_1 + c_2 \ips^{\pi^*}}\biggr] 
    + \fr{2}{\beta n} \ln\fr{|\Pi| }{\dt}
    - F_\beta(\phi)
    ,
  \end{align}
where we define the functional $F_\beta(\phi)\defeq \fr1{\beta} \ln(\EE[e^{\phi(\beta \ips_1(\hpi_n)) - \EE  [\beta \ips_1(\hpi_n)]}] )$ as the \emph{negative influence} induced by $\phi$, and it satisfies $F_\beta(\phi) \ge 0$.\looseness=-1
\end{theorem}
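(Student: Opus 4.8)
The plan is to route the regret through the empirical surrogate $\hat{L}(\pi)\defeq\frac{1}{\beta n}\sum_{t=1}^n\phi(\beta\ips_t^\pi)$, which $\hpi_n$ maximizes by construction. I would write
\[
\mean(\pi^\star)-\mean(\hpi_n)=\bigl(\mean(\pi^\star)-\hat{L}(\pi^\star)\bigr)+\bigl(\hat{L}(\pi^\star)-\hat{L}(\hpi_n)\bigr)+\bigl(\hat{L}(\hpi_n)-\mean(\hpi_n)\bigr),
\]
where the middle bracket is $\le 0$ by optimality of $\hpi_n$. It then remains to (i) lower-bound $\hat{L}(\pi^\star)$ by $\mean(\pi^\star)$ minus a second-order correction, and (ii) upper-bound $\hat{L}(\hpi_n)$ by $\mean(\hpi_n)$ uniformly over $\Pi$, so that the data-dependence of $\hpi_n$ is harmless.

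For step (i) I would invoke the left inequality of Assumption~\ref{ass:phi-new}, $\phi(x)\ge g(x)\defeq-\ln\bigl(1-x+\frac{x^2}{c_1+c_2x}\bigr)$, whose exponential moment is available in closed form because $e^{-g(x)}=1-x+\frac{x^2}{c_1+c_2x}$ (one checks this expression is strictly positive for all $x\ge 0$, so $g$ is well-defined, and its moment is finite since $\frac{x^2}{c_1+c_2x}\le x/c_2$). Applying Ville's inequality to the unit-mean process $\exp\bigl(-\sum_{t=1}^n g(\beta\ips_t^{\pi^\star})-n\ln\EE[e^{-g(\beta\ips_1^{\pi^\star})}]\bigr)$ gives, with probability $\ge 1-\dt'$,
\[
\frac{1}{n}\sum_{t=1}^n\phi(\beta\ips_t^{\pi^\star})\ge\frac{1}{n}\sum_{t=1}^n g(\beta\ips_t^{\pi^\star})\ge-\ln\EE\bigl[e^{-g(\beta\ips_1^{\pi^\star})}\bigr]-\frac{1}{n}\ln\frac{1}{\dt'}.
\]
Since $\EE[e^{-g(\beta\ips_1^{\pi^\star})}]=1-\beta\mean(\pi^\star)+\EE\bigl[\frac{(\beta\ips_1^{\pi^\star})^2}{c_1+c_2\beta\ips_1^{\pi^\star}}\bigr]$, the elementary bound $-\ln(1-u)\ge u$ converts the right-hand side into $\beta\mean(\pi^\star)-\EE\bigl[\frac{(\beta\ips_1^{\pi^\star})^2}{c_1+c_2\beta\ips_1^{\pi^\star}}\bigr]-\frac{1}{n}\ln\frac{1}{\dt'}$, which after dividing by $\beta$ is precisely the source of the second-order term of Eq.~\eqref{eq:learning}.

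For step (ii) I would use the right inequality $\phi(x)\le\ln(1+x)$. For each fixed $\pi$, $\exp\bigl(\sum_{t=1}^n\phi(\beta\ips_t^\pi)-n\ln\EE[e^{\phi(\beta\ips_1^\pi)}]\bigr)$ is unit-mean, so Chernoff together with a union bound over the $|\Pi|$ policies yields, simultaneously for all $\pi$ (hence for the random $\hpi_n$), $\hat{L}(\pi)\le\frac{1}{\beta}\ln\EE[e^{\phi(\beta\ips_1^\pi)}]+\frac{1}{\beta n}\ln\frac{|\Pi|}{\dt''}$. Evaluated at $\hpi_n$, the first term equals $\mean(\hpi_n)-F_\beta(\phi)$ by the definition of the negative influence, namely $\frac{1}{\beta}\ln\EE[e^{\phi(\beta\ips_1(\hpi_n))}]=\mean(\hpi_n)-F_\beta(\phi)$; moreover $\phi(x)\le\ln(1+x)$ and $(1+u)e^{-u}\le 1$ give $\EE[e^{\phi(\beta\ips_1(\hpi_n))}]\le 1+\beta\mean(\hpi_n)\le e^{\beta\mean(\hpi_n)}$, which is exactly the statement $F_\beta(\phi)\ge 0$. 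Chaining the lower bound of step (i), optimality of $\hpi_n$, and this upper bound, then dividing by $\beta$ and taking $\dt'=\dt''=\dt/2$ (so the two logarithmic terms coalesce into the stated $\frac{2}{\beta n}\ln\frac{|\Pi|}{\dt}$ up to constants inside the log), delivers Eq.~\eqref{eq:learning}.

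The main obstacle I anticipate is step (ii): since $\hpi_n$ is chosen using the same data, its exponential moment cannot be bounded by a single Chernoff argument, and the union bound over $\Pi$ is what forces the $\ln|\Pi|$ dependence while keeping the guarantee single-policy (only $\pi^\star$ enters the variance term). The conceptually delicate point is recognizing that the per-policy surplus $\mean(\pi)-\frac{1}{\beta}\ln\EE[e^{\phi(\beta\ips_1^\pi)}]$ is exactly the negative influence $F_\beta(\phi)$ at $\pi=\hpi_n$, and that Assumption~\ref{ass:phi-new} does double duty: its lower arm makes $\EE[e^{-g}]$ collapse to a clean second-order expression, while its upper arm keeps $\EE[e^{\phi}]\le 1+\beta\mean$, so that more aggressive choices such as freezing can only help, by driving $F_\beta(\phi)$ strictly positive.
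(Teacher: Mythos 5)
Your proposal is correct and follows essentially the same route as the paper's proof: the same decomposition through the empirical objective with optimality of $\hpi_n$ in the middle, the lower-deviation Chernoff/Ville bound for $\pi^\star$ from the lower arm of Assumption~\ref{ass:phi-new} together with $\ln x\le x-1$ (your $-\ln(1-u)\ge u$ applied to the exact expression $\EE[e^{-g(\beta\ips_1^{\pi^\star})}]$ is the identical inequality), the union-bounded upper-deviation bound over $\Pi$ from the upper arm, and nonnegativity of the residual via $(1+u)e^{-u}\le 1$. One remark: your sign convention $F_\beta(\phi)=\mean(\hpi_n)-\fr{1}{\beta}\ln\EE[e^{\phi(\beta\ips_1^{\hpi_n})}]$ is the negation of the statement's literal definition, and it is precisely the convention under which the theorem's ``$-F_\beta(\phi)$ with $F_\beta(\phi)\ge 0$'' is coherent---the paper's own proof concludes with $+F_\beta(\phi)$ under its literal definition (for which $F_\beta(\phi)\le 0$), so your write-up silently repairs an internal sign inconsistency in the paper without changing the content of the bound.
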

As a special case, our theorem recovers the guarantee of logarithmic smoothing~\cite[Proposition 6]{sakhi24logarithmic}.
The main term $\EE\Bigl[\fr{(\ips^{\pi^*})^2}{c_1 + c_2 \ips^{\pi^*}}\Bigr]$ is a smoothed second-moment term that specializes to that of~\citet{sakhi24logarithmic} when $c_1 = c_2 = 1$.
Thus, our bound inherits all the benefits such as being strictly better than IX~\citep{gabbianelli24importance} and being bounded from above with probability 1.

Additionally, different choices of \(\phi\) induce a nontrivial tradeoff in the resulting bound, e.g.,
\begin{align*}
  F_\beta(\phi^{\text{freezing}}) \le F_\beta(\phi^{\text{clipping}})\le F_\beta(\phi^{\text{LS}}).
\end{align*}
Thus, by using a score function with $c_1,c_2<1$ (e.g., freezing or clipping), we induce a larger negative influence, which may lead to improved performance in practice---especially in the small-sample regime---as we demonstrate in our experiments below.
We believe this tradeoff arises from a delicate balance between bias and variance.  
A more precise characterization is left for future work.

Finally, we note that 
$\beta$ is a hyperparameter that can be tuned using a holdout set, in conjunction with our proposed selection method from the previous section.

\section{Experiments}
\label{sec:exp}
We demonstrate the efficacy of our ideas, betting and freezing, under a synthetic, controlled experiment setup.
We first demonstrate the qualitative advantage of the UP-LCB against the existing baselines under heavy-tailed distributions. 
We then closely follow the setting of \citet{wang24oracle}, to which we refer for detailed descriptions.
We first present the learning experiment, followed by the selection experiment, in which we use the best policies from the learning phase across baselines.

\subsection{Synthetic Evaluation of UP-LCB under Heavy-Tail Distribution}
\label{sec:counter-example}

Here we empirically show that the betting-based LCB is not only statistically valid, but also converges to the target parameter in  a stable manner even for heavy-tailed data. 
We construct a synthetic contextual bandit setting to demonstrate such robustness as follows.
For a countably infinite context space $\cX =\{1,2,\ldots\}$ and a binary action space $\mathcal{A}=\{1,2\}$, consider:
(1) context distribution: $p(x=i) = \frac{6}{\pi^2} \frac{1}{i^2}$ for $i\in\mathbb{N}$;
(2) behavior policy: $\piref(a|x=i)=\mathrm{Bern}(a|\frac{1}{i^\beta})$; 
(3) reward distribution: $p(r=1|x=i,a=1)=1$,
$p(r|x=i,a=0)=\mathrm{Bern}(r|1-\frac{1}{i})$.
We can show that the fourth raw moment $\EE[(\ips_t^\pi)^4]$ does not exist; see Proposition~\ref{prop:counter_example} in Appendix for a formal statement.

We simulated the environment using $\beta=3$, where a higher $\beta$ leads to a heavier tail behavior of $\tilde{r}_t^\pi$, and thus a more erratic behavior for the existing baselines.
We generated the trajectory of interactions of length $n=10^4$ for $N=100$ random trials, and visualize in Figure~\ref{fig:unbdd_up_eb_main} the sample mean trajectory, the LCB based on the empirical Bernstein (EB) of \citet{maurer09empirical}, and a betting-based LCB proposed by \citet{Waudby-Smith--Wu--Ramdas--Karampatziakis--Mineiro2022} (see Appendix~\ref{app:sec:wswrkm} for its definition), Logarithmic Smoothing (LS) of \citet{sakhi24logarithmic}, and the UP-LCB, all averaged over the random trials. 
The shaded areas indicate empirical 10\% and 90\% quantiles.
Here, following \citep{wang24oracle}, for the EB-LCB, we used $\hat{\mean}_t(\pi)-\sqrt{2\hat{\var}_t(\pi)\ln\frac{2}{\dt}}$,
where $\hat{\mean}_t(\pi)$ and $\hat{\var}_t(\pi)$ are the empirical mean and variance of the importance weighted rewards $(\ips_i^\pi)_{i=1}^t$.

\begin{wrapfigure}{r}{0.6\linewidth}
    \vspace{-1.em}
    \includegraphics[width=\linewidth]{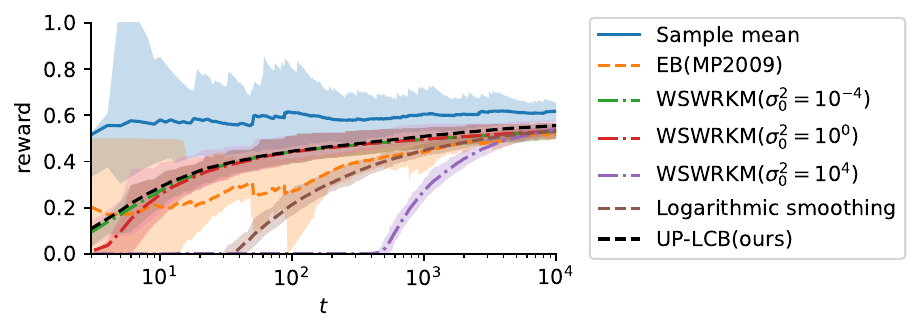}
    \vspace{-3em}
    \caption{Comparison of UP-LCB with baselines. %
    }    
    \label{fig:unbdd_up_eb_main}
  \vspace{-.5em}
\end{wrapfigure}
As shown in Figure~\ref{fig:unbdd_up_eb_main}, the UP-LCB provides a stable lower estimate of the target mean despite the heavy tail.
The estimates from the EB-LCB present erratic behaviors whenever we encounter a sample from the heavy tail.
The LS-LCB is also suboptimal as expected, being unable to adapt to the underlying variance.
The WSWRKM-LCB has a hyperparameter $\sigma_0^2$, which we set to $\{10^{-4},1,10^4\}$. 
WSWRKM-LCBs with $\sigma_0^2 \in \{10^{-4}, 1\}$ both show a fairly close or almost equal performance to UP-LCB on average. %
However, these two WSWRKM-LCBs exhibit a larger variability of $\sim 7\times10^{-2}$ measured by the difference between the 10\% and 90\% sample quantiles, compared to that of $\sim 4\times10^{-2}$ of WSWRKM-LCB.
On the other hand, for WSWRKM-LCB with $\sigma_0^2=10^4$, the variability was similar to UP-LCB.
This shows that the hyperparameter $\sigma_0^2$ trades off variability with the tightness of the confidence bound.
We also recall that the WSWRKM-LCB does not have a sample-efficiency guarantee.
We defer a more extensive discussion on the WSWRKM-LCB to Appendix~\ref{app:sec:wswrkm}.
In Appendix~\ref{app:sec:counter-example}, we include some realizations of the experiments to further demonstrate the actual behavior for erratic trajectories.\looseness=-1

\subsection{Off-Policy Learning}

\paragraph{Datasets.} 
The contextual bandit data were simulated using three multi-class classification datasets from OpenML~\citep{OpenML2013}.
Each dataset has $10^6$ data points.
Other statistics of the datasets are summarized in Table~\ref{tab:datasets} in Appendix.
For each learning method, we viewed each dataset as a multi-class regression problem, where each class corresponds to an action. We then treated a classifier, which maps a feature to a probability vector, as a deterministic policy that chooses the action of the maximum probability.
Among various configurations considered in \citep[Section~4]{wang24oracle}, we specifically considered the real-valued cost function and a single logging policy $\pi_{\text{good},\eps=0.1}$ therein.
This logging policy is defined as a random mixture of a deterministic policy (induced from a separately trained classifier) and a uniform-random policy, where $\eps=0.1$ defines the probability of using the uniform-random policy.
In Appendix~\ref{app:sec:add_exp}, we report additional results with two additional policies $\pi_{\text{good},\eps=0.01}$ and $\pi_{\text{bad},\eps=0.1}$ as done in \citep{wang24oracle}, where the latter combines a \emph{badly} trained deterministic classifier with the uniform-random policy.
We tested different fractions $\{0.01,0.1,1\}$ of datasets for training.

\paragraph{Baselines.}
We consider seven different methods in the learning experiment.
The default baseline is the minimizer of the IW estimators without any regularizer (denoted as \texttt{IW}).
We then consider a naive method (\texttt{Naive}, which naively maximizes $\sum_{t=1}^n \pi(a_t|x_t) r_t$ ignoring $\piref(a_t|x_t)$),
pseudo-loss (\texttt{PL})~\citep{wang24oracle}, clipped IW (\texttt{ClippedIW}), 
Implicit Exploration (\texttt{IX})~\citep{gabbianelli24importance}, Logarithmic Smoothing (\texttt{LS})~\citep{sakhi24logarithmic}, and lastly LS with freezing (\texttt{LS+freezing}), which we propose in Section~\ref{sec:learning}. We include explicit definitions of the estimators in Appendix~\ref{app:sec:estimators}.
For optimization (i.e., learning), we used the linear regression approach. 
After training, we computed
the relative improvement of each estimator against the \texttt{IW} baseline:
$\texttt{(relative improvement of $\pi$)}\defeq \frac{\hat{\mean}(\pi_{\texttt{IW}})-\hat{\mean}(\pi)}{\hat{\mean}(\pi_{\texttt{IW}})}$.
Here, we used the IW estimator to estimate the value $\hat{\mean}(\pi)$ of each policy $\pi$.
Similar to \citet{wang24oracle}, the hyperparameter $\beta$ in each estimator (see Appendix~\ref{app:sec:estimators}) were tuned based on our PUB method with $\dt=0.1$, using a 50/50 split of the data.
For each learning method, we swept the hyperparameter $\beta$ over eight values $\{0, 0.001, 0.003, 0.01, 0.03, 0.1, 0.3, 1\}$, and thus there are $48=6\times 8$ instances of methods in total.
For each dataset,
we ran each experiment with 50 different seeds. 

\paragraph{Results.}
The results are summarized in Figure~\ref{fig:learning}. 
As predicted by our analysis in Section~\ref{sec:learning}, \texttt{LS+freezing} (light blue) consistently outperforms \texttt{LS} (blue) in the small-sample regime and remains competitive more broadly. 
We attribute this to reduced variance from aggressive freezing, which effectively filters out outliers, an advantage that is especially pronounced when data is limited.

\begin{figure*}[hbt]
\centering
\includegraphics[width=0.8\linewidth]{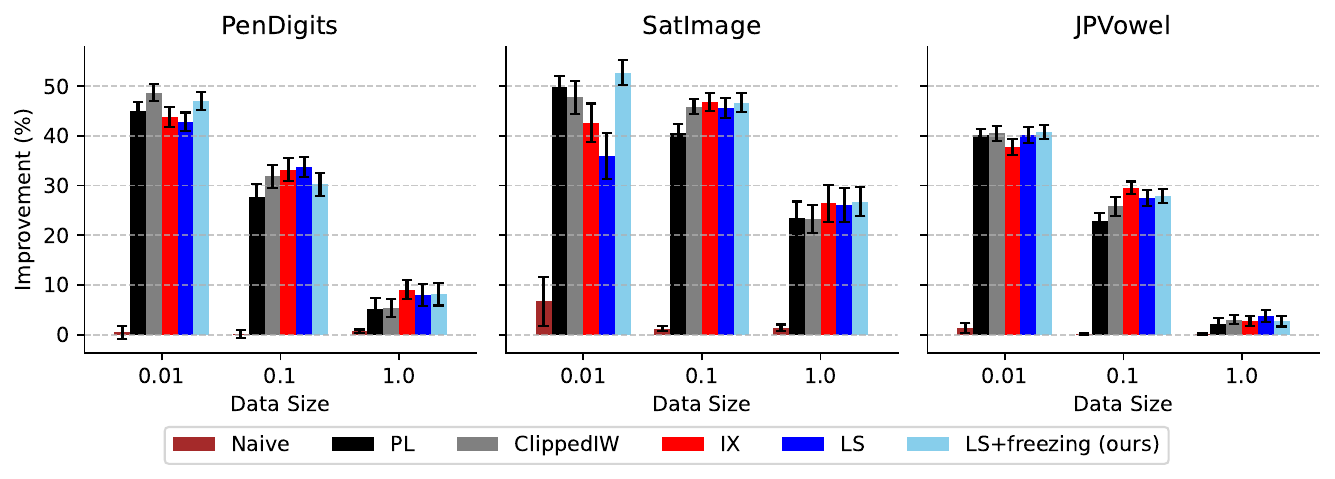}
\vspace{-.5em}
\caption{Results from the OP learning experiment, showing relative improvement of each method against the no-pessimism baseline.
We highlight the nearly consistent improvement of \texttt{LS+freezing} over \texttt{LS}.
}\vspace{-1em}
\label{fig:learning}
\end{figure*}

\subsection{Off-Policy Selection}
\paragraph{Setup.}
We reused the synthetic bandit data from the OP learning experiment.
For selection, we considered all the $7\times 8$ policies from the learning experiment as a policy class.
For each dataset and each run, we selected the best method using \texttt{PUB}, \texttt{LS}, \texttt{WSWRKM} (pessimism with WSWRKM-LCB using $\sigma_0^2=1$), and the original empirical Bernstein (\texttt{EB}) of \citet[Theorem 4]{maurer09empirical}, all with $\dt=0.1$.
Here, we aim to simulate the scenario where the practitioner tries out various policies with various hyperparameters on the training data and then chooses the best policy using the validation set.\looseness=-1

\paragraph{Evaluation.}
We again computed the relative performance improvement of each selected policy against that of the \texttt{IW} baseline.
To avoid misleading conclusion due to randomness, we performed the paired $t$-test for all pairs of the selection policies over the 50 random trials.
We report the best-performing selection method and indicate statistically indistinguishable selection methods (with a $p$-value > 0.05) in boldface.

\paragraph{Results.}
The results are summarized in Table~\ref{tab:selection}.
Remarkably, for all cases, the proposed method \texttt{PUB} performs the best, or is statistically indistinguishable from the best.
This demonstrates that \texttt{PUB} is not only statistically valid, but also perform empirically well, corroborating the practical benefit of variance-adaptive guarantee.
We also note that, despite the competitive performance of WSWRKM-LCB in Section~\ref{sec:counter-example}, this experiment reveals a failure mode of \texttt{WSWRKM}, particularly in the relatively large-sample regime. Additional OP selection results in Appendix~\ref{app:sec:add_exp} highlight even more severe failure cases.

\begin{table}[htb]\vspace{-1em}
  \caption{
    Summary of OP selection experiment. Size is the fraction of data used for training. The best is \textbf{boldfaced} for each column, and those which do not pass paired t-test with the best (i.e., those that are not statistically distinguishable from the best) at significance level 0.05 are \underline{underlined}. 
    PUB is either the best or indistinguishable from the best.
    }
    \centering
  \scalebox{0.85}{
  \begin{tabular}{c|ccc|ccc|ccc}
    \toprule
    Dataset & \multicolumn{3}{c|}{PenDigits} & \multicolumn{3}{c|}{SatImage} & \multicolumn{3}{c}{JPVowel} \\
    \midrule
    Size & 0.01 & 0.1 & 1 & 0.01 & 0.1 & 1 & 0.01 & 0.1 & 1 \\
    \midrule
    \texttt{EB} & \underline{24.67} & {\textbf{35.83}} & \textbf{11.66}
    & \underline{34.00} & \underline{35.41} & \textbf{24.03}
    & \underline{25.96} & \textbf{29.75} & \underline{1.91} \\
    \texttt{LS} & \underline{22.82} & \underline{35.22} & \underline{9.99}    
    & 31.80 & \underline{30.29} & \textbf{24.03}
    & \underline{24.39} & \underline{27.64} & \underline{1.65} \\
    \texttt{WSWRKM} & \textbf{36.55} & \underline{29.45} & 2.19 
    & \textbf{40.75} & \textbf{46.24} & 15.68
    & \underline{20.39} & 9.73 & \underline{1.76} \\
    \texttt{PUB}  (ours) & \underline{34.80} & \underline{31.69} & \underline{7.81}
    & \underline{37.10} & \underline{35.41} & \textbf{24.03}
    & \textbf{33.77} & \underline{20.82} & \textbf{2.08} \\
    \bottomrule
  \end{tabular}
  }
  \label{tab:selection}
\end{table}\vspace{-1.5em}

\section{Conclusion}
In this paper, we have established new state-of-the-art bounds for off-policy (OP) problems.
Our work opens up several interesting research directions.
First, since we have developed a selection method that adapts to the variance of the IW estimator, a natural next step is to characterize the optimal rate for offline regret.  
Second, it is worth investigating whether similar or stronger bounds can be achieved for a doubly robust estimator.
Third, it remains an open question whether one can design an objective that is amenable to optimization  
while matching the statistical rate of our offline selection method.
Last but not least, our LCB for nonnegative random variables is, to our knowledge, the strongest in the literature---particularly in terms of convergence rate---since it requires only the existence of variance, not higher-order moments. Exploring its potential applications or extensions to other learning-theoretic problems could lead to improved guarantees and deeper insights.

\newpage

\acks{KS was supported in part by the National Science Foundation under grant CCF-2327013 and Meta Platforms, Inc.}
\bibliography{ref}

\clearpage
\appendix

\addtocontents{toc}{\protect\StartAppendixEntries}
\listofatoc
\vspace{1em}
\hrule

\section{Related Work}
\label{sec:related}

Since the work of \citet{swaminathan15batch}, there have been numerous studies on off-policy contextual bandits and reinforcement learning.
An exhaustive literature review with a detailed comparison would warrant a separate survey paper.
Here, we focus on categorizing representative works from a theoretical perspective based on the level of guarantees they provide.

\paragraph{No Finite-Time Correctness Guarantee.}
Methods in this category do not provide a provable guarantee of correctness for the proposed confidence bound on the performance of a given policy under evaluation. Notable examples include the empirical likelihood approach~\citep{karampatziakis20empirical} and the self-normalized estimator~\citep{swaminathan15batch}. These lack explicit finite-time correctness guarantees, let alone sample efficiency guarantees. Moreover, coverage violation of \citet{karampatziakis20empirical} was empirically observed in \citet[Figure~3]{kuzborskij21confident}.

\paragraph{Finite-Time Correctness Without Sample Efficiency Guarantee.}
Several approaches only come with a finite-sample correctness guarantee of the proposed confidence bound, but without a convergence rate guarantee, and consequently with no offline regret guarantee.
This includes the seminal work of \citet{london19bayesian} leveraging PAC-Bayesian bounds, exponential weighting~\citep{aouali23exponential}, empirical Bernstein style bound~\citep{sakhi23pac}, Efron-Stein semi-empirical bound for the self-normalized importance weight~\citep{kuzborskij19efron,kuzborskij21confident}, and betting-based bounds~\citep{karampatziakis21off,Waudby-Smith--Wu--Ramdas--Karampatziakis--Mineiro2022}.

\paragraph{Sample Efficiency Guarantee Under Bounded Probability Ratios.}
Including many works mentioned above, several works have assumed a finite upper bound on the weights $w_{1:n}^\pi$.
Many works mentioned above make this assumption.
Of those that provide sample efficiency guarantees, the following works either assume bounded weight or their guarantees become vacuous when the weight is unbounded: \citet[Corollary 4.3]{jin22policy}, \citet{wang24oracle}, and \citet{zenati23sequential}.

\paragraph{Sample Efficiency Guarantee Without Bounded Probability Ratios.}
Only recently have methods with sample efficiency guarantees that remain valid without the bounded probability ratio assumption been proposed. These methods allow the behavior policy to assign arbitrarily small probabilities to certain actions.
While such bounds can still be vacuous in the worst case, they may remain meaningful even when $\piref(a\mid x)$ approaches zero, depending on the distribution of the context $x$ and the reward function.
Early studies in this direction established sample efficiency guarantees that depend on empirical quantities, such as those in \citet[Theorem 4.1]{jin22policy}. However, these guarantees are challenging to interpret and compare with other bounds, as they depend on the specific randomness in the bound’s construction.
In a seminal work, \citet{gabbianelli24importance} provided the first deterministic sample efficiency bound, which was later improved by \citet{sakhi24logarithmic}.
Our work falls into this category, achieving the strongest sample efficiency guarantees for selection and evaluation while matching the bound of \citet{sakhi24logarithmic} for learning.

\section{Deferred Discussions and Proofs for Off-Policy Selection}

\subsection{Implementation of Proposed LCBs}
\label{app:sec:implementation_lcbs}

In this section, we discuss the implementation of the proposed LCBs, \ie UP-LCB and pCRP$^\star$-LCB, in detail and their complexity; see Table~\ref{tab:complexity} for a summary.
We make a distinction of the \emph{online complexity} (when constructing LCB at each time step) and \emph{offline complexity} (when constructing LCB only for the last time step).
We also provide a computationally efficient version, LBUP-LCB, based on a similar trick of \citet{Ryu--Bhatt2024}.

\begin{table}[htb]
    \centering
    \caption{Comparison of complexity of different LCBs. 
    We present the time complexity to compute a LCB for each time step in ``Online Complexity'' for a length-$n$ trajectory, and that to compute a LCB only for the last step with $n$ samples in ``Offline Complexity''.
    Here, $M$ denotes the maximum time complexity for a root finding procedure. For example, if we use a bisect algorithm with target precision $\eps$, $M=O(\ln \fr{1}{\eps})$.}
    \begin{tabular}{c c c c}
    \toprule
    Algorithm & Online complexity & Offline complexity & Rate guarantee\\
    \midrule
    UP-LCB & $\Th(Mn^2)$ & $\Th(n^2 + Mn)$ & Yes\\
    pCRP$^\star$-LCB & $\Th(M^2n^2)$ & $\Th(M^2n)$ & Yes\\
    \midrule
    LBUP-LCB & $\Th(Mn)$ & $\Th(n+M)$ & No \\
    \bottomrule
    \end{tabular}
    \label{tab:complexity}
\end{table}

\newcommand{\ct}{\til{c}}
\newcommand{\diff}{\mathrm{d}}
\newcommand{\rhob}{\boldsymbol{\rho}}
\newcommand{\Natural}{\mathbb{N}}
\newcommand{\Real}{\mathbb{R}}

\subsubsection{Computing UP-LCB with Dynamic Programming}
As alluded to earlier, we can compute the exact UP wealth using dynamic programming. 
A similar statement was proved by \citet{Ryu--Bhatt2024} in the context of confidence sequences for bounded stochastic processes, and the original dynamic programming argument for UP can be found in \citep{Cover--Ordentlich1996}.

Recall that the wealth of UP is defined as a mixture wealth of CRPs:
\begin{align}
\nonumber
\wealthup_t(y_{1:t};\nu)
\defeq \int_0^1 \wealthcrp{b}_t(y_{1:t};\nu) w(b) db.
\end{align}
The following proposition holds for any weight distribution $w(b)$.
\begin{proposition}
\label{thm:exact_up}
The wealth of UP can be computed as
\begin{align}
\wealthup_t(y_{1:t};\nu)
=\sum_{k=0}^t 
\fr{1}{\nu^k}\psi_t^{w}(k)
y^{(t)}(k),
\label{eq:up_wealth_expression}
\end{align}
where we 
define
\begin{align}
\psi_t^{w}(k) &\defeq \int_0^1 b^k(1-b)^{t-k}\diff w(b),
\label{eq:mixture_wealth_individual}\\
y^{(t)}(k)&\defeq \sum_{x_{1:t}\in\{0,1\}^t\suchthat k(x_{1:t})=k} \prod_{i=1}^t y_i^{x_i},\label{eq:seq_k_statistics}
\end{align}
and $k(x_{1:t})\defeq \sum_{i=1}^t x_i$.
Furthermore, for each $t\ge 1$, we have
\begin{align}
y^{(t)}(k)=
\begin{cases}
y^{(t-1)}(0) & \text{if }k=0,\\
y_t y^{(t-1)}(k-1) + y^{(t-1)}(k) & \text{if }1\le k\le t-1\\
y_t y^{(t-1)}(t-1) & \text{if }k=t.
\end{cases}
\label{eq:up_recursive_update}
\end{align}
\end{proposition}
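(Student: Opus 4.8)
The plan is to prove the three displayed claims in Proposition~\ref{thm:exact_up} in the natural order: first the closed-form expansion~\eqref{eq:up_wealth_expression}, then the recursion~\eqref{eq:up_recursive_update} for the quantities $y^{(t)}(k)$ defined in~\eqref{eq:seq_k_statistics}.

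First I would derive~\eqref{eq:up_wealth_expression} by expanding the CRP wealth. Recall from~\eqref{eq:wealth_crp} that $\wealthcrp{b}_t(y_{1:t};\nu)=\prod_{i=1}^t\bigl(1-b+b\frac{y_i}{\nu}\bigr)$. The key observation is that each factor is a sum of two terms, so expanding the product over all $t$ factors produces a sum over binary strings $x_{1:t}\in\{0,1\}^t$, where $x_i=1$ selects the term $b\frac{y_i}{\nu}$ and $x_i=0$ selects $1-b$. Concretely,
\[
\wealthcrp{b}_t(y_{1:t};\nu)=\sum_{x_{1:t}\in\{0,1\}^t}\prod_{i=1}^t\Bigl(b\tfrac{y_i}{\nu}\Bigr)^{x_i}(1-b)^{1-x_i}=\sum_{x_{1:t}}b^{k(x_{1:t})}(1-b)^{t-k(x_{1:t})}\nu^{-k(x_{1:t})}\prod_{i=1}^t y_i^{x_i},
\]
where $k(x_{1:t})=\sum_i x_i$. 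Grouping the binary strings by their common value of $k$ and using the definition of $y^{(t)}(k)$ in~\eqref{eq:seq_k_statistics}, this becomes $\sum_{k=0}^t b^k(1-b)^{t-k}\nu^{-k}y^{(t)}(k)$. Substituting this into the mixture definition~\eqref{eq:wealth_up} and exchanging the (finite) sum with the integral against $w(b)$ yields
\[
\wealthup_t(y_{1:t};\nu)=\sum_{k=0}^t\nu^{-k}y^{(t)}(k)\int_0^1 b^k(1-b)^{t-k}\diff w(b)=\sum_{k=0}^t\frac{1}{\nu^k}\psi_t^w(k)\,y^{(t)}(k),
\]
recognizing the integral as $\psi_t^w(k)$ from~\eqref{eq:mixture_wealth_individual}; this is exactly~\eqref{eq:up_wealth_expression}.

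Next I would establish the recursion~\eqref{eq:up_recursive_update}. The idea is to split the defining sum for $y^{(t)}(k)$ according to the value of the last coordinate $x_t$. For a string $x_{1:t}$ with $k(x_{1:t})=k$, either $x_t=0$, in which case $x_{1:t-1}$ has $k$ ones and contributes $\prod_{i=1}^{t-1}y_i^{x_i}$ (since $y_t^0=1$), or $x_t=1$, in which case $x_{1:t-1}$ has $k-1$ ones and contributes $y_t\prod_{i=1}^{t-1}y_i^{x_i}$. Summing over each case separately gives $y^{(t)}(k)=y^{(t-1)}(k)+y_t\,y^{(t-1)}(k-1)$ for the generic range $1\le k\le t-1$. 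The two boundary cases are degenerate instances of the same splitting: when $k=0$ only $x_t=0$ is possible (the $x_t=1$ branch would require $k-1=-1$ ones, which is impossible), giving $y^{(t)}(0)=y^{(t-1)}(0)$; when $k=t$ only $x_t=1$ is possible (the $x_t=0$ branch would require $y^{(t-1)}(t)$, a string of $t$ ones in $t-1$ positions), giving $y^{(t)}(t)=y_t\,y^{(t-1)}(t-1)$. This matches~\eqref{eq:up_recursive_update}.

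This proof is essentially a bookkeeping argument and I do not anticipate a genuine technical obstacle; the only points requiring care are the interchange of the finite sum and the integral (immediate by linearity, since the sum has finitely many terms) and the handling of the boundary indices in the recursion, where one must be explicit that the ``missing'' branch corresponds to an empty or out-of-range contribution rather than a nonzero term. I would state the convention $y^{(t-1)}(-1)=y^{(t-1)}(t)=0$ to unify all three cases of~\eqref{eq:up_recursive_update} into the single formula $y^{(t)}(k)=y_t\,y^{(t-1)}(k-1)+y^{(t-1)}(k)$, which also makes the linear-time dynamic-programming update transparent.
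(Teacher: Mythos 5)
Your proposal is correct and follows essentially the same route as the paper's proof: expanding the CRP wealth via the distributive law into a sum over binary strings, grouping by the number of ones to get $\sum_{k=0}^t \nu^{-k} b^k(1-b)^{t-k} y^{(t)}(k)$, and integrating against $w(b)$. The only difference is that the paper dismisses the recursion~\eqref{eq:up_recursive_update} as straightforward, whereas you spell out the split on the last coordinate $x_t$ with the boundary cases --- a harmless (indeed helpful) elaboration of the same argument.
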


\begin{proof}
We first note that we can write the cumulative wealth of any constant bettor $b$ as
\begin{align}
\wealthcrp{b}_t(y_{1:t};\nu)
&=\sum_{x_{1:t}\in\{0,1\}^t} \prod_{i=1}^t \Bigl(\fr{y_i b}{\nu}\Bigr)^{x_i} (1-b)^{1-x_i},
\label{eq:distributive_law}
\end{align}
where the equality follows by the distributive law.
To see Eq.~\eqref{eq:up_wealth_expression}, we first note that continuing from Eq.~\eqref{eq:distributive_law}, we have
\begin{align}
\wealthcrp{b}_t(y_{1:t};\nu)
&= \sum_{k=0}^t 
\nu^{-k} b^k(1-b)^{t-k} 
\sum_{x_{1:t}\in\{0,1\}^t\suchthat k(x_{1:t})=k} \prod_{i=1}^t  y_i^{x_i}\nonumber\\
&= \sum_{k=0}^t 
\nu^{-k} b^k(1-b)^{t-k} 
y^{(t)}(k),
\label{eq:wealth_const_betting}
\end{align}
and thus integrating over $b$ with respect to $w(b)$ leads to \eqref{eq:up_wealth_expression}.
The recursive update  in Eq.~\eqref{eq:up_recursive_update} is straightforward.
\end{proof}

This proposition shows that, the recursive update takes $O(t)$ at time step $t$, and thus the online complexity is $O(Mn^2)$.
Even for the offline setting where we only need to compute the LCB with the entire samples once, we need to run the recursive update in Eq.~\eqref{eq:up_recursive_update} for each $t=1,\ldots,n$ and evaluating the wealth defined in Eq.~\eqref{eq:up_wealth_expression} takes $O(t)$, which leads to the complexity $O(n^2+Mn)$.

\subsubsection{Computing pCRP\texorpdfstring{$^\star$}{*} Wealth}

Recall that the pCRP$^\star$-LCB in Eq.~\eqref{eq:pcrplcb}
is defined by the (unique) root $\nu$ of the equation
\[
\wealthpcrp(Y_{1:n};\nu) 
= \frac{1}{\sqrt{\pi(n+1)}}
\sup_{b\in(0,1)}\wealthcrp{b}(Y_{1:n};\nu)
= \fr{1}{\dt}.
\]
Here, for each $\nu$, the maximizer $b$ can be found by finding the root of the derivative
$\fr{d}{db}\wealthcrp{b}(Y_{1:n};\nu)=0$.
Hence, we can numerically find the root by the bisect algorithm over both $\nu>0$ and $b\in(0,1)$. 
Note that the CRP wealth evaluation takes $O(t)$ at time step $t$, and thus computing the LCB takes $O(M^2t)$. Therefore, the online and offline complexities are $O(M^2n^2)$ and $O(M^2n)$, respectively.

\newcommand{\apporder}{r}

\subsubsection{Lower-Bound Universal Portfolio: A Fast Alternative}
\label{app:sec:lbup}
Adapting the development of \citet{Ryu--Bhatt2024} for $[0,1]$-valued random processes, here we present a fast alternative approach that tightly approximates the UP wealth. 
The idea is to directly compute a mixture of very tight lower bounds on the CRP wealths.
The mixture of lower bounds can be computed efficiently by numerical integration, by viewing the lower bound as an (unnormalized) exponential family distribution.
While there is no guarantee on the approximation error, the resulting bound is empirically a very good proxy to the UP-LCB, even better than the pCRP$^\star$-UCB, when sample size is sufficiently large; see Figure~\ref{fig:ex_betting_lcb_lbup}.

\paragraph{Tight Lower-Bound on CRP Wealth.}
We start with the following lemma from \citep{Ryu--Bhatt2024}.
We note that \citep{sakhi24logarithmic} also proved a similar statement (see Lemma~10 therein), but the domain is restricted to $\Real_+$ and thus not sufficient for our purpose.

\begin{lemma}[{\citealp[Lemma~25]{Ryu--Bhatt2024}}]
\label{lem:monotone}
For an integer $\ell\ge 1$, if we define
\[
f_\ell(t)\defeq
\begin{cases}
\displaystyle\frac{\ln(1+t)-\sum_{k=1}^{\ell-1}-\frac{(-t)^k}{k}}{\frac{(-t)^\ell}{\ell}} & \text{if $t>-1$ and $t\neq 0$,}\\
-1 &\text{if $t=0$},
\end{cases}
\]
then $t\mapsto f_\ell(t)$ is continuous and strictly increasing over $(-1,\infty)$.
\end{lemma}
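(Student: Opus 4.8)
The plan is to reduce $f_\ell$ to a clean integral representation, from which both claims fall out. The key observation is that the numerator of $f_\ell$ is exactly the order-$(\ell-1)$ Taylor remainder of $\ln(1+t)$: since $-\frac{(-t)^k}{k}=\frac{(-1)^{k-1}t^k}{k}$, the numerator is $N(t)\defeq\ln(1+t)-\sum_{k=1}^{\ell-1}\frac{(-1)^{k-1}t^k}{k}$. First I would establish the remainder identity $N(t)=(-1)^{\ell-1}\int_0^t\frac{u^{\ell-1}}{1+u}\,du$ directly via the fundamental theorem of calculus, deliberately avoiding the power series so the identity is valid on all of $(-1,\infty)$ and not merely on $|t|<1$. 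Differentiating gives $N'(t)=\frac{1}{1+t}-\sum_{j=0}^{\ell-2}(-t)^j$, and the finite geometric sum $\sum_{j=0}^{\ell-2}(-t)^j=\frac{1-(-t)^{\ell-1}}{1+t}$ collapses this to $N'(t)=\frac{(-t)^{\ell-1}}{1+t}$; combined with $N(0)=0$ this yields the claimed identity.

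Next I would substitute $u=ts$, legitimate for every $t>-1$ with $t\neq 0$ because then $1+u=1+ts>0$ along the whole path of integration, to obtain $N(t)=(-1)^{\ell-1}t^\ell\int_0^1\frac{s^{\ell-1}}{1+ts}\,ds$. Dividing by the denominator $\frac{(-t)^\ell}{\ell}=\frac{(-1)^\ell t^\ell}{\ell}$ cancels all sign and power factors, giving
\[
f_\ell(t)=-\ell\int_0^1\frac{s^{\ell-1}}{1+ts}\,ds,\qquad t>-1.
\]
At $t=0$ this integral equals $-\ell\int_0^1 s^{\ell-1}\,ds=-1$, matching the piecewise definition, so the representation actually holds on the entire interval and removes the apparent singularity at the origin.

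From here both conclusions are immediate. Continuity on $(-1,\infty)$ follows since the integrand $(s,t)\mapsto\frac{s^{\ell-1}}{1+ts}$ is jointly continuous and uniformly bounded on $[0,1]\times K$ for any compact $K\subset(-1,\infty)$, so dominated convergence gives continuity of the integral. For strict monotonicity I would differentiate under the integral sign — justified by the same local boundedness of $\partial_t\frac{s^{\ell-1}}{1+ts}=-\frac{s^\ell}{(1+ts)^2}$ — to get $f_\ell'(t)=\ell\int_0^1\frac{s^\ell}{(1+ts)^2}\,ds$, whose integrand is strictly positive for $s\in(0,1]$ and $t>-1$; hence $f_\ell'(t)>0$ throughout and $f_\ell$ is strictly increasing.

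The only genuinely delicate point is justifying the integral representation globally rather than only on $|t|<1$ where the log series converges. I would sidestep this entirely by deriving the remainder identity through the fundamental theorem of calculus, and by checking that the substitution $u=ts$ keeps $1+ts$ positive along the whole path for $t\in(-1,0)$ (where $u$ sweeps negative values) just as for $t>0$. The removable singularity at $t=0$ is then handled automatically by the representation.
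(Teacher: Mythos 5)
Your proof is correct, and every step checks out: the remainder identity $N(t)=(-1)^{\ell-1}\int_0^t \frac{u^{\ell-1}}{1+u}\,du$ follows from $N'(t)=\frac{1}{1+t}-\frac{1-(-t)^{\ell-1}}{1+t}=\frac{(-t)^{\ell-1}}{1+t}$ together with $N(0)=0$ (and is indeed valid on all of $(-1,\infty)$, since the integration path stays where $1+u>0$); the substitution $u=ts$ is legitimate because $1+ts\ge 1-|t|>0$ for $t\in(-1,0)$ and $s\in[0,1]$; the sign bookkeeping $\frac{(-1)^{\ell-1}}{(-1)^{\ell}}=-1$ gives $f_\ell(t)=-\ell\int_0^1\frac{s^{\ell-1}}{1+ts}\,ds$, which evaluates to $-1$ at $t=0$ as required; and on any compact $K\subset[-1+\eps,\infty)$ one has $1+ts\ge\eps$ uniformly, so both the continuity argument and the differentiation under the integral sign yielding $f_\ell'(t)=\ell\int_0^1\frac{s^\ell}{(1+ts)^2}\,ds>0$ are justified. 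Note, however, that this paper does not prove the lemma at all: it is imported verbatim by citation from \citet[Lemma~25]{Ryu--Bhatt2024}, so there is no in-paper proof to compare against. Your argument therefore adds something the paper does not contain—a short, self-contained derivation—and the integral representation buys more than the statement asks for: it exhibits $f_\ell$ as smooth on $(-1,\infty)$, absorbs the removable singularity at $t=0$ automatically, and gives the boundary behavior $f_\ell(t)\to-\infty$ as $t\to(-1)^+$ and $f_\ell(t)\to 0^-$ as $t\to\infty$ for free, which is a useful sanity check on the monotonicity claim.
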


We can then prove the following lower bound.
As noted in \citep{Ryu--Bhatt2024}, the positive integer $\apporder\ge1$ in the statement can be understood as the approximation order.
Empirical results show that a higher order $\apporder$ results in a tighter lower bound, but we do not have a formal proof.
\begin{lemma}
\label{lem:generalized_lower_bound}
For any $\apporder\in \Natural$, $b\in[0,1]$, and $z\ge 0$, we have
\begin{align*}
\ln(1-b +bz)
&\ge \sum_{k=1}^{2\apporder-1} \frac{b^k}{k} \{(1-z)^{2\apporder}
    -(1-z)^k\} + (1-z)^{2\apporder}\ln (1-b).
\end{align*}
\end{lemma}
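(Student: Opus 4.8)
The plan is to reduce the claim to the monotonicity of $f_{2r}$ supplied by Lemma~\ref{lem:monotone}. First I would substitute $u\defeq 1-z$, so that $1-b+bz=1-bu$ and, after moving the $\ln(1-b)$ term to the left, the target inequality becomes
\[
\ln(1-bu)-u^{2r}\ln(1-b)\ \ge\ \sum_{k=1}^{2r-1}\frac{b^k}{k}\bigl(u^{2r}-u^k\bigr).
\]
The idea is to expand each of the two logarithms on the left to order $2r-1$ with an \emph{exact} remainder, and to recognize that the low-order terms reproduce the right-hand side verbatim while the two remainders combine into a single sign-definite quantity.

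Concretely, rearranging the definition of $f_\ell$ in Lemma~\ref{lem:monotone} with $\ell=2r$ gives, for every $t\in(-1,\infty)$,
\[
\ln(1+t)=\sum_{k=1}^{2r-1}\Bigl(-\tfrac{(-t)^k}{k}\Bigr)+f_{2r}(t)\,\frac{(-t)^{2r}}{2r},
\]
where at $t=0$ the convention $f_{2r}(0)=-1$ makes both sides vanish. Applying this identity at $t=-bu$ and at $t=-b$ (so that $-t=bu$ and $-t=b$, respectively) and then subtracting $u^{2r}$ times the second from the first, I expect all terms of degree at most $2r-1$ in $b$ to collapse exactly to $\sum_{k=1}^{2r-1}\frac{b^k}{k}(u^{2r}-u^k)$, leaving the clean remainder identity
\[
\ln(1-bu)-u^{2r}\ln(1-b)-\sum_{k=1}^{2r-1}\frac{b^k}{k}\bigl(u^{2r}-u^k\bigr)=\frac{b^{2r}u^{2r}}{2r}\bigl(f_{2r}(-bu)-f_{2r}(-b)\bigr).
\]

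It then remains to check that the right-hand side is nonnegative. Since $2r$ is even, $u^{2r}\ge 0$ for every real $u$ (this is exactly where the even order is essential, because $z>1$ forces $u<0$), whence $b^{2r}u^{2r}\ge 0$. Because $b\ge 0$ and $z\ge 0$ we have $-bu-(-b)=b(1-u)=bz\ge 0$, i.e.\ $-bu\ge -b$; invoking that $f_{2r}$ is strictly increasing on $(-1,\infty)$ (Lemma~\ref{lem:monotone}) then yields $f_{2r}(-bu)\ge f_{2r}(-b)$, so the whole expression is $\ge 0$, which is the claim.

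The one point that needs care — and which I expect to be the main obstacle — is verifying that both evaluation points lie in the admissible domain $(-1,\infty)$ of $f_{2r}$ for all $b\in[0,1)$ and $z\ge 0$. For $t=-b$ this is immediate since $-b\in(-1,0]$. For $t=-bu=-b(1-z)$ one checks $bu<1$: when $z\le 1$ we have $0\le bu\le b<1$, and when $z>1$ we have $bu<0<1$, so $-bu>-1$ in both regimes. The endpoint $b=1$ (where $-b=-1$ is only a boundary point) is handled by taking $b\to 1^-$ and using continuity, with both sides possibly tending to $-\infty$ in agreement. As an independent sanity check I would also confirm the claim by the direct route: set $D(b)$ equal to the left side minus the right side, note $D(0)=0$, and compute via the geometric-series telescoping that $D'(b)=\dfrac{u^{2r}b^{2r}z}{(1-b)(1-bu)}\ge 0$ on $[0,1)$ (each factor being manifestly nonnegative there); this reproduces the conclusion without appealing to $f_{2r}$ and cross-validates the remainder identity above.
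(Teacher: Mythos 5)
Your proposal is correct and follows essentially the same route as the paper: since $-bu = b(z-1) \ge -b > -1$ for $b<1$, the paper likewise invokes Lemma~\ref{lem:monotone} to get $f_{2r}(b(z-1)) \ge f_{2r}(-b)$ and rearranges, with your remainder identity being exactly that rearrangement made explicit (and your handling of the $b=1$ boundary matching the paper's vacuity observation). Your supplementary derivative check $D'(b)=\frac{u^{2r}b^{2r}z}{(1-b)(1-bu)}\ge 0$ is a valid independent confirmation but not needed for the main argument.
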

\begin{proof}
Note that the right hand side diverges to $-\infty$ and thus the inequality becomes vacuously true for $b=1$.
We now assume that $b<1$, which ensures $b(z-1)\ge -b>-1$.
Hence, from Lemma~\ref{lem:monotone},
we have $f_{2r}(b(z-1)) \ge f_{2r}(-b)$, which is equivalent to
\begin{align*}
\frac{\ln(1+b(z-1))-\sum_{k=1}^{2r-1}-\frac{(-b(z-1))^k}{k}}{\frac{(-b(z-1))^{2r}}{2r}}
\ge 
\frac{\ln(1-b)-\sum_{k=1}^{2r-1}-\frac{(-b)^k}{k}}{\frac{(-b)^{2r}}{2r}}.
\end{align*}
Rearranging the terms concludes the proof.
\end{proof}

The lower bound in the statement can be understood as the logarithm of an unnormalized exponential family distribution over $z$, i.e.,
\begin{align}
\ln(1-b + bz)
&\ge 
\ln \psi_r(z|b),
\label{eq:long_eq2}
\end{align}
where $\psi_r(z|b)$ is an unnormalized exponential family distribution defined as
\begin{align}
\psi_r(z|b)
&\defeq\exp(\boldsymbol{\th}_r(b)^\intercal\bT_r(z)).
\end{align}
Here, $\boldsymbol{\th}_r(b)$ is the natural parameter defined as
\begin{align*}
\boldsymbol{\th}_r(b)
\defeq \begin{bmatrix}
b\\
{b^2}/{2}\\
\vdots\\
{b^{2r-1}}/{(2r-1)}\\
\ln (1-b)
\end{bmatrix},
\end{align*}
and
$\bT_r(z)$ is the sufficient statistics defined as 
\begin{align*}
\bT_r(z) \defeq \begin{bmatrix}
(1-z)^{2r} - (1-z)\\
(1-z)^{2r} - (1-z)^2\\
\vdots\\
(1-z)^{2r} - (1-z)^{2r-1}\\
(1-z)^{2r}
\end{bmatrix}
= \sum_{j=0}^{2r} \binom{2r}{j} (-1)^j z^j\begin{bmatrix}
1\\
1\\
\vdots\\
1\\
1
\end{bmatrix}
- \begin{bmatrix}
\displaystyle
\sum_{j=0}^1 \binom{1}{j} (-1)^j z^j\\
\displaystyle\sum_{j=0}^2 \binom{2}{j} (-1)^j z^j\\
\vdots\\
\displaystyle\sum_{j=0}^{2r-1} \binom{2r-1}{j} (-1)^j z^j\\
0
\end{bmatrix}.
\end{align*}
From this definition, it is easy to check that 
\begin{align*}
\prod_{t=1}^n \psi_r(z_t|b)
= \exp\Bigl(
\boldsymbol{\th}_r(b)^\intercal \sum_{t=1}^n \bT_r(z_t)
\Bigr),
\end{align*}
and $\sum_{t=1}^n \bT_r(z_t)$ is a function of $(s_j(z_{1:n}))_{j=0}^{2r}$, where we denote the (unnormalized) empirical $j$-th moment for $j\in\Natural$ by
\[
s_j(z_{1:n})
\defeq \sum_{t=1}^n z_t^j.
\]
This implies that the lower bound can be readily computed from the empirical moments, unlike the CRP wealth or UP wealth that requires storing the entire history $z_{1:n}$.

\newcommand{\alphab}{\boldsymbol{\alpha}}
\paragraph{Mixture of Lower-Bounds on CRP Wealths.}
For computational tractability, we now consider a mixture weight in the form of the \emph{conjugate prior} of $\psi_r(z|b)$, defined as
\begin{align}
w_r(b;\alphab)
\label{eq:general_prior}
&\defeq \frac{\exp(\boldsymbol{\th}_r(b)^\intercal\alphab)}{Z_r(\alphab)}.
\end{align}
Here, $\alphab\in\Real^{2r}$ is a hyperparamter of the conjugate prior, and
\[
Z_r(\alphab)\defeq \int_0^1 \exp(\boldsymbol{\th}_r(b)^\intercal\alphab) \diff b
\] 
is the \emph{partition function}.
By the following theorem, computing the mixture of the CRP wealths with respect to this conjugate prior only requires to compute the normalization constant efficiently:
\begin{theorem}
\label{thm:lbup}
Let $r\ge 1$.
For any $y_{1:t}\in\Real_{\ge 0}^t$ and $\nu>0$,
we have
\begin{align}
\int \wealthcrp{b}_n(y_{1:n};\nu) w_r(b;\alphab) \diff b
\ge\frac{
Z_r(\sum_{t=1}^n \bT_r(\fr{y_t}{\nu})+\alphab)}
{Z_r(\alphab)}.
\label{eq:lbup_wealth}
\end{align}
\end{theorem}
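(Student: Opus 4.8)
The plan is to establish the inequality pointwise in $b$ on each factor of the constantly rebalanced portfolio (CRP) wealth, and then carry it through the integral against the conjugate-prior weight, where conjugacy collapses the integral into a ratio of partition functions. First I would invoke the exponential-family lower bound already in hand, Eq.~\eqref{eq:long_eq2}, which states that for every $b\in[0,1]$ and $z\ge 0$ we have $1-b+bz \ge \psi_r(z\mid b)=\exp(\boldsymbol{\th}_r(b)^\intercal \bT_r(z))$. Both sides are nonnegative (indeed $1-b+bz\ge 0$ for $b\in[0,1],z\ge0$, while $\psi_r(z\mid b)>0$), so applying this to each factor $1-b+b\fr{y_t}{\nu}$ of $\wealthcrp{b}_n(y_{1:n};\nu)=\prod_{t=1}^n(1-b+b\fr{y_t}{\nu})$ and multiplying the $n$ resulting inequalities — legitimate precisely because all factors and all bounds are nonnegative — gives
\[
\wealthcrp{b}_n(y_{1:n};\nu) \;\ge\; \prod_{t=1}^n \psi_r\bigl(\fr{y_t}{\nu}\mid b\bigr) = \exp\Bigl(\boldsymbol{\th}_r(b)^\intercal \textstyle\sum_{t=1}^n \bT_r(\fr{y_t}{\nu})\Bigr),
\]
where the equality uses the additivity of the sufficient statistics in the exponent.

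Next I would integrate both sides against the conjugate-prior weight $w_r(b;\alphab)=\exp(\boldsymbol{\th}_r(b)^\intercal\alphab)/Z_r(\alphab)$ from Eq.~\eqref{eq:general_prior}. Since $w_r(\cdot;\alphab)\ge 0$, the integral preserves the inequality, and substituting the explicit form of the weight yields
\[
\int_0^1 \wealthcrp{b}_n(y_{1:n};\nu)\, w_r(b;\alphab)\,\diff b \ge \fr{1}{Z_r(\alphab)}\int_0^1 \exp\Bigl(\boldsymbol{\th}_r(b)^\intercal\bigl(\textstyle\sum_{t=1}^n \bT_r(\fr{y_t}{\nu})+\alphab\bigr)\Bigr)\diff b.
\]
The key observation is that the remaining integrand is exactly the unnormalized conjugate prior whose natural-parameter argument has been shifted from $\alphab$ to $\sum_{t=1}^n \bT_r(\fr{y_t}{\nu})+\alphab$; hence by the very definition of the partition function the integral equals $Z_r(\sum_{t=1}^n \bT_r(\fr{y_t}{\nu})+\alphab)$, which is the claimed right-hand side.

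I do not expect a substantive obstacle here: the proof is a one-line application of the pointwise exponential-family lower bound followed by conjugacy bookkeeping. The only points meriting care are (i) confirming nonnegativity of every factor $1-b+b\fr{y_t}{\nu}$ so that multiplying the factorwise inequalities is valid, which holds for all $b\in[0,1]$, $y_t\ge0$, $\nu>0$; and (ii) ensuring the shifted partition function $Z_r(\sum_{t=1}^n \bT_r(\fr{y_t}{\nu})+\alphab)$ is well defined, which follows from the $\ln(1-b)$ last coordinate of $\boldsymbol{\th}_r(b)$ keeping the integrand integrable near $b=1$ for the admissible parameters. Assembling (i), (ii), and the two displays above completes the argument.
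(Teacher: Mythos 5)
Your proof is correct and is exactly the argument the paper intends (the paper states Theorem~\ref{thm:lbup} without writing out the proof, the ingredients having just been assembled): the pointwise bound of Eq.~\eqref{eq:long_eq2} from Lemma~\ref{lem:generalized_lower_bound} applied to each factor, the additivity of $\bT_r$ in the exponent, and integration against the conjugate prior so that the shifted partition function $Z_r\bigl(\sum_{t=1}^n \bT_r(\tfrac{y_t}{\nu})+\alphab\bigr)$ appears by the very definition of $Z_r$. Your two points of care are the right ones, with only a cosmetic slip: $\psi_r(z\mid b)$ need not be strictly positive at the boundary $b=1$ (where $\ln(1-b)=-\infty$ forces $\psi_r=0$ unless $z=1$), but since the last coordinate of $\bT_r(z)$ is $(1-z)^{2r}\ge 0$, both sides of the factorwise inequality remain nonnegative and the measure-zero boundary does not affect the integral, so the argument goes through unchanged.
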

In the special case of $r=1$, we can compute $Z_1(\alphab)$ in an analytical form if $\alpha_1\ge 0$:
\begin{align}
Z_1(\alphab)= e^{\alpha_1}\alpha_1^{-\alpha_2-1}\gamma(\alpha_2+1,\alpha_1).
\label{eq:normal_cont_special}
\end{align} 
Here, $\gamma(s,x)\defeq \int_0^x t^{s-1}e^{-t}\diff t$ for $s>0$ denotes the lower incomplete gamma function.
For $r>1$, we need a numerical integration library to compute the partition function.

We note that the conjugate prior is not same as the beta prior of Cover's UP in general. 
In particular, however, if we set $\alphab=\boldsymbol{0}$, then the prior $w_r(b;\alphab)$ boils down the uniform distribution over $[0,1]$, and the resulting mixture wealth lower bound can be viewed as a lower bound to Cover's UP with the uniform prior (\ie $\mathsf{Beta}(1,1)$ prior).
Following \citet{Ryu--Bhatt2024}, we refer to the resulting wealth lower bound the \emph{lower-bound UP wealth of approximation order $r$}, or LBUP($r$) in short. We refer to the resulting LCB as the LBUP($r$)-LCB.

\paragraph{Implementation and Complexity.}
We can numerically compute the LBUP($r$)-LCB using the bisect method
Since we only need to keep track of the $2r$ empirical moments $(s_j(y_{1:t}))_{j=1}^{2\apporder}$, the storage complexity is $O(\apporder)$ and per-step time complexity for function evaluation is $O(\apporder)$ at any time step.
Consequently, for computing the LBUP($r$)-LCB, the online complexity is $O(Mnr)$ and the offline complexity is $O(n+Mr)$.

\paragraph{Simulation.}
We simulated the UP-LCB, pCRP$^\star$-LCB, and LBUP($r$)-LCB for $r\in\{1,2,3\}$ for the same synthetic setting used in Figure~\ref{fig:ex_betting_lcb}.
We generated $n=10^4$ i.i.d. Gamma random variables with shape and scale parameters of 6 and $1/8$, respectively, and thus of mean $3/4$.
The results are summarized in Figures~\ref{fig:ex_betting_lcb_lbup},~\ref{fig:lbup_relerr},~and~\ref{fig:lbup_time}.
In particular, we remark that LBUP($r$)-LCBs (especially with $r\ge 2$) very closely approximate the UP-LCB (Figure~\ref{fig:lbup_relerr}) better than pCRP$^\star$ in a large sample regime, exhibiting better scalability over $n$ (Figure~\ref{fig:lbup_time}).
We note, however, that we do not have a formal guarantee for the closeness of LBUP-LCB to UP-LCB, and LBUP-LCBs require some burn-in samples ($\sim 10^2$ samples in this example) to become sufficiently close to UP-LCB.
For an off-policy inference setting with large-scale data, practitioners may consider using LBUP-LCB if the sample trajectory is sufficiently long, and otherwise may prefer pCRP$^\star$ for guaranteed performance with moderate complexity.

\begin{figure}[t]
    \centering
    \includegraphics[width=.95\linewidth]{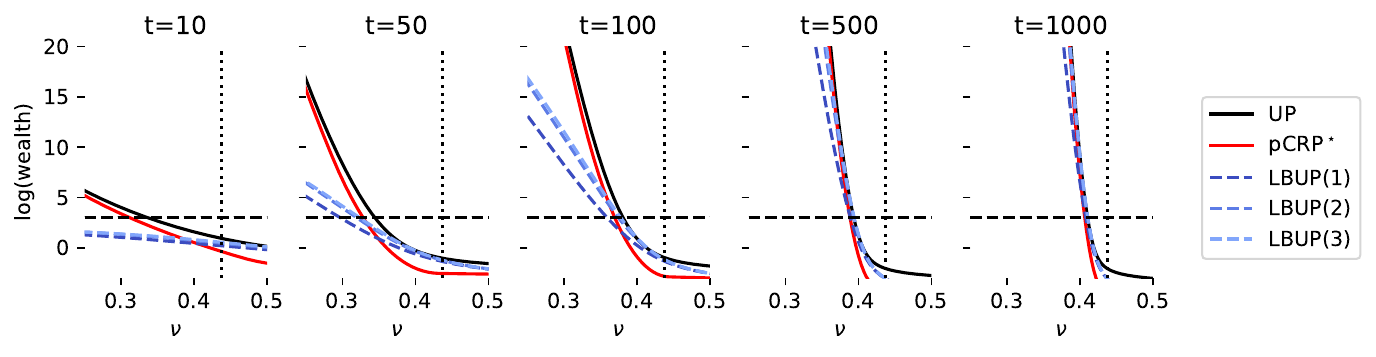}
    \vspace{-1em}
    \caption{Example of the evolution of cumulative wealths achieved by Cover's UP in Eq.~\eqref{eq:wealth_up}, and the penalized best CRP wealth in Eq.~\eqref{eq:up_vs_pcrp}, and the lower-bound universal portfolio in Appendix~\ref{app:sec:lbup}.
    The setting is exactly same as Figure~\ref{fig:ex_betting_lcb}, except that we use larger time steps and depict with a different range for $\nu$.
    }\label{fig:ex_betting_lcb_lbup}
\end{figure}

\begin{figure}[t]
    \centering
    \includegraphics[width=\linewidth]{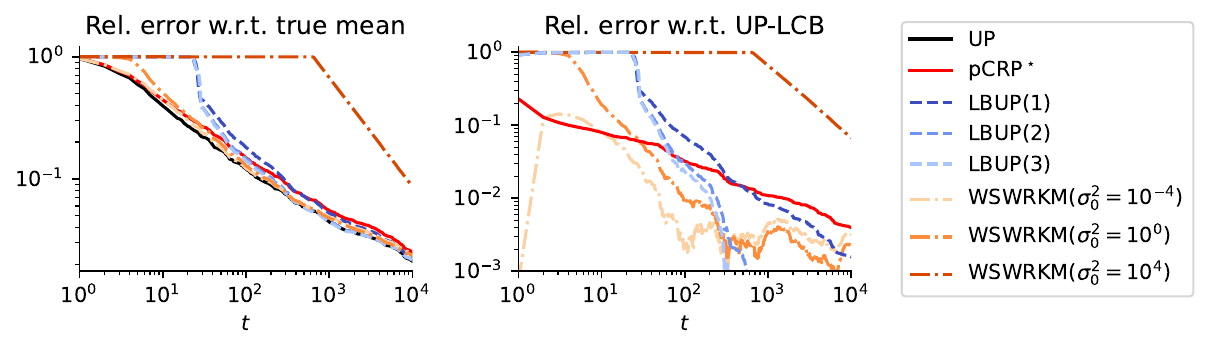}
    \vspace{-2em}
    \caption{Convergence UP-LCB, pCRP$^\star$-LCB, and LBUP($r$)-LCB for $r\in\{1,2,3\}$. 
    The left panel and right panel present the relative convergence of each LCB with respect to true mean and UP-LCB, respectively.
    WSWRKM refers to the method in \citep{Waudby-Smith--Wu--Ramdas--Karampatziakis--Mineiro2022}, whose definition and discussion of the result can be found in Appendix~\ref{app:sec:wswrkm}.}
    \label{fig:lbup_relerr}
\end{figure}

\begin{figure}[t]
    \centering
    \includegraphics[width=0.65\linewidth]{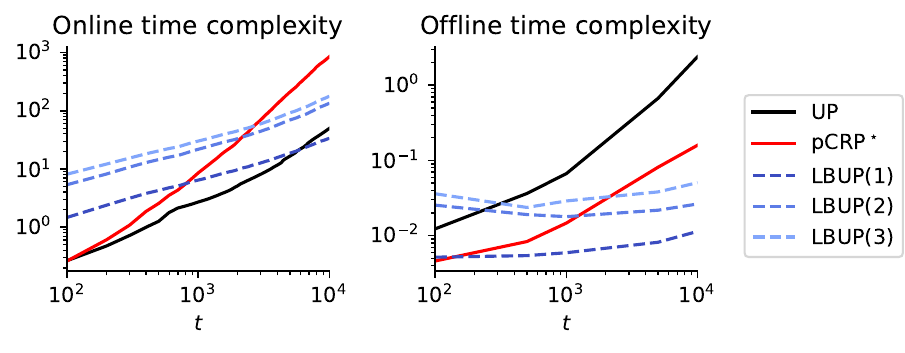}\vspace{-1em}
    \caption{Online and offline time complexity for computing UP-LCB, pCRP$^\star$-LCB, and LBUP($r$)-LCB for $r\in\{1,2,3\}$.}
    \label{fig:lbup_time}
\end{figure}

\subsection{Comparison to the Betting Strategy of \citet{Waudby-Smith--Wu--Ramdas--Karampatziakis--Mineiro2022}}
\label{app:sec:wswrkm}
As alluded to earlier, \citet{Waudby-Smith--Wu--Ramdas--Karampatziakis--Mineiro2022} proposed the general construction of a time-uniform lower confidence bound for non-negative random variables, based on the one-sided betting in disguise.
Beyond the meta strategy, they also suggested to use a certain betting strategy in their Eq.~(12), Eq.~(14), and Eq.~(15).
Concretely, for a hyperparameter $c\in[1/4,3/4]$, they proposed a betting scheme defined as\footnote{The original proposal considered a doubly robust estimator, and we simplify it by setting $k_t=0$ therein.}
\begin{align*}
b_t(\nu)\defeq \min\biggl\{\nu\sqrt{\frac{2\log\frac{1}{\delta}}{\hat{\sigma}_{t-1}^2t\log(t+1)}},c\biggr\}
\end{align*}
when $\nu>0$ is a candidate mean parameter, 
where 
\begin{align*}
\hat{\sigma}_t^2&\defeq \frac{1}{t+1}\biggl(\sigma_0^2+\sum_{i=1}^t(\ips_t^\pi-\bar{r}_t^\pi)^2\biggr)
\quad\text{and}\quad
\overline{r}_t^\pi\defeq \min\biggl\{\frac{1}{t}\sum_{i=1}^t \ips_i^\pi, 1\biggr\}.
\end{align*}
In the definition above, $\hat{\sigma}_t^2$ acts as a \emph{regularized} empirical variance,
where $\sigma_0^2$, which is a hyperparameter acting as a \emph{prior} on the variance, critically influences the performance in the small-to-moderate sample regime.
When $\sigma_0^2$ is too large (compared to the true variance), the amount of betting $b_t(\nu)$ will be very small and thus will not be able to sufficiently increase the log wealth, resulting in very loose confidence bounds.
When $\sigma_0^2$ is too small, the betting $b_t(\nu)$ is sensitive to the variability in the variance estimate and becomes unstable.
Consequently, the confidence bounds tend to have a larger variability as we have observed in the toy experiment in Section~\ref{sec:exp}. %

\paragraph{Simulation.} In the simulation setup in Appendix~\ref{app:sec:lbup}, we demonstrate the performance of this method; see WSWRKM in Figure~\ref{fig:lbup_relerr}. In this experiment, we set $c=1/2$ as suggested, and varied $\sigma_0^2\in\{10^{-4},1,10^4\}$ to demonstrate the effect of $\sigma_0^2$.

We first examine the role of $\sigma_0^2$. 
As alluded to above, an extremely small $\sigma_0^2$ in this case (i.e., $\sigma_0^2=10^{-4}$) starts off closely following the UP-LCB, but then dominated by a moderate $\sigma_0^2=1$ when $t\gtrsim 10^3$. If we set $\sigma_0^2$ extremely large (i.e., $\sigma_0^2=10^{4}$), it takes a significant amount of observations to result in a nonvacuous LCB. 
We note that this is a rather benign setting, since with Gamma random variables exhibit a light tail.
The behavior of WSWRKM under a heavy-tail setting was discussed in Section~\ref{sec:exp} in the main text.

Overall, the WSWRKM-LCB performs reasonably well, but it is outperformed by pCRP$^\star$ in the small-sample regime, and LBUP($r$)-LCB with $r\ge 2$ in the large-sample regime.
We also remark that \citet{Waudby-Smith--Wu--Ramdas--Karampatziakis--Mineiro2022} only proved its statistical validity, without establishing a finite-sample convergence rate of the LCB to the true mean.
Compared to the hyperparameter-free nature of UP-LCB, the presence of additional hyperparameters, $c$ and $\sigma_0^2$, in WSWRKM-LCB is also undesirable in practice.

\subsection{Proof of Theorem~\ref{thm:up_lcb_rate} (Convergence Rate Analysis for UP-LCB and pCRP\texorpdfstring{$^\star$}{*}-LCB)}

We restate Theorem~\ref{thm:up_lcb_rate} in two separate statements, and prove them separately. 
Technical lemmas are deferred to Appendix~\ref{app:sec:technical}.

\begin{theorem}[First part of Theorem~\ref{thm:up_lcb_rate}]
\label{thm:up_lcb_rate_1}
Let $n\ge 1$ and
define
$\blue{F_n^{(\dt)}} \defeq \ln\fr{\sqrt{\pi(n+1)}}{\dt^2}$.
Then, with probability $\ge 1-2\dt$,
\[
0
\le \mu - \lcbup{n}{\dt}(Y_{1:n})
\le \mu - \lcbpcrp{n}{\dt}(Y_{1:n})
\le \sqrt{\fr{48\sigma^2}{n} F_n^{(\dt)}} \vee \fr{12\mu}{n}F_n^{(\dt)}.
\]
\end{theorem}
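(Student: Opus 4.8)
The plan is to collapse the entire chain into a single concentration statement about a constantly-rebalanced portfolio and then dispatch it with Markov's inequality. The leftmost inequality $0 \le \mu - \lcbup{n}{\dt}(Y_{1:n})$ is exactly validity of the LCB, which holds with probability $\ge 1-\dt$ by Proposition~\ref{prop:lcb} (Ville's inequality). The middle inequality $\mu - \lcbup{n}{\dt} \le \mu - \lcbpcrp{n}{\dt}$ follows from the pointwise wealth ordering $\wealthup_n(\cdot;\nu) \ge \wealthpcrp_n(\cdot;\nu)$ of Eq.~\eqref{eq:up_vs_pcrp}: since both curves are strictly decreasing in $\nu$, the larger one meets the threshold $1/\dt$ at a larger argument, giving $\lcbup{n}{\dt} \ge \lcbpcrp{n}{\dt}$. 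Hence it remains only to upper bound $\mu - \lcbpcrp{n}{\dt}$ by the stated quantity with probability $\ge 1-\dt$, after which a union bound yields the $1-2\dt$ guarantee.

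Fix the target gap $\epsilon$ to be the right-hand side; one may assume $\epsilon < \mu$ (else $\mu - \lcbpcrp{n}{\dt} \le \mu \le \epsilon$ trivially, as $\lcbpcrp{n}{\dt}\ge 0$), and set $\nu \defeq \mu - \epsilon > 0$. Because $\nu \mapsto \wealthpcrp_n(Y_{1:n};\nu)$ is decreasing, the event $\{\mu - \lcbpcrp{n}{\dt} \le \epsilon\} = \{\lcbpcrp{n}{\dt} \ge \nu\}$ is implied by $\{\wealthpcrp_n(Y_{1:n};\nu) \ge 1/\dt\}$, which via $\wealthpcrp_n \ge \frac{1}{\sqrt{\pi(n+1)}}\wealthcrp{b}_n$ is in turn implied by $\wealthcrp{b}_n(Y_{1:n};\nu) \ge \sqrt{\pi(n+1)}/\dt =: T$ for any single fixed $b$. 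Writing $Z_t \defeq Y_t/\nu - 1 \ge -1$, the inverse wealth $\prod_{t=1}^n (1+bZ_t)^{-1}$ is nonnegative with expectation $h(b)^n$ by independence, where $h(b) \defeq \EE[(1+bZ_1)^{-1}]$, so Markov's inequality gives $\PP(\wealthcrp{b}_n(Y_{1:n};\nu) < T) \le T\,h(b)^n$. Thus I would reduce the whole upper bound to exhibiting one $b \in (0,1)$ with $T\,h(b)^n \le \dt$, equivalently $\ln h(b) \le -F_n^{(\dt)}/n$; here the two factors of $\dt$ making up the $\dt^2$ inside $F_n^{(\dt)}$ come respectively from Markov and from the threshold $T$.

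The last step is the deterministic moment estimate. Using $\frac{1}{1+x} = 1 - x + \frac{x^2}{1+x}$ with $x = bZ_1$ together with $1+bZ_1 \ge 1-b$ (valid as $Z_1 \ge -1$), I would write $h(b) \le 1 - b\,\EE[Z_1] + \frac{b^2}{1-b}\EE[Z_1^2]$, where $\EE[Z_1] = \epsilon/\nu$ and $\EE[Z_1^2] = (\sigma^2+\epsilon^2)/\nu^2$, and then apply $\ln h(b) \le h(b)-1$. This reduces the goal to forcing the quadratic $b\frac{\epsilon}{\nu} - \frac{b^2}{1-b}\frac{\sigma^2+\epsilon^2}{\nu^2}$ to be at least $F_n^{(\dt)}/n$. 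Restricting to $b \le \frac34$ (so $\frac{1}{1-b}\le 4$) and optimizing the resulting concave quadratic produces a value of order $\frac{\epsilon^2}{\sigma^2+\epsilon^2}$; setting this equal to $F_n^{(\dt)}/n$ and solving for $\epsilon$ yields the two-regime bound — the variance term $\sqrt{48\sigma^2 F_n^{(\dt)}/n}$ when the interior optimizer is admissible (roughly $n\sigma^2 \gtrsim \mu^2 F_n^{(\dt)}$), and the bias term $\frac{12\mu}{n}F_n^{(\dt)}$ when $b$ must be capped at $\frac34$. I expect the main obstacle to be exactly this final bookkeeping: carrying the crude constants (the factor $4$ from $\frac{1}{1-b}$, the $\epsilon^2$ contribution to $\EE[Z_1^2]$, and the admissibility constraint $b \le \frac34$) so that they collapse into the stated $48$ and $12$, and cleanly verifying the split between the variance-dominated and bias-dominated regimes. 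Everything else is a short monotonicity-plus-Markov argument.
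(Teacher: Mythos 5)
Your proposal follows essentially the same route as the paper's proof. The paper also splits the claim into validity plus a wealth certificate: Lemma~\ref{lem:lcb} is exactly your Ville-plus-monotonicity step for the two left inequalities, and Lemma~\ref{lem:ucb} proves the upper bound by showing $\wealthcrp{b^*}_n(Y_{1:n};\nu_o)$ exceeds $\sqrt{\pi(n+1)}/\dt$ at $\nu_o$ set to the mean minus the target gap, for one deterministically chosen $b^*$. Your Markov-on-inverse-wealth computation with $h(b)=\EE[(1+bZ_1)^{-1}]$ and $\ln h(b)\le h(b)-1$ is precisely the paper's Lemma~\ref{lem:240903_concentration_basic}, and your bound $h(b)\le 1-b\,\EE[Z_1]+\fr{b^2}{1-b}\EE[Z_1^2]$ via $1+bZ_1\ge 1-b$ is Lemma~\ref{lem:240903_var}; your accounting of the two $\dt$ factors inside $F_n^{(\dt)}$ and the final union bound also match.

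The one place your sketch as written would not deliver the stated constants is the final bookkeeping, and it is worth seeing why the paper's choices matter. With your cap $b\le\fr34$ and the crude bound $\fr{1}{1-b}\le 4$, the certified rate at the interior optimizer is $\fr{A^2}{16B}$ with $A=\EE[Z_1]$, $B=\EE[Z_1^2]$; since $B\ge A^2$ always, this certificate never exceeds $\fr{1}{16}$, whereas in your nontrivial branch ($\epsilon<\mu$) the requirement $F_n^{(\dt)}/n$ is only forced below $\fr{1}{12}$ by $\epsilon\ge\fr{12\mu}{n}F_n^{(\dt)}$ — so when $\sigma^2\ll\epsilon^2$ and $\epsilon$ is close to $\mu$ (hence $B\approx A^2$), the factor-$4$ certificate falls short of the target even though the true wealth is large. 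The paper avoids this corner in two ways: it takes $b^*=\fr{A}{2(A+2B)}\le\fr12$ (so the loss factor is $2$, not $4$), and, more importantly, it proves the gap is at most $G_n^{(\dt)}$, i.e., \emph{half} the stated right-hand side, in the main branch, handling $n<12F_n^{(\dt)}\bigl(1\vee\fr{4\sigma^2}{\mu^2}\bigr)$ by the trivial observation $G_n^{(\dt)}>\fr{\mu}{2}$; this guarantees $\nu_o=\mu-G_n^{(\dt)}\ge\fr{\mu}{2}$, hence $A\le 1$, which keeps the two-case split ($\sigma^2+(\mu-\nu_o)^2\gtrless(\mu-\nu_o)\nu_o$, your variance- versus bias-dominated regimes) clean and makes the constants $48$ and $12$ close via a short contradiction argument. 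If you tighten your cap to $b\le\fr12$ and either adopt the half-gap trick or verify the near-$\mu$ corner separately, your argument completes exactly as planned; everything upstream of the bookkeeping is already the paper's proof.
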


Recall the definition of the smoothed variance
\[
\WW_b[Y]\defeq \EE\biggl[\fr{(Y-\EE[Y])^2}{1+\fr{b}{\EE[Y]}(Y-\EE[Y])}\biggr].
\]
\begin{theorem}[A Full Version of Second Part of Theorem~\ref{thm:up_lcb_rate}]
\label{thm:up_lcb_rate_2}
Pick any $\eps\in(0,\fr12]$.
Suppose that $(Y_t)_{t=1}^\infty$ is an independent identically distributed (i.i.d.), nonnegative random process, with $\mu\defeq\EE[Y_1]$ and $\sigma^2\defeq \VV[Y_1]$.
Let $b_n^{(\dt)}\defeq \sqrt{\fr{\mu^2}{2\sigma^2}\fr{F_n^{(\dt)}}{n}}$.
With probability $\ge 1-2\dt$, for any 
\[
n\ge \biggl(12\Bigl(1+\fr{4}{\eps}\Bigr) \vee 48\Bigl(1+\fr{4}{\eps}\Bigr)^2\fr{\mu^2}{\sigma^2}\biggr) F_n^{(\dt)},
\]
we have
\begin{align}
0\le \mu-\lcbup{n}{}(Y_{1:n})
&\le \mu-\lcbpcrp{n}{}(Y_{1:n})\nonumber\\
&\le 
\inf_{b\in(0,1-\eps]}
\biggl\{
\frac{b}{\mu} \WW_b[Y_1]
+\fr{\mu}{b}
\fr{F_n^{(\dt)}}{n}
\biggr\}\label{eq:refined_lcb1}\\
&\le 2\sqrt{
\frac{F_n^{(\dt)}}{n}
\WW_{b_n^{(\dt)}}[Y_1]
}.\label{eq:refined_lcb2}
\end{align}
\end{theorem}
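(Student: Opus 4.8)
The plan is to prove the chain of four inequalities in turn, reducing everything to a high-probability lower bound on the wealth of a single constantly-rebalanced portfolio (CRP). The two leftmost inequalities are structural. The coverage bound $0 \le \mu - \lcbup{n}{}(Y_{1:n})$ follows from the validity recorded in Proposition~\ref{prop:lcb} (equivalently, Theorem~\ref{thm:up_lcb_rate_1}): on the Ville event, of probability at least $1-\dt$, the true mean lies at or above the LCB. The inequality $\mu - \lcbup{n}{}(Y_{1:n}) \le \mu - \lcbpcrp{n}{}(Y_{1:n})$ is immediate from $\wealthup_n(\cdot;\nu) \ge \wealthpcrp_n(\cdot;\nu)$ in Eq.~\eqref{eq:up_vs_pcrp} together with the fact that $\nu \mapsto \wealthup_n,\wealthpcrp_n$ are nonincreasing (each factor $1-b+bY_t/\nu$ is nonincreasing in $\nu$): the larger wealth curve crosses the threshold $1/\dt$ at a larger $\nu$, so $\lcbup{n}{} \ge \lcbpcrp{n}{}$. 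Thus it suffices to upper bound $\mu - \lcbpcrp{n}{}(Y_{1:n})$. To that end I fix $b \in (0, 1-\eps]$ and a candidate $\nu = \mu - \Delta$, $\Delta>0$; since $\wealthpcrp_n(\cdot;\nu) \ge \tfrac{1}{\sqrt{\pi(n+1)}}\wealthcrp{b}_n(\cdot;\nu)$ and $\wealthpcrp_n$ is nonincreasing, it is enough to exhibit $\Delta$ for which $\ln\wealthcrp{b}_n(Y_{1:n};\nu) = \sum_{t=1}^n \ln(1 + b R_t) \ge F_n^{(\dt)}$, where $R_t \defeq \tfrac{Y_t - \nu}{\nu} \ge -1$; indeed $\tfrac12\ln(\pi(n+1)) + \ln\tfrac1\dt \le F_n^{(\dt)}$, and the leftover $\ln\tfrac1\dt$ will absorb the confidence level of the concentration step. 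Establishing this for a given $\Delta$ forces $\lcbpcrp{n}{} \ge \mu - \Delta$.

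I would control $\sum_t \ln(1 + bR_t)$ through two ingredients. First, the expected per-step growth: using the elementary inequality $\ln(1+x) \ge x - \tfrac{x^2}{2(1 + \min\{x,0\})}$, valid for $x > -1$, I obtain $\EE[\ln(1 + bR_1)] \ge b\,\EE[R_1] - \tfrac{b^2}{2}\EE\bigl[\tfrac{R_1^2}{1 + \min\{bR_1,0\}}\bigr]$, where $\EE[R_1] = \Delta/\nu > 0$ is the favorable drift and, after accounting for the shift from $\nu$ to $\mu$, the second-order term is identified (up to sample-size-controlled corrections) with $\tfrac{b^2}{2\mu^2}\WW_b[Y_1]$, the smoothed variance. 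Second, a one-sided lower-tail concentration of $\sum_t \ln(1 + bR_t)$ around $n\,\EE[\ln(1+bR_1)]$: the enabling fact is that $b \le 1-\eps$ forces $1 + bR_t \ge 1-b \ge \eps$, so the increments are bounded below by $\ln(1-b)$ and $x \mapsto \ln(1+bx)$ is $\tfrac{b}{1-b}$-Lipschitz, whence the per-step variance is at most $\tfrac{b^2}{(1-b)^2}\tfrac{\sigma^2}{\nu^2}$. A Bernstein-type lower-tail inequality then produces a deviation of order $\tfrac{b\sigma}{\mu}\sqrt{n F_n^{(\dt)}}$.

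Combining the two ingredients, the requirement $\sum_t \ln(1+bR_t) \ge F_n^{(\dt)}$ reads $\tfrac{nb\Delta}{\nu} \gtrsim F_n^{(\dt)} + \tfrac{nb^2}{2\mu^2}\WW_b[Y_1] + (\text{deviation})$, and AM-GM shows the deviation term is dominated by the sum of the drift and smoothed-variance contributions. Solving for the largest admissible $\Delta$ and simplifying with $\nu \approx \mu$ gives the per-$b$ bound $\mu - \lcbpcrp{n}{} \le \tfrac{b}{\mu}\WW_b[Y_1] + \tfrac{\mu}{b}\tfrac{F_n^{(\dt)}}{n}$, and taking the infimum over $b \in (0, 1-\eps]$ yields Eq.~\eqref{eq:refined_lcb1}. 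For Eq.~\eqref{eq:refined_lcb2} I would substitute the balancing choice $b = b_n^{(\dt)} = \sqrt{\tfrac{\mu^2}{2\sigma^2}\tfrac{F_n^{(\dt)}}{n}}$, which the sample-size hypothesis guarantees lies in $(0,1-\eps]$, and apply AM-GM; since $b_n^{(\dt)} \to 0$ under the same hypothesis, $\WW_{b_n^{(\dt)}}[Y_1]$ is comparable to $\sigma^2$, and the two terms balance to the clean form $2\sqrt{\tfrac{F_n^{(\dt)}}{n}\WW_{b_n^{(\dt)}}[Y_1]}$.

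The main obstacle is the concentration step. Because $Y_t$ is unbounded above, one cannot invoke a two-sided bounded-difference or naive Bernstein argument, and a careless bound would leave a deviation scaling with $\Var(\ln(1+bR_1))$ rather than with the target smoothed variance. The crux is to exploit the one-sided boundedness $1 + bR_t \ge 1-b$ both to control the lower tail via a bounded log-moment-generating function and to tie the effective variance back to $\sigma^2$ (hence $\WW_b$) through the Lipschitz estimate, all while carefully tracking the $\nu$-versus-$\mu$ shift and verifying that the sample-size condition, with its $(1+4/\eps)$ factors, keeps $b_n^{(\dt)} \le 1-\eps$ and the second-order logarithmic expansion valid. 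Matching the precise constants (the $12$ and $48$) and absorbing the residual deviation through AM-GM is the most delicate part of the bookkeeping.
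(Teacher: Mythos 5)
Your skeleton (coverage, the UP-versus-pCRP$^\star$ monotonicity, exhibiting a candidate $\nu=\mu-\Delta$ at which a single CRP's log-wealth clears the threshold, then optimizing over $b$) matches the paper's, but the probabilistic engine you propose breaks the theorem. You control the lower tail of $\sum_t\ln(1+bR_t)$ by a Bernstein-type inequality whose variance proxy $\fr{b^2}{(1-b)^2}\fr{\sigma^2}{\nu^2}$ comes from the Lipschitz estimate. The resulting deviation is of order $\fr{b\sigma}{\mu(1-b)}\sqrt{nF_n^{(\dt)}}$, and AM-GM can only trade it against $\fr{b}{\mu}\sigma^2+\fr{\mu}{b}\fr{F_n^{(\dt)}}{n}$ --- the \emph{raw} variance, not $\WW_b[Y_1]$. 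Since $\WW_b[Y_1]$ can be arbitrarily smaller than $\sigma^2$ (that is the whole point of the smoothed variance: when most of the second moment is carried by large values of $Y_1$, the denominator $1+\fr{b}{\mu}(Y_1-\mu)$ tames it), your per-$b$ bound degrades to $\fr{b}{\mu}(\WW_b[Y_1]+c\,\sigma^2)+\fr{\mu}{b}\fr{F_n^{(\dt)}}{n}$, whose infimum is of order $\sigma\sqrt{F_n^{(\dt)}/n}$ --- the rate already delivered by the first part (Theorem~\ref{thm:up_lcb_rate_1}); neither Eq.~\eqref{eq:refined_lcb1} nor Eq.~\eqref{eq:refined_lcb2} follows. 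The paper avoids Bernstein entirely: Lemma~\ref{lem:240903_concentration_basic} applies Markov/Ville to the reciprocal product martingale $M_n=\prod_{t=1}^n\fr{(1+Z_t)^{-1}}{\EE[(1+Z_t)^{-1}]}$ with $Z_t=b\fr{Y_t-\nu}{\nu}$, and then $\ln\EE\bigl[\fr{1}{1+Z_1}\bigr]\le\EE\bigl[\fr{1}{1+Z_1}\bigr]-1=\EE\bigl[\fr{Z_1^2}{1+Z_1}\bigr]-\EE[Z_1]$ produces the penalty in exactly the smoothed form $\EE\bigl[\fr{b^2(Y_1-\nu)^2/\nu^2}{1+b(Y_1-\nu)/\nu}\bigr]$, with no separate variance term. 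Viewed as a Chernoff bound, this is the exponent $\lambda=1$ held \emph{fixed}; optimizing $\lambda$ against a variance-based quadratic bound on the log-MGF --- which is what any Bernstein-type inequality does --- is precisely how $\sigma^2$ re-enters. (A benign difference: the paper evaluates the wealth at the realized LCB $\hv$ rather than at a deterministic candidate, transferring from $\hv$ to $\mu$ using the first part and the $(1+4/\eps)$ sample-size condition to ensure $\Delta\le\fr{\eps}{4}\hv$; your candidate-$\nu$ route mirrors the paper's proof of the first part and would be fine.)

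A second, independent gap is your final step for Eq.~\eqref{eq:refined_lcb2}. Substituting $b=b_n^{(\dt)}$ into the per-$b$ bound gives $\fr{b_n^{(\dt)}}{\mu}\WW_{b_n^{(\dt)}}[Y_1]+\sqrt{2\sigma^2F_n^{(\dt)}/n}$, whose second term again carries raw $\sigma$, and AM-GM runs the \emph{wrong} way here ($\sqrt{2}\sigma+\WW/(\sqrt{2}\sigma)\ge 2\sqrt{\WW}$), so you cannot conclude $2\sqrt{\WW_{b_n^{(\dt)}}[Y_1]\,F_n^{(\dt)}/n}$ by plain substitution; your remark that $\WW_{b_n^{(\dt)}}[Y_1]$ is comparable to $\sigma^2$ is only one-sided ($\WW_b[Y_1]\le\fr{\sigma^2}{1-b}$ since $Y_1\ge0$, but it can be far smaller). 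The paper instead takes $b_o$ to be the crossing point of the increasing $f(b)=\fr{b}{\mu}\WW_b[Y_1]$ and the decreasing $g(b)=\fr{\mu}{b}\fr{F_n^{(\dt)}}{n}$ (existence verified by checking $f(\fr14)\ge g(\fr14)$ under the sample-size condition), where the bound equals $f(b_o)+g(b_o)=2\sqrt{f(b_o)g(b_o)}=2\sqrt{\WW_{b_o}[Y_1]F_n^{(\dt)}/n}$ with no residual term; it then replaces the unknown $b_o$ by the explicit $b_n^{(\dt)}$ by showing $b_o\ge b_n^{(\dt)}$ via the linear majorant $\eta(b)=\fr{2\sigma^2}{\mu}b\ge f(b)$ and comparing smoothed variances at nearby smoothing levels through Lemma~\ref{lem:basic}. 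Finally, your drift inequality $\ln(1+x)\ge x-\fr{x^2}{2(1+\min\{x,0\})}$ is valid but unnecessary once the martingale lemma is adopted: Ville's inequality relates the empirical log-wealth directly to $\ln\EE[(1+Z_1)^{-1}]$, bypassing $\EE[\ln(1+Z_1)]$ and the two-step ``expectation plus concentration'' decomposition altogether.
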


\subsubsection{Proof of Theorem~\ref{thm:up_lcb_rate_1}}

Theorem~\ref{thm:up_lcb_rate_1} is an immediate consequence of Lemma~\ref{lem:lcb} and~\ref{lem:ucb} below.
\begin{lemma}\label{lem:lcb}
With probability $\ge 1-\dt$, $\mu\ge \lcbup{n}{\dt}(Y_{1:n}) \ge \lcbpcrp{n}{\dt}(Y_{1:n})$ for any $n\ge 1$.
\end{lemma}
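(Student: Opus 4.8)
The plan is to split Lemma~\ref{lem:lcb} into a \emph{deterministic} ordering, $\lcbup{n}{\dt}(Y_{1:n})\ge \lcbpcrp{n}{\dt}(Y_{1:n})$, and a high-probability \emph{validity} statement, $\mu\ge \lcbup{n}{\dt}(Y_{1:n})$, holding simultaneously for all $n\ge 1$.

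For the ordering, I would argue directly from the pointwise domination in Eq.~\eqref{eq:up_vs_pcrp}, namely $\wealthup_n(Y_{1:n};\nu)\ge \wealthpcrp_n(Y_{1:n};\nu)$ for every $\nu>0$, without appealing to monotonicity. Writing the two sublevel sets $C_n^{\mathsf{UP}}\defeq\{\nu>0:\wealthup_n(Y_{1:n};\nu)\le 1/\dt\}$ and $C_n^{\mathsf{pCRP}^\star}\defeq\{\nu>0:\wealthpcrp_n(Y_{1:n};\nu)\le 1/\dt\}$, the domination gives $C_n^{\mathsf{UP}}\subseteq C_n^{\mathsf{pCRP}^\star}$: any $\nu$ that drives the larger UP wealth below $1/\dt$ also drives the smaller pCRP$^\star$ wealth below $1/\dt$. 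Since the LCBs in Eq.~\eqref{eq:uplcb} and Eq.~\eqref{eq:pcrplcb} are precisely the infima of these sets, taking infima yields $\lcbpcrp{n}{\dt}(Y_{1:n})=\inf C_n^{\mathsf{pCRP}^\star}\le \inf C_n^{\mathsf{UP}}=\lcbup{n}{\dt}(Y_{1:n})$, which is entirely deterministic.

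For the validity, I would apply Ville's inequality to the wealth process evaluated at the \emph{true} mean $\mu$. The crux is to check that $(\wealthup_t(Y_{1:t};\mu))_{t\ge 1}$ is a nonnegative martingale with unit initial value. Each factor obeys $\EE[\,1-b+b\tfrac{Y_t}{\mu}\mid Y^{t-1}\,]=1-b+b\tfrac{\EE[Y_t\mid Y^{t-1}]}{\mu}=1$ by the hypothesis $\EE[Y_t\mid Y^{t-1}]=\mu$, so each CRP wealth $\wealthcrp{b}_t(Y_{1:t};\mu)$ is a unit-mean martingale; taking the $w$-mixture and exchanging the integral with the conditional expectation (justified by Tonelli, since all integrands are nonnegative) shows $\wealthup_t(Y_{1:t};\mu)$ is also a unit-mean martingale. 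Ville's inequality then gives $\PP(\sup_{t\ge 1}\wealthup_t(Y_{1:t};\mu)\ge 1/\dt)\le \dt$. On the complementary event, of probability $\ge 1-\dt$, we have $\wealthup_t(Y_{1:t};\mu)<1/\dt$ for every $t$, so $\mu\in C_t^{\mathsf{UP}}$ at every $t$ and hence $\mu\ge \inf C_t^{\mathsf{UP}}=\lcbup{t}{\dt}(Y_{1:t})$ simultaneously for all $t\ge 1$. This is exactly the time-uniform validity claim, and combining it with the deterministic ordering proves the lemma. Equivalently, one may simply invoke Proposition~\ref{prop:lcb} with the universal-portfolio betting strategy, since for each fixed $\nu$ the UP wealth $\wealthup_t(Y_{1:t};\nu)$ coincides with the wealth of a causal bettor in the market $\bx_t(\nu)=[Y_t/\nu,\,1]^\intercal$.

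The only genuine obstacle---and it is a mild one---is to carry out the martingale identity and the Tonelli exchange carefully enough that Ville's inequality yields a truly \emph{time-uniform} guarantee (the good event must be the single event $\{\sup_t \wealthup_t(Y_{1:t};\mu)<1/\dt\}$), rather than a per-$n$ statement; everything else reduces to monotone-sublevel-set bookkeeping.
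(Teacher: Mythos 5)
Your proof is correct and follows essentially the same route as the paper's: the paper likewise applies Ville's inequality to the nonnegative martingale $(\wealthup_t(Y_{1:t};\mu))_{t\ge1}$ to obtain the time-uniform validity $\mu\ge\lcbup{n}{\dt}(Y_{1:n})$, and dismisses the ordering $\lcbup{n}{\dt}\ge\lcbpcrp{n}{\dt}$ as immediate from the wealth domination in Eq.~\eqref{eq:up_vs_pcrp}. You merely spell out details the paper leaves implicit---the Tonelli exchange verifying the mixture martingale property and the sublevel-set inclusion yielding the ordering---both of which are sound.
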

\begin{proof}
Since $(\wealthup_t(Y_{1:t};\mu))_{t=1}^n$ is a nonnegative martingale,
by Ville's inequality, we have
\begin{align*}
\PP\Bigl(
\sup_{t\ge 1}\wealthup_t(Y_{1:t};\mu)\ge \frac{1}{\dt}
\Bigr)
\le \dt,
\end{align*}
which concludes the proof for the first inequality.
The second inequality is trivial by~\ref{eq:up_vs_pcrp}
\end{proof}

\begin{lemma}\label{lem:ucb}
Let
\begin{align*}
G_n^{(\dt)}\defeq \sqrt{\fr{12\sigma^2}{n} F_n^{(\dt)}} \vee \fr{6\mu}{n}F_n^{(\dt)}.
\end{align*}
With probability $\ge 1-\dt$,
\[
\mu
\le \lcbpcrp{n}{\dt}(Y_{1:n}) + 2G_n^{(\dt)} 
\le \lcbup{n}{\dt}(Y_{1:n}) + 2G_n^{(\dt)}
\]
for any $n\ge 1$. 
\end{lemma}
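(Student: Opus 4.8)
The plan is to prove the two inequalities separately. The right inequality $\lcbpcrp{n}{\dt}(Y_{1:n}) \le \lcbup{n}{\dt}(Y_{1:n})$ is immediate from Eq.~\eqref{eq:up_vs_pcrp}: since $\wealthup_n \ge \wealthpcrp_n$ pointwise and both wealth curves are continuous and strictly decreasing in $\nu$, the smallest $\nu$ at which $\wealthup_n$ crosses $1/\dt$ is at least as large as the corresponding root for $\wealthpcrp_n$; this is exactly what Lemma~\ref{lem:lcb} already records. So the real work is the left inequality $\mu - 2G_n^{(\dt)} \le \lcbpcrp{n}{\dt}(Y_{1:n})$, which I would reduce to a wealth lower bound. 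Writing $\nu_0 \defeq \mu - 2G_n^{(\dt)}$ and noting the claim is vacuous when $\nu_0 \le 0$ (the LCB is nonnegative), I may assume $\nu_0 > 0$. By monotonicity, $\lcbpcrp{n}{\dt}(Y_{1:n}) \ge \nu_0$ if and only if $\wealthpcrp_n(Y_{1:n};\nu_0) \ge 1/\dt$. Bounding $\wealthpcrp_n(Y_{1:n};\nu_0) \ge \frac{1}{\sqrt{\pi(n+1)}}\wealthcrp{b}_n(Y_{1:n};\nu_0)$ for a single, cleverly chosen constant bet $b=b(n)$, it therefore suffices to show that, with probability $\ge 1-\dt$ and simultaneously for all $n$,
\[
\ln\wealthcrp{b}_n(Y_{1:n};\nu_0) = \sum_{t=1}^n \ln\Bigl(1+b\,\tfrac{Y_t-\nu_0}{\nu_0}\Bigr) \ge \ln\tfrac{\sqrt{\pi(n+1)}}{\dt}.
\]

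The key to controlling the log-wealth under only a second-moment assumption is to work directly with the per-sample terms $L_t \defeq \ln(1 + b Z_t)$, where $Z_t\defeq(Y_t-\nu_0)/\nu_0 \ge -1$. For $b\le 1/2$ these are bounded below by $\ln(1-b)\ge-\ln2$, and the elementary inequalities $\ln(1+x)\ge x-x^2$ (valid for $x\ge-1/2$) and $-\ln(1+x)\le |x|/(1-b)$ on $x\in[-b,0]$ give both a mean lower bound $\EE[L_1]\ge b\frac{\mu-\nu_0}{\nu_0}-b^2\frac{\sigma^2+(\mu-\nu_0)^2}{\nu_0^2}$ and a second-moment bound $\EE[L_1^2]\le \frac{b^2}{(1-b)^2}\frac{\sigma^2+(\mu-\nu_0)^2}{\nu_0^2}$. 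Crucially, these require only $\sigma^2<\infty$; I never need the empirical quantity $\sum_t Z_t^2$ to concentrate, which is exactly where heavy tails would break a naive second-order Taylor argument. The leading term $b\,n\,\frac{\mu-\nu_0}{\nu_0}=b\,n\,\frac{2G_n^{(\dt)}}{\nu_0}$ is positive and, with $b$ chosen of order $\frac{\nu_0}{\sigma}\sqrt{F_n^{(\dt)}/n}$, is of order $F_n^{(\dt)}$, which is the scale that must dominate the threshold.

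For the deviation I would control the lower tail of $\sum_t L_t$ through its exponential moment: since $-L_t\le\ln2$ is bounded above, $\kappa(\lambda)\defeq\ln\EE[e^{-\lambda L_1}]$ admits a Bernstein-type bound $\kappa(\lambda)\le -\lambda\EE[L_1]+\frac{\lambda^2\EE[L_1^2]/2}{1-\lambda(\ln2)/3}$, and the process $M_n(\lambda)\defeq\exp(-\lambda\sum_{t\le n}L_t - n\kappa(\lambda))$ is a nonnegative martingale of unit mean. Applying Ville's inequality to $M_n(\lambda)$ converts the exponential-moment bound into an anytime lower bound of the form $\sum_{t\le n}L_t \ge n\EE[L_1]-\frac1\lambda\ln\frac1\dt + \ldots$. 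Combining this with the mean and second-moment estimates, optimizing over $\lambda$ (or simply taking $\lambda$ and $b$ at their canonical scale), and collecting constants should yield the threshold with the stated numerical factors $12$ and $6$, the $\sqrt{\,\cdot\,}$-branch of $G_n^{(\dt)}$ governing the variance regime and the $\mu/n$-branch the large-deviation regime.

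The main obstacle is reconciling the time-uniformity (``for any $n\ge1$'') with the fact that the reference point $\nu_0=\mu-2G_n^{(\dt)}$ \emph{and} the good bet $b=b(n)$ both drift with $n$, so there is no single martingale to which Ville can be applied once. I would resolve this by a peeling argument over dyadic blocks $n\in[2^k,2^{k+1})$: on each block freeze $\nu_0$ and $b$ at conservative block values, run the martingale argument above with failure budget $\dt_k\propto\dt/(k+1)^2$, and union-bound over $k$. The resulting penalty $\ln(1/\dt_k)=O(\ln\ln n + \ln\frac1\dt)$ is dominated by the $\tfrac12\ln(\pi(n+1))$ already present in $F_n^{(\dt)}$ (which itself comes from the Cover--Ordentlich penalty, not from concentration), so the rate is unaffected. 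Verifying that the frozen block values of $\nu_0$ and $b$ still force the leading term to dominate the threshold uniformly across each block is the one genuinely delicate piece of bookkeeping.
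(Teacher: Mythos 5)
Your overall strategy matches the paper's proof of this lemma: the right inequality is dispatched via Eq.~\eqref{eq:up_vs_pcrp}, and the left inequality is reduced, by monotonicity of $\nu\mapsto\wealthpcrp_n(Y_{1:n};\nu)$, to showing that a \emph{single} well-chosen constant bet drives $\frac1n\ln\wealthcrp{b}_n(Y_{1:n};\nu_0)$ above $\frac1n\ln\frac{\sqrt{\pi(n+1)}}{\dt}$ at a reference mean $\nu_0$ below $\mu$, using a concentration bound that needs only $\sigma^2<\infty$. Where you genuinely differ is the concentration device. The paper (Lemmas~\ref{lem:240903_concentration_basic} and~\ref{lem:240903_var}) tilts at the single value $\lambda=1$, where $e^{-L_t}=\frac{1}{1+bZ_t}$ is computable in closed form; the identity $\frac{1}{1+z}=1-z+\frac{z^2}{1+z}$ together with $\ln u\le u-1$ then gives $\frac1n\sum_t L_t\ge bA-\frac{b^2}{1-b}B-\frac1n\ln\frac1\dt$ (with $A=\frac{\mu-\nu_0}{\nu_0}$, $B=\frac{\sigma^2+(\mu-\nu_0)^2}{\nu_0^2}$) with no MGF machinery at all. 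Your route---pointwise Taylor bounds $\ln(1+x)\ge x-x^2$ on $[-\frac12,\infty)$ for the mean, $|L_1|\le\frac{b|Z_1|}{1-b}$ for the second moment, and a Bennett/Bernstein MGF bound exploiting that $-L_t\le\ln 2$ is bounded \emph{above} for $b\le\frac12$---is valid under the same second-moment assumption and lands on the same structural inequality, just with an extra parameter $\lambda$ to tune; it is a legitimate, slightly more roundabout alternative. Two bookkeeping items you defer are actually handled crisply in the paper: the choice $b^*=\frac{A}{2(A+2B)}\le\frac12$ keeps the bet in range automatically in \emph{both} regimes (your canonical scale $b\asymp\frac{\nu_0}{\sigma}\sqrt{F_n^{(\dt)}/n}$ can exceed $\frac12$ precisely when the $\frac{6\mu}{n}F_n^{(\dt)}$ branch governs, so you must cap it by hand), and the factors $12$ and $6$ come out of a two-case contradiction argument splitting on whether $\sigma^2+(\mu-\nu_0)^2\gtrless(\mu-\nu_0)\nu_0$. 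The paper also proves the stronger bound with $\nu_o=\mu-G_n^{(\dt)}$ once $n\ge 12F_n^{(\dt)}(1\vee\frac{4\sigma^2}{\mu^2})$, handling small $n$ deterministically via $G_n^{(\dt)}>\frac{\mu}{2}$; your vacuity check $\nu_0\le 0$ plays the same role.

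On time-uniformity, your diagnosis is in fact sharper than the paper's treatment: the paper invokes its concentration lemma at $(\nu_o,b^*)$ that \emph{depend on $n$} and asserts the conclusion ``for any $n\ge1$'' by appeal to the blanket Ville-for-Markov remark, but Ville applies to a single martingale with fixed $(b,\nu)$, so strictly the paper establishes the bound per fixed $n$---which is all that Theorems~\ref{thm:up_lcb_rate} and~\ref{thm:main_selection} consume, since both are stated for fixed $n$. Your dyadic peeling would genuinely repair the uniform claim, but your assertion that the $O(\ln\ln n)$ peeling penalty is dominated by the $\frac12\ln(\pi(n+1))$ inside $F_n^{(\dt)}$ fails numerically for moderate $n$ (e.g., for $n\approx 2^{10}$ one has $2\ln(k+1)\approx 4.8>\frac12\ln(\pi(n+1))\approx 4.0$), so the stated constants $12$ and $6$ would not survive verbatim; you would obtain the lemma with degraded constants or an enlarged $F_n^{(\dt)}$. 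Net assessment: your proposal proves the fixed-$n$ statement by a correct alternative concentration argument, modulo the unexecuted constant-collection and the cap on $b$; the time-uniform version requires your peeling (which the paper omits) and costs constants.
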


\begin{proof}
It suffices to prove the inequality for $\lcbpcrp{n}{\dt}(Y_{1:n})$ due to Eq.~\eqref{eq:up_vs_pcrp}.
We first note that, if $n < 12F_n^{(\dt)}(1\vee \frac{4\sigma^2}{\mu^2})$, 
we deterministically have $G_n^{(\dt)}> \fr{\mu}{2}$, which implies that
\begin{align*}
  \mu - \lcbpcrp{n}{\dt}(Y_{1:n})\le \mu < 2 G_n^{(\dt)},
\end{align*}
which proves the claim.

Hence, hereafter, we thus assume $n \ge 12F_n^{(\dt)}(1\vee \frac{4\sigma^2}{\mu^2})$ and show a slightly stronger bound
\begin{align}
\mu-\lcbpcrp{n}{\dt}(Y_{1:n}) \le G_n^{(\dt)}.
\label{eq:ub_claim}
\end{align}
In this regime, if we define
\begin{align*}
\nu_o\defeq\mu-G_n^{(\dt)},
\end{align*}
we have $\nu_o>0$, since $G_n^{(\dt)}\le\frac{\mu}{2}<\mu$.

Recall that $\nu\mapsto \wealthpcrp_n(Y_{1:n};\nu)$ is monotonically decreasing and $\lcbpcrp{n}{\dt}(Y_{1:n})$ is the unique root of $\wealthpcrp_n(Y_{1:n};\nu) =\frac{1}{\dt}$.
Therefore, to prove the desired claim in Eq.~\eqref{eq:ub_claim}, it suffices to show that 
\begin{align}
    \wealthpcrp_n(Y_{1:n};\nu_o) 
    >\frac{1}{\dt},
    \label{eq:ub_suff_cond}
\end{align}
since it implies that $\nu_o<\lcbpcrp{n}{\dt}(Y_{1:n})$.
By the definition of $\wealthpcrp_n$, it suffices to show that there exists $b^*\in(0,1)$ such that
\begin{align}
\label{eq:crp_wealth_larger_than_threshold}
\frac{1}{n}\ln\wealthcrp{b^*}_n(Y_{1:n};\nu_o) > \frac{1}{n}\ln\frac{\sqrt{\pi(n+1)}}{\dt}.
\end{align}
We will construct such $b^*$ below. 

Define
\begin{align*}
A\defeq \fr{\mu-\nu_o}{\nu_o} \quad\text{and}\quad 
B\defeq \fr{\sigma^2 + (\mu-\nu_o)^{2}}{\nu_o^2},
\end{align*}
and set
\[
b^*\defeq \fr{A}{2(A+2B)}\le \fr12.\footnote{The optimal choice of $b$ is $1-\sqrt{\frac{B}{A+B}}$, but the rate does not change in the current analysis.}
\]
By applying Lemma~\ref{lem:240903_concentration} with $\nu_o=\mu-G_n^{(\dt)}$ and $b^*$ chosen above, we have:
with probability $\ge 1-\dt$, for any $n\ge 1$, for any $\pi^*\in\Pi$,
\begin{align*}
\frac{1}{n}\ln\wealthcrp{b^*}_n(Y_{1:n};\nu_o)
&\ge
b^*\frac{\mu-\nu_o}{\nu_o} 
-\fr{(b^*)^2}{1-b^*} \fr{\sigma^2 + (\mu-\nu_o)^2}{\nu_o^2}
-\frac{1}{n}\ln\fr{1}{\dt}.\\
&=b^* A - \fr{(b^*)^2}{1-b^*} B - \frac{1}{n}\ln\fr{1}{\dt}
\\&= \fr{A^2}{2(A+2B)} - \frac{1}{n} \ln \fr{1}{\dt}.
\end{align*}
The last equality follows from the choice of $b^*$.

To show Eq.~\eqref{eq:crp_wealth_larger_than_threshold}, it remains to show that
\begin{align*}
\fr{A^2}{A+2B}\ge \frac{2F_n^{(\dt)}}{n}.
\end{align*}
We prove by contradiction:
if $\fr{A^2}{A+2B}< \frac{2F_n^{(\dt)}}{n}$, or equivalently
\begin{align}\label{eq:contradiction}
\frac{(\mu-\nu_o)^2}{2(\sigma^2 + (\mu-\nu_o)^2) + (\mu-\nu_o)\nu_o} < \frac{2F_n^{(\dt)}}{n},
\end{align}
then $G_n^{(\dt)}=\mu-\nu_o<G_n^{(\dt)}$.
We consider the following two cases separately.

  \paragraph{Case 1.} $\sigma^2 + (\mu-\nu_o)^2 \ge (\mu-\nu_o)\nu_o$.\\
  In this case, from Eq.~\eqref{eq:contradiction}, we have
\begin{align*}
\frac{2F_n^{(\dt)}}{n} > \frac{(\mu-\nu_o)^2}{3(\sigma^2 + (\mu-\nu_o)^2)},
\end{align*}
which implies that
  \begin{align*}
    (G_n^{(\dt)})^2=(\mu-\nu_o)^2
    < \frac{\frac{6F_n^{(\dt)}}{n}}{1-\frac{6F_n^{(\dt)}}{n}}\sigma^2 
    \le \frac{12\sigma^2 F_n^{(\dt)}}{n},
  \end{align*}
  which is a contradiction.
  Here, the last inequality follows from the assumption $n\ge 12F_n$.
  
  \paragraph{Case 2.} $\sigma^2 + (\mu-\nu_o)^2 < (\mu-\nu_o)\nu_o$.\\
  In this case, from Eq.~\eqref{eq:contradiction}, we have
\begin{align*}
\frac{2F_n^{(\dt)}}{n} > \frac{(\mu-\nu_o)^2}{3(\mu-\nu_o)\nu_o}=\frac{\mu-\nu_o}{3\nu_o},
\end{align*}
which implies that
\begin{align*}
G_n^{(\dt)}=\mu-\nu_o
< \frac{6\nu_o F_n^{(\dt)}}{n}
< \frac{6\mu F_n^{(\dt)}}{n},
\end{align*}
which is a contradiction.
Here, the last inequality follows since $\nu_o= \mu-G_n^{(\dt)}<\mu$.
This conclude the proof.
\end{proof}

\subsubsection{Proof of Theorem~\ref{thm:up_lcb_rate_2}}
Let $\hv\defeq \hv_{\mathsf{UP}}^{(\dt)}(Y_{1:n})$.
Note that 
$\wealthup_n(Y_{1:n};\hv)= \frac{1}{\dt}$ by the definition of UP-LCB.
Let $\blue{Z_1}\defeq \fr{Y_1}{\hv}$.
Since $\wealthup_n(Y_{1:n};\nu)\ge \wealthpcrp_n(Y_{1:n};\nu)$ from the regret guarantee of Cover's UP in Eq.~\eqref{eq:up_vs_pcrp}, by the definition of $\wealthpcrp_n(Y_{1:n};\nu)$, we have, for any $b\in(0,1)$, 
\begin{align*}
\fr{1}{n}\ln\fr{\sqrt{\pi(n+1)}}{\dt}
&\ge \fr{1}{n}\ln\wealthcrp{b}_n(Y_{1:n};\hv)\\
&=\fr{1}{n}\sum_{t=1}^n \ln\Bigl(
1-b+b\fr{Y_t}{\hv}
\Bigr)\\
&\ge b(\EE[Z_1]-1) - \EE\biggl[\fr{b^2(Z_1-1)^2}{1+b(Z_1-1)}\biggr] - \fr{1}{n}\ln\fr{1}{\dt},
\end{align*}
where the last inequality holds with probability $\ge 1-\dt$ by Lemma~\ref{lem:240903_concentration_new}.
We define $\blue{\Delta}\defeq \mu-\hv$, and we assume that $\Delta \ge 0$, which happens with probability $\ge 1-\dt$.
Note that $\fr{\Delta}{\hv} = \fr{\mu}{\hv}-1=\EE[Z_1]-1\ge 0$.
Rearranging the inequality, we then have
\begin{align}
\Delta
&\le \hv\biggl(
b\EE\biggl[\fr{(Z_1-1)^2}{1+b(Z_1-1)}\biggr] + \fr{1}{b} \fr{F_n^{(\dt)}}{n}
\biggr).
\label{eq:intermed2}
\end{align}
We bound the first term as follows:
\begin{align*}
\EE\biggl[\fr{(Z_1-1)^2}{1+b(Z_1-1)}\biggr]
&\le 2\EE\biggl[\fr{(Z_1-\EE[Z_1])^2 + (\EE[Z_1]-1)^2}{1+b(Z_1-1)}\biggr]
\\
&\le 2\EE\biggl[\fr{(Z_1-\EE[Z_1])^2 }{1+b(Z_1-\EE[Z_1])}\biggr]
+ \fr{2\Delta^2}{\hv^2}
\EE\biggl[\fr{1}{1+b(Z_1-1)}\biggr]
\tag{$\because \EE[Z_1]\ge 1$}
\\
&\sr{(a)}{\le} 2\EE\biggl[\fr{(Z_1-\EE[Z_1])^2 }{1+b(Z_1-\EE[Z_1])}\biggr]
+\fr{\Delta}{2\hv}.
\end{align*}
Here, we show that $(a)$ is true given $n\ge (\fr{12}{c} \vee \fr{48\sigma^2}{c^2\mu^2}) F_n^{(\dt)}$ for $c=\fr{\eps}{\eps+4}$ and $b\in(0,1-\eps]$.
To see this, Theorem~\ref{thm:up_lcb_rate} along with the requirement on $n$ ensures 
\[
0\le \Delta = \mu-\hv \le c\mu ~,
\]
which is equivalent to
\begin{align}
\fr{4}{\eps+4}\mu
=(1-c)\mu 
\le \hv \le \mu
\label{eq:intermed3}
\end{align}
or
\[
0\le \Delta \le \fr{c}{1-c}\hv = \fr{\eps}{4}\hv.
\]
This leads to, using $Z_1 \ge 0$ and $b \in (0,1-\eps]$,
\begin{align*}
\fr{2\Delta^2}{\hv^2}
\EE\biggl[\fr{1}{1+b(Z_1-1)}\biggr]
\le \fr{2\Delta^2}{\hv^2}
\fr{1}{1-b} 
\le \fr{2\Delta}{\hv^2}\fr{\Delta}{\eps}
\le \fr{2\Delta}{\hv^2}\fr{\hv}{4}
= \fr{\Delta}{2\hv}~,
\end{align*}
concluding the proof of $(a)$ above.

We now apply the upper bound of $\EE\Bigl[\fr{(Z_1-1)^2}{1+b(Z_1-1)}\Bigr]$ above to Eq.~\eqref{eq:intermed2} and solve it for $\Dt$ to obtain
\begin{align}
\Delta 
\le (2-b)\Delta
&\le 
\frac{4b}{\hv}\EE\biggl[\fr{(Y_1-\mu)^2}{1+\fr{b}{\hv}(Y_1-\mu)}\biggr]
+\fr{\hv}{b}\fr{F_n^{(\dt)}}{n}
\defeq h\Bigl(\fr{b}{\hv}\Bigr),
\end{align}
where $h(q)\defeq 4q\EE\Bigl[\fr{(Y_1-\mu)^2}{1+q(Y_1-\mu)}\Bigr]
    +\fr1q\fr{F_n^{(\dt)}}{n}$.
Taking infimum over $b\in(0,1-\eps]$,
\begin{align*}
\Delta
\le 
\inf_{b\in[0,1-\eps)}
h\Bigl(\fr{b}{\hv}\Bigr)
=\inf_{q\in[0,\fr{1-\eps}{\hv})} h(q)
\le \inf_{q\in[0,\fr{1-\eps}{\mu})} h(q)
= \inf_{b\in[0,{1-\eps})} h\Bigl(\fr{b}{\mu}\Bigr).
\end{align*}
The second inequality holds since we assume $\mu\ge \hv$.
This concludes the proof for the first inequality in Eq.~\eqref{eq:refined_lcb1}.

To prove the second inequality in Eq.~\eqref{eq:refined_lcb2}, we rewrite the inequality in Eq.~\eqref{eq:refined_lcb1} as
\begin{align}
\Delta
&\le 
\inf_{b\in(0,1-\eps]}
\{
f(b) + g(b)
\}
\le \inf_{b\in(0,\fr14]}
\{
f(b) + g(b)
\},
\label{eq:intermed4}
\end{align}
where $f(b)\defeq \frac{b}{\mu}\EE\Bigl[\fr{(Y_1-\mu)^2}{1+\fr{b}{\mu}(Y_1-\mu)}\Bigr]$
and $g(b)\defeq \fr{\mu}{b}
\fr{F_n^{(\dt)}}{n}$.
Note that $f(b)$ is monotonically increasing and $g(b)$ is monotonically decreasing over $b\in [0,1]$.
We now show that $f(\fr14) \ge g(\fr14)$, which implies that $f(b_o)=g(b_o)$ for some $0<b_o\le \fr14$.
To show this, note that 
\begin{align*}
f\Bigl(\fr14\Bigr)
&=  \EE\biggl[\fr{(Y_1-\mu)^2}{4\mu+(Y_1-\mu)}\biggr]\\
&= \EE\biggl[\fr{(Y_1-\mu)^2}{3\mu+Y_1}\biggr]\\
&\ge \EE\biggl[\fr{(Y_1-\mu)^2}{2Y_1}\onec{Y_1\ge 3\mu}\biggr]\\
&\ge \EE\biggl[\fr{\fr{Y_1^2}{2}-\mu^2}{2Y_1}\onec{Y_1\ge 3\mu}\biggr]\tag{$\because (a-b)^2\ge \fr12 c^2-b^2$}\\
&\ge \EE\biggl[\fr{\fr{Y_1^2}{2}-\fr{Y_1^2}{9}}{2Y_1}\onec{Y_1\ge 3\mu}\biggr]\\
&\ge\EE\biggl[\fr{7}{36}Y_1\onec{Y_1\ge 3\mu}\biggr]\\
&\ge \fr{7}{12}\mu\\
&\ge 4\mu \fr{F_n^{(\dt)}}{n}
= g\Bigl(\fr14\Bigr).
\end{align*}
Here, the last inequality follows from the assumption that $n\ge 7F_n^{(\dt)}$.
Hence, if we plug in the root $b_o$ to Eq.~\eqref{eq:intermed4}, then we have
\begin{align*}
\Delta \le f(b_o)+g(b_o) = 2\sqrt{\fr{F_n^{(\dt)}}{n} \EE\biggl[\fr{(Y_1-\mu)^2}{1+\fr{b_o}{\mu}(Y_1-\mu)}\biggr]}.
\end{align*}
To further upper bound this term, it suffices to find a deterministic lower bound on $b_o$, since, by Lemma~\ref{lem:basic} stated below, if $0\le b_\ell\le b_o$,
\begin{align*}
\EE\biggl[\fr{(Y_1-\mu)^2}{1+\fr{b_o}{\mu}(Y_1-\mu)}\biggr]
\le 2\EE\biggl[\fr{(Y_1-\mu)^2}{1+\fr{2b_\ell}{\mu}(Y_1-\mu)}\biggr].
\end{align*}
To find such a lower bound $b_\ell$, 
we note that, if we define $\eta(b)\defeq \fr{2\sigma^2}{\mu}b \ge f(b)$  
and $b\mapsto \eta(b)$ is monotonically increasing, and thus the root $b_o'$ of the equation $\eta(b)=g(b)$ must be smaller than $b_o$.
Hence, solving $\eta(b)=\fr{2\sigma^2}{\mu}b=\fr{\mu}{b}\fr{F_n^{(\dt)}}{n}=g(b)$ yields the root
\begin{align*}
b_o'=b_n^{(\dt)}\defeq \sqrt{\fr{\mu^2}{2\sigma^2}\fr{F_n^{(\dt)}}{n}}.
\end{align*}
Note that we require $n>\fr{\mu^2}{\sigma^2}F_n^{(\dt)}$ to ensure that the root $b_n^{(\dt)}$ lies in $(0,\fr12)$, which is assumed in the statement.
Finally, 
we have $\Delta\le f(b_o)+g(b_o) \le f(b_n^{(\dt)}) + g(b_n^{(\dt)})$, which concludes the proof.
\jmlrQED

\subsubsection{Technical Lemmas}
\label{app:sec:technical}

Here, we state and prove technical lemmas used in the proofs above.
Note that we obtain time-uniform guarantees below immediately by applying Ville's inequality in place of Markov's inequality.
\begin{lemma}
\label{lem:240903_concentration}
Let $Y_1,\ldots,Y_t$ be i.i.d. nonnegative random variables.
For any ``betting'' $b\in[0,1]$ and a ``reference'' mean $\nu>0$, we have
\begin{align*}
\PP\del[2]{
\frac{1}{n}\sum_{t=1}^n \ln\del[2]{1-b+b\frac{Y_t}{\nu}}
\ge 
b\frac{\mu-\nu}{\nu} 
-\fr{b^2}{1-b} \fr{\VV[Y_1] + (\mu-\nu)^2}{\nu^2}
-\frac{1}{n}\ln\frac{1}{\dt}
}\ge 1-\dt.
\end{align*}
\end{lemma}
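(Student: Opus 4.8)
The plan is to reduce the lemma to a one-line Cram\'er--Chernoff (or, for the time-uniform version, Ville) argument on the log-wealth, after establishing a single elementary pointwise inequality. First I would change variables to $U_t \defeq \frac{Y_t}{\nu} - 1$, which satisfies $U_t \ge -1$ because $Y_t \ge 0$, so that $\ln(1-b+b\frac{Y_t}{\nu}) = \ln(1+bU_t)$. The two constants on the right-hand side are exactly the first two moments of $U_1$: indeed $\EE[U_1] = \frac{\mu-\nu}{\nu}$, and since $\EE[(Y_1-\nu)^2] = \VV[Y_1] + (\mu-\nu)^2$, we have $\EE[U_1^2] = \frac{\sigma^2 + (\mu-\nu)^2}{\nu^2}$. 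So the claim becomes: with probability $\ge 1-\dt$, $\frac{1}{n}\sum_{t=1}^n \ln(1+bU_t) \ge b\,\EE[U_1] - \frac{b^2}{1-b}\EE[U_1^2] - \frac{1}{n}\ln\frac{1}{\dt}$.

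The key deterministic step, and the one that makes the unboundedness of $Y_t$ harmless, is a pointwise upper bound on the \emph{reciprocal} $\frac{1}{1+bU_t}$ rather than a direct lower bound on $\ln(1+bU_t)$. Writing $v = bU_t \ge -b$ and using the identity $\frac{1}{1+v} = 1 - v + \frac{v^2}{1+v}$ together with $1+v \ge 1-b > 0$ (valid for $b<1$; the bound is vacuous at $b=1$), I get $\frac{1}{1+bU_t} \le 1 - bU_t + \frac{b^2 U_t^2}{1-b}$. Taking expectations and using $1+x \le e^x$ then yields the MGF-type control
\[
\EE\left[\frac{1}{1+bU_1}\right] \le 1 - b\,\EE[U_1] + \frac{b^2}{1-b}\EE[U_1^2] \le \exp\left(-b\,\EE[U_1] + \frac{b^2}{1-b}\EE[U_1^2]\right).
\]
The crucial point is that $\frac{1}{1+bU_1}$ is automatically upper bounded (by $\frac{1}{1-b}$) and has finite, controllable expectation, so no moment or boundedness assumption beyond the existence of $\sigma^2$ is needed; this is also where the factor $\frac{b^2}{1-b}$ in the statement originates.

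Finally I would close the argument by Chernoff's method on the lower tail: since the $Y_t$ are i.i.d., $\EE[\exp(-\sum_{t=1}^n \ln(1+bU_t))] = (\EE[\frac{1}{1+bU_1}])^n$, and Markov's inequality applied to $\exp(-\sum_{t=1}^n \ln(1+bU_t))$ at threshold $\frac{1}{\dt}$ gives, after taking logarithms and substituting the display above, exactly the stated bound with probability $\ge 1-\dt$. For the time-uniform strengthening noted after the lemma, I would instead observe that $M_t \defeq \prod_{s=1}^t \frac{1/(1+bU_s)}{\EE[1/(1+bU_1)]}$ is a nonnegative martingale with $M_0 = 1$, and apply Ville's inequality in place of Markov's to obtain the same inequality simultaneously for all $n\ge 1$. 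I do not expect a genuine obstacle here; the only care needed is the sign bookkeeping---we are controlling the \emph{lower} tail of the log-wealth, which corresponds to the \emph{bounded} regime $Y_t \approx 0$ where $\frac{1}{1+bU_t}\approx \frac{1}{1-b}$---and the restriction $b<1$ required for the reciprocal bound.
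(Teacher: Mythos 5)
Your argument is correct and is essentially the paper's own proof: the paper likewise applies Markov's inequality to the normalized reciprocal product $\prod_{t=1}^n \frac{1/(1+Z_t)}{\EE[1/(1+Z_t)]}$ with $Z_t = b(Y_t-\nu)/\nu$ (Lemma~\ref{lem:240903_concentration_basic}), uses the same identity $\frac{1}{1+t}=1-t+\frac{t^2}{1+t}$ together with $\ln x\le x-1$ (your $1+x\le e^x$), and bounds $\EE\bigl[\frac{Z_1^2}{1+Z_1}\bigr]$ by $\frac{b^2}{1-b}\cdot\frac{\VV[Y_1]+(\mu-\nu)^2}{\nu^2}$ via exactly your denominator bound $bY_1+(1-b)\nu\ge(1-b)\nu$ (Lemmas~\ref{lem:240903_concentration_new} and~\ref{lem:240903_var}). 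The only cosmetic differences are that you apply the identity and the second-moment bound pointwise before taking expectations rather than after, and that your phrase ``Markov at threshold $1/\dt$'' should be read as applied to the normalized product (as your own Ville remark makes explicit), which is precisely the paper's construction.
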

\begin{proof}
Applying Lemma~\ref{lem:240903_var}  to Lemma~\ref{lem:240903_concentration_new} concludes the proof.
\end{proof}

\begin{lemma}
  \label{lem:240903_concentration_new}
  Let $Y_1,\ldots,Y_t$ be i.i.d. nonnegative random variables.
  For any ``betting'' $b\in[0,1]$ and a ``reference'' mean $\nu>0$, we have
  \begin{align*}
    \PP\del[2]{
      \frac{1}{n}\sum_{t=1}^n \ln\del[2]{1-b+b\frac{Y_t}{\nu}}
      \ge 
      b\frac{\mu-\nu}{\nu} 
      -\EE\sbr[2]{\fr{b^2 \fr{(Y_1 - \nu)^2}{\nu^2}}{1 + b\fr{Y_1 - \nu}{\nu}}} 
      -\frac{1}{n}\ln\frac{1}{\dt}
    }\ge 1-\dt.
  \end{align*}
\end{lemma}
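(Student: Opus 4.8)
The plan is to establish this as a one-line Chernoff/Markov tail bound, the key being an exact algebraic identity that produces the second-order correction term for free. First I would substitute $U_t \defeq \frac{Y_t-\nu}{\nu}$, so that $1-b+b\frac{Y_t}{\nu} = 1+bU_t$; since $Y_t\ge 0$ we have $U_t\ge -1$, hence $1+bU_t\ge 1-b\ge 0$ and all factors are nonnegative. Writing $R$ for the claimed lower bound and noting $\EE[U_1]=\frac{\mu-\nu}{\nu}$, the failure event $\{\frac1n\sum_{t=1}^n\ln(1+bU_t)<R\}$ coincides with $\{\prod_{t=1}^n(1+bU_t)^{-1}>e^{-nR}\}$. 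Applying Markov's inequality to the nonnegative random variable $\prod_{t}(1+bU_t)^{-1}$ and using that the $Y_t$ (hence the factors) are i.i.d. gives
\[
\PP\Bigl(\tfrac1n \sum_{t=1}^n \ln(1+bU_t) < R\Bigr) \le e^{nR}\,\EE\Bigl[\tfrac{1}{1+bU_1}\Bigr]^{n}.
\]

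The crux is then to control $\EE[\frac{1}{1+bU_1}]$ by a quantity whose logarithm is exactly $-b\EE[U_1]+\EE[\frac{b^2U_1^2}{1+bU_1}]$. For this I would use the partial-fraction identity
\[
\frac{1}{1+bU} = 1 - bU + \frac{b^2U^2}{1+bU},
\]
which holds pointwise whenever $1+bU>0$ (verify by clearing the denominator: $(1-bU)(1+bU)+b^2U^2=1$). Taking expectations yields $\EE[\frac{1}{1+bU_1}] = 1 - b\EE[U_1] + \EE[\frac{b^2U_1^2}{1+bU_1}]$, and since this expectation is positive I can apply $\ln(1+x)\le x$ to conclude
\[
\ln\EE\Bigl[\tfrac{1}{1+bU_1}\Bigr] \le -b\,\EE[U_1] + \EE\Bigl[\tfrac{b^2U_1^2}{1+bU_1}\Bigr].
\]

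To finish, I would substitute $R = b\EE[U_1] - \EE[\frac{b^2U_1^2}{1+bU_1}] - \frac1n\ln\frac1\dt$. The two displayed bounds combine to give $nR + n\ln\EE[\frac{1}{1+bU_1}] \le -\ln\frac1\dt$, so the right-hand side of the Markov bound is at most $\dt$; recalling $\EE[U_1]=\frac{\mu-\nu}{\nu}$ recovers the stated event with probability $\ge 1-\dt$. The only real subtlety is the identity in the second step — once it is in hand, everything else is bookkeeping; the degenerate case $b=1$ with $\PP(Y_1=0)>0$ is harmless since the claimed lower bound is then $-\infty$. Finally, replacing the fixed-$n$ Markov step by Ville's inequality applied to the nonnegative martingale $\prod_{t=1}^n (1+bU_t)^{-1}/\EE[(1+bU_1)^{-1}]$ upgrades the guarantee to be time-uniform, exactly as remarked before the lemma.
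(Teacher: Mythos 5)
Your proposal is correct and is essentially the paper's own proof: the paper routes the same computation through an auxiliary lemma (Lemma~\ref{lem:240903_concentration_basic}, applied with $Z_t \gets b\,\frac{Y_t-\nu}{\nu}$), using Markov's inequality on the normalized product $\prod_{t}\frac{(1+Z_t)^{-1}}{\EE[(1+Z_1)^{-1}]}$, the bound $\ln x \le x-1$, and exactly your partial-fraction identity in the form $-\frac{t}{1+t}=\frac{t^2}{1+t}-t$. Your explicit treatment of the degenerate case $b=1$ with $\PP(Y_1=0)>0$ is a minor refinement the paper glosses over (its basic lemma assumes support in $(-1,\infty)$), and your closing remark about upgrading to a time-uniform bound via Ville's inequality matches the paper's own comment preceding its technical lemmas.
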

\begin{proof}
  Use Lemma~\ref{lem:240903_concentration_basic} with $Z_t\gets b\frac{Y_t-\nu}{\nu}$.
\end{proof}

\begin{lemma}
\label{lem:240903_concentration_basic}
Let $Z_1,\ldots,Z_t$ be i.i.d. random variables supported over $(-1,\infty)$.
Then, we have
\begin{align*}
\PP\del[2]{
-\frac{1}{n}\sum_{t=1}^n \ln\del[1]{1+Z_t}
+\EE[Z_1] \le \EE\sbr[2]{\fr{Z_1^2}{1 + Z_1}} + \frac{1}{n}\ln\frac{1}{\dt}
}\ge 1-\dt.
\end{align*}
\end{lemma}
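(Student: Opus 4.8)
The plan is to recognize this as a one-line Chernoff/Markov argument once the right elementary inequality is isolated. Write $a\defeq \EE[Z_1]$ and $c\defeq \EE[\frac{Z_1^2}{1+Z_1}]$ (both finite are implicitly assumed by the appearance of these quantities in the statement). Multiplying the target inequality through by $n$ and rearranging, the event whose complement I must control is exactly
\[
\sum_{t=1}^n \bigl(-\ln(1+Z_t)\bigr) > n(c-a) + \ln\tfrac1\dt .
\]
First I would apply Markov's inequality to the exponential of the left-hand side: since $\exp\bigl(\sum_{t=1}^n -\ln(1+Z_t)\bigr) = \prod_{t=1}^n \frac{1}{1+Z_t}$ (which is well-defined and nonnegative because $Z_t>-1$), we get
\[
\PP\Bigl(\sum_{t=1}^n -\ln(1+Z_t) > n(c-a)+\ln\tfrac1\dt\Bigr)
\le \dt \cdot e^{-n(c-a)}\,\EE\Bigl[\prod_{t=1}^n \tfrac{1}{1+Z_t}\Bigr].
\]

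By the i.i.d.\ assumption the expectation factors, $\EE[\prod_{t=1}^n \frac{1}{1+Z_t}] = \bigl(\EE[\frac{1}{1+Z_1}]\bigr)^n$, so the whole bound is at most $\dt$ provided I can show the single-sample estimate
\[
\EE\Bigl[\tfrac{1}{1+Z_1}\Bigr] \le e^{\,c-a}.
\]
This is the only nontrivial step, and it reduces to elementary algebra rather than any concentration. The key identity is $\frac{z^2}{1+z}-z = \frac{-z}{1+z} = \frac{1}{1+z}-1$, valid for every $z>-1$; taking expectations gives $c-a = \EE[\frac{Z_1^2}{1+Z_1}]-\EE[Z_1] = \EE[\frac{1}{1+Z_1}]-1$. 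Writing $U\defeq \EE[\frac{1}{1+Z_1}]>0$, the desired bound becomes $U \le e^{\,U-1}$, i.e.\ $\ln U \le U-1$, which is the standard inequality $\ln x \le x-1$. This closes the argument.

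The only place that requires care is making sure the elementary inequality is applied in the correct direction and that the algebraic identity aligns the exponent $c-a$ exactly with $\EE[\frac{1}{1+Z_1}]-1$; there is no genuine analytic obstacle, since unboundedness of $Z_t$ never enters—the factorization over i.i.d.\ terms plus $\ln x\le x-1$ does all the work. Finally, as the surrounding remarks note, the time-uniform version follows verbatim by replacing Markov's inequality with Ville's inequality applied to the nonnegative supermartingale $M_n\defeq \prod_{t=1}^n \frac{e^{\,a-c}}{1+Z_t}$, whose increments have conditional expectation $e^{\,a-c}\,U \le 1$ by the very estimate above, so that $M_0=1$ and $\EE[M_n\mid \mathcal F_{n-1}]\le M_{n-1}$.
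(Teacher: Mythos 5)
Your proof is correct and follows essentially the same route as the paper's: both apply Markov's inequality to the product $\prod_{t=1}^n \frac{1}{1+Z_t}$ (the paper normalizes it into a mean-one random variable first, while you absorb the normalization $e^{-n(c-a)}$ into the threshold), factor by independence, and close with $\ln x \le x-1$ together with the identity $\frac{1}{1+z}-1 = \frac{z^2}{1+z}-z$. Your concluding remark on the time-uniform extension via Ville's inequality likewise matches the paper's own comment preceding the lemma.
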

\begin{proof}
Note that the following is a nonnegative random variable.
\begin{align*}
M_n = \prod_{t=1}^n \fr{\fr{1}{1 + Z_t}}{\EE[\fr{1}{1 + Z_t}]} ~.
\end{align*}
Thus, by Markov's inequality $\PP(\ln\frac{M_n}{\EE[M_n]}\ge \ln \frac{1}{\dt})\le \dt$, or equivalently, w.p. at least $1-\dt$, we have
\begin{align*}
-\frac{1}{n}\sum_{t=1}^n \ln(1 + Z_t)  - \frac{1}{n}\ln\frac{1}{\dt}
&< \ln\EE\biggl[\fr{1}{1 + Z_1}\biggr]
\\&\le \EE\biggl[\fr{1}{1 + Z_1}\biggr] - 1 && (\because \ln(x)\le x-1)
\\&=   \EE\biggl[\fr{-Z_1}{1 + Z_1}\biggr]
\\&=   \EE\biggl[\fr{Z_1^2}{1 + Z_1}\biggr]  - \EE[Z_1].
\end{align*}
The last equality holds since $-\frac{t}{1+t}=\frac{t^2}{1+t}-t$.
\end{proof}

\begin{lemma}\label{lem:240903_var}
We have
\begin{align*}
\EE\sbr[3]{\fr{b^2 \fr{(Y_1 - \nu)^2}{\nu^2}}{1 + b\fr{Y_1 - \nu}{\nu}}}
\le \fr{b^2}{1-b} \fr{\VV[Y_1] + (\mu-\nu)^2}{\nu^2}.
\end{align*}
\end{lemma}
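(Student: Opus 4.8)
The plan is to remove the denominator by a pointwise lower bound and then reduce the resulting second moment to a bias--variance decomposition. The crucial observation is that the nonnegativity of $Y_1$ controls the denominator from below: since $Y_1 \ge 0$ and $\nu > 0$, the centered ratio satisfies $\fr{Y_1-\nu}{\nu} \ge -1$, so that for any $b\in[0,1]$ we have the deterministic bound
\[
1 + b\fr{Y_1-\nu}{\nu} \ge 1 - b .
\]
This is the only place where the structure of the problem (nonnegativity) enters, and it is what makes the lemma go through without any moment assumptions beyond the existence of the variance.

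Given this, I would bound the integrand pointwise. Because the numerator $b^2\fr{(Y_1-\nu)^2}{\nu^2}$ is nonnegative and the denominator is at least $1-b>0$ (taking $b<1$, the only nontrivial case), we get
\[
\fr{b^2\fr{(Y_1-\nu)^2}{\nu^2}}{1+b\fr{Y_1-\nu}{\nu}} \le \fr{b^2}{1-b}\fr{(Y_1-\nu)^2}{\nu^2} .
\]
Taking expectations and pulling out the deterministic factor $\fr{b^2}{(1-b)\nu^2}$ leaves only the term $\EE[(Y_1-\nu)^2]$ to be evaluated.

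Finally I would expand $\EE[(Y_1-\nu)^2]$ by the standard decomposition around the true mean $\mu = \EE[Y_1]$: writing $Y_1-\nu = (Y_1-\mu)+(\mu-\nu)$ and using that $\nu$ is deterministic and $\EE[Y_1-\mu]=0$, the cross term vanishes and we obtain $\EE[(Y_1-\nu)^2] = \VV[Y_1] + (\mu-\nu)^2$. Substituting this back yields exactly the claimed bound $\fr{b^2}{1-b}\fr{\VV[Y_1]+(\mu-\nu)^2}{\nu^2}$. There is no genuine analytic obstacle here; the entire content is the denominator bound in the first step, which is where the $[0,\infty)$-valued assumption is exploited, and the remaining manipulations are elementary.
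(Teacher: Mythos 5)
Your proof is correct and is essentially the paper's own argument: your denominator bound $1+b\fr{Y_1-\nu}{\nu}\ge 1-b$ is exactly the paper's observation that $1+b\fr{Y_1-\nu}{\nu}=\fr{bY_1+(1-b)\nu}{\nu}\ge\fr{(1-b)\nu}{\nu}$ via $Y_1\ge 0$, followed by the same bias--variance decomposition $\EE[(Y_1-\nu)^2]=\VV[Y_1]+(\mu-\nu)^2$. No gaps; your explicit handling of the $b<1$ restriction (the bound being vacuous at $b=1$) is a sound minor addition.
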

\begin{proof}
Consider
\begin{align*}
\fr{(Y_1 - \nu)^2}{1 + b\fr{Y_1 - \nu}{\nu}} 
= \fr{\nu{(Y_1-\nu)^2}}{b Y_1 + (1-b)\nu} 
\le \fr{{(Y_1-\nu)^2} }{(1-b)}. 
\end{align*}
Taking the expectation, we have $\EE[(Y_1-\nu)^2] = \EE[(Y_1-\mu+\mu-\nu)^2] = \VV[Y_1]+(\mu-\nu)^2$, which concludes the proof.
\end{proof}

\begin{lemma}
\label{lem:basic}
For any $y\ge 0$, $0\le b'\le b\le \fr12$, we have
\begin{align*}
\fr{1}{1+b\fr{y-\mu}{\mu}}
\le \fr{2}{1+2b'\fr{y-\mu}{\mu}}.
\end{align*}
\end{lemma}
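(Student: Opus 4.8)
The plan is to reduce the claim to a one-variable scalar inequality via the substitution $s = (y-\mu)/\mu$. Since $y\ge 0$ and $\mu>0$, this gives $s\ge -1$, and the two denominators become $1+bs$ and $1+2b's$. Before cross-multiplying I would first confirm that both denominators are genuinely positive, so that the comparison is between positive reals and the inequality direction is preserved. Using $b\le\tfrac12$ and $s\ge-1$ we get $1+bs\ge 1-b\ge\tfrac12>0$; likewise $2b'\le 1$ yields $1+2b's\ge 1-2b'\ge 0$, with equality possible only in the degenerate case $b'=\tfrac12,\,y=0$, where the right-hand side is $+\infty$ and the inequality is vacuous. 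Hence it suffices to treat $1+2b's>0$.

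Assuming strict positivity of both denominators, cross-multiplication shows that the target $\tfrac{1}{1+bs}\le\tfrac{2}{1+2b's}$ is equivalent to $1+2b's\le 2(1+bs)$, i.e.\ to the scalar inequality $2s(b'-b)\le 1$. This is the crux of the argument, and I would dispatch it with a two-case split on the sign of $s$. If $s\ge 0$, then $b'\le b$ gives $2s(b'-b)\le 0\le 1$ immediately. If $-1\le s<0$, then $s<0$ and $b'-b\le 0$ make the product nonnegative, so $2s(b'-b)=2\lvert s\rvert(b-b')$; bounding $\lvert s\rvert\le 1$ and $b-b'\le b\le\tfrac12$ yields $2s(b'-b)\le 2\cdot 1\cdot\tfrac12=1$. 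In both cases $2s(b'-b)\le 1$, which establishes the cross-multiplied inequality and therefore the lemma.

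There is no substantial obstacle in this lemma; the only points that require care are (i) verifying that both denominators remain positive under the hypotheses $b,b'\le\tfrac12$ and $s\ge-1$, which legitimizes cross-multiplication, and (ii) tracking the sign of $s$ in the final step, since that is exactly what keeps $2s(b'-b)\le 1$ under control. The assumption $b\le\tfrac12$ is used precisely to close the worst case $s=-1,\ b=\tfrac12,\ b'=0$, where $2s(b'-b)=1$ and the bound holds with equality (indeed both sides of the original inequality equal $2$ there), confirming the constant $2$ cannot be improved.
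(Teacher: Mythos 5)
Your proof is correct and follows essentially the same route as the paper's: confirm positivity of the denominators, cross-multiply, and reduce the claim to the linear inequality $2s(b'-b)\le 1$ with $s=(y-\mu)/\mu\ge -1$, which is closed exactly as in the paper using $b-b'\le b\le \tfrac{1}{2}$. You are in fact slightly more careful than the paper's version, which glosses over the degenerate case $b'=\tfrac{1}{2}$, $y=0$ where the right-hand denominator vanishes (and whose displayed equivalent inequality drops a factor of $2$), but these are cosmetic points rather than a difference of method.
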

\begin{proof}
Note that the denominators in both sides are positive.
Hence, the inequality is equivalent to
\begin{align*}
1+2b'\fr{y-\mu}{\mu} &\le 2+2b\fr{y-\mu}{\mu}
\Leftrightarrow (b-b')\Bigl(1-\fr{y}{\mu}\Bigr) \le 1.
\end{align*}
The last inequality readily follows from the assumptions $0\le b'\le b\le \fr12$ and $y\ge 0$.
\end{proof}

\subsection{Empirical-Bernstein-Type Relaxation of UP-LCB}
\label{app:sec:emp_bern_relaxation}
As alluded to earlier in Section~\ref{sec:finite_sample}, here we provide an empirical-Bernstein-type relaxation of pCRP$^\star$-LCB.
\begin{theorem}[Empirical-Bernstein-type relaxation of \pcrplcb{}]
\label{thm:emp_bern_relaxation}
Let $\hat{\mean}_n \defeq \frac{1}{n}\sum_{t=1}^n Y_t$ and $\hat{\mathsf{V}}_n\defeq \frac{1}{n}\sum_{t=1}^n (Y_t-\hat{\mean}_n)^2$ denote the empirical mean and variance, respectively.
Let $H_{n}^{(\dt)}\defeq\ln\frac{\sqrt{\pi(n+1)}}{\dt}$ and let $\lcbeb{n}{\dt}(Y_{1:n})
\defeq \hat{\mean}_n-\Delta_n^{(\dt)}$, where
\begin{align*}
\Delta_n^{(\dt)}
\defeq \frac{1}{1-\frac{2}{n}H_{n}^{(\dt)}} 
\Biggl(
\frac{\hat{\mean}_n}{n} H_{n}^{(\dt)}
+\sqrt{\frac{\hat{\mean}_n^2}{n^2} (H_{n}^{(\dt)})^2
+ \frac{4\hat{\mathsf{V}}_n}{n} H_{n}^{(\dt)} 
\Bigl(1-\frac{2}{n}H_{n}^{(\dt)}\Bigr) 
}
\Biggr).
\end{align*}
Under the same setting of Proposition~\ref{prop:lcb}, with probability at least $1-\dt$, for all $n\ge 1$ such that $H_{n}^{(\dt)}<\fr12$, we have $\mu\ge \lcbeb{n}{\dt}(Y_{1:n})$.
\end{theorem}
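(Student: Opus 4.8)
The plan is to prove the bound by \emph{deterministically} dominating $\lcbeb{n}{\dt}$ with the already-validated $\lcbpcrp{n}{\dt}$ and then invoking the validity of the latter. Lemma~\ref{lem:lcb} gives that, with probability $\ge 1-\dt$, $\mu \ge \lcbpcrp{n}{\dt}(Y_{1:n})$ simultaneously for all $n\ge 1$ (this uses only the martingale property $\EE[Y_t\mid Y^{t-1}]=\mu$, which matches the setting of Proposition~\ref{prop:lcb}), so it suffices to establish the pointwise inequality $\lcbeb{n}{\dt}(Y_{1:n}) \le \lcbpcrp{n}{\dt}(Y_{1:n})$ for every realization and every $n$ with $1-\tfrac{2}{n}H_n^{(\dt)}>0$. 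The hypothesis $H_n^{(\dt)}<\tfrac12$ guarantees exactly this ($\tfrac{2}{n}H_n^{(\dt)}<\tfrac1n\le 1$), so $\Delta_n^{(\dt)}$ is real and the construction is well-defined. The case $\lcbeb{n}{\dt}\le 0$ needs no work since $\mu>0\ge \lcbeb{n}{\dt}$, so I may assume $\nu\defeq \lcbeb{n}{\dt}(Y_{1:n})=\hat{\mean}_n-\Delta_n^{(\dt)}>0$.

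Next I would convert the comparison into a statement about wealth. Because $\nu\mapsto \wealthpcrp_n(Y_{1:n};\nu)$ is strictly decreasing and $\lcbpcrp{n}{\dt}$ is its unique root at level $1/\dt$ by Eq.~\eqref{eq:pcrplcb}, the desired $\nu\le \lcbpcrp{n}{\dt}$ is equivalent to $\wealthpcrp_n(Y_{1:n};\nu)\ge 1/\dt$. Unfolding $\wealthpcrp_n = \tfrac{1}{\sqrt{\pi(n+1)}}\sup_{b\in(0,1)}\wealthcrp{b}_n$ from Eq.~\eqref{eq:up_vs_pcrp} and recalling $H_n^{(\dt)}=\ln\tfrac{\sqrt{\pi(n+1)}}{\dt}$, this reduces to exhibiting a single betting fraction $b\in(0,1)$ with $\ln\wealthcrp{b}_n(Y_{1:n};\nu)\ge H_n^{(\dt)}$. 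The whole theorem thus collapses to a deterministic lower bound on the best constant-rebalanced log-wealth evaluated at $\nu=\hat{\mean}_n-\Delta_n^{(\dt)}$.

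To produce that lower bound I would use the second-order inequality $\ln(1+u)\ge u-\tfrac{u^2}{1+u}$, valid for all $u>-1$ (the pointwise, deterministic counterpart of the curvature estimate behind Lemma~\ref{lem:240903_var}). Writing $u_t\defeq b\tfrac{Y_t-\nu}{\nu}$, nonnegativity $Y_t\ge 0$ yields $u_t\ge -b$, hence $1+u_t\ge 1-b>0$ and $\sum_t \tfrac{u_t^2}{1+u_t}\le \tfrac{1}{1-b}\sum_t u_t^2$. Using the empirical identities $\sum_t(Y_t-\nu)=n\Delta_n^{(\dt)}$ and $\sum_t(Y_t-\nu)^2=n(\hat{\mathsf{V}}_n+(\Delta_n^{(\dt)})^2)$ (both valid since $\nu=\hat{\mean}_n-\Delta_n^{(\dt)}$), summing gives a lower bound on $\ln\wealthcrp{b}_n$ that is concave in the rescaled bet $\beta\defeq b/\nu$. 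Optimizing over $\beta$ (and checking the optimizer respects $b\le\tfrac12$) reduces the requirement ``$\ln\wealthcrp{b}_n(Y_{1:n};\nu)\ge H_n^{(\dt)}$'' to the scalar quadratic inequality
\begin{align*}
\Bigl(1-\tfrac{2}{n}H_n^{(\dt)}\Bigr)\,\Delta^2 \;-\; \tfrac{\hat{\mean}_n}{n}H_n^{(\dt)}\,\Delta \;-\; \tfrac{\hat{\mathsf{V}}_n}{n}H_n^{(\dt)} \;\ge\; 0 .
\end{align*}
By construction $\Delta_n^{(\dt)}$ is precisely the positive root of the associated quadratic equation: the displayed formula for $\Delta_n^{(\dt)}$ is exactly its quadratic-formula solution, with leading coefficient $1-\tfrac{2}{n}H_n^{(\dt)}$, linear coefficient $-\tfrac{\hat{\mean}_n}{n}H_n^{(\dt)}$, and constant $-\tfrac{\hat{\mathsf{V}}_n}{n}H_n^{(\dt)}$. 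Hence equality holds at $\Delta=\Delta_n^{(\dt)}$, certifying $\wealthpcrp_n(Y_{1:n};\nu)\ge 1/\dt$, and combining with the high-probability validity of $\lcbpcrp{n}{\dt}$ finishes the proof.

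The hard part will be the third step: because $Y_t$ is unbounded, $u_t$ ranges over all of $[-b,\infty)$, so the curvature bound on $\ln(1+u)$ must remain valid for arbitrarily large positive $u$ (the $\tfrac{u^2}{1+u}$ form does; the naive $u-\tfrac{u^2}{2}$ bound does not), and the self-normalized curvature sum must be controlled by $\tfrac{1}{1-b}\sum_t u_t^2$ without reintroducing $Y_t$ into denominators. Matching the optimized expression to the exact coefficients $1-\tfrac{2}{n}H_n^{(\dt)}$, $\tfrac{\hat{\mean}_n}{n}H_n^{(\dt)}$, $\tfrac{\hat{\mathsf{V}}_n}{n}H_n^{(\dt)}$ of the stated quadratic, together with verifying that the optimal bet obeys $b\le\tfrac12$ under the hypothesis on $H_n^{(\dt)}$, is the delicate but otherwise routine bookkeeping.
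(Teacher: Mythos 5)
Your overall strategy is sound, and it is essentially the paper's own proof run in the contrapositive direction: the paper applies Ville's inequality at $\nu=\mu$ and solves the resulting inequality for $\mu$, while you certify deterministically that $\wealthpcrp_n(Y_{1:n};\nu_0)\ge 1/\dt$ at $\nu_0=\lcbeb{n}{\dt}(Y_{1:n})$ and then invoke the validity of $\lcbpcrp{n}{\dt}$; by monotonicity of the wealth in $\nu$ these are the same computation. Indeed, your curvature step is not merely analogous to the paper's---it is algebraically \emph{identical} to it: expanding $b(z-1)-\fr{b^2(z-1)^2}{1-b}$ and $b(z^2-z)+(1-z)^2\ln(1-b)$ with $\ln(1-b)\ge -\fr{b}{1-b}$ (the paper's Lemma~\ref{lem:generalized_lower_bound} with $r=1$ followed by that elementary bound), the two per-sample lower bounds on $\ln(1-b+bz)$ coincide exactly, so the ``hard part'' you flag is already resolved by the same relaxation the paper uses.

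The genuine gap is in your final certification. You claim the optimization reduces to $(1-\tfrac{2}{n}H_n^{(\dt)})\Delta^2-\tfrac{\hat{\mean}_n}{n}H_n^{(\dt)}\Delta-\tfrac{\hat{\mathsf{V}}_n}{n}H_n^{(\dt)}\ge 0$ and that $\Delta_n^{(\dt)}$ is the positive root of the associated equation. Neither is true: applying the quadratic formula, the positive root of your equation is
\begin{align*}
\frac{\fr{\hat{\mean}_n}{n}H_n^{(\dt)}+\sqrt{\fr{\hat{\mean}_n^2}{n^2}(H_n^{(\dt)})^2+4\bigl(1-\fr{2}{n}H_n^{(\dt)}\bigr)\fr{\hat{\mathsf{V}}_n}{n}H_n^{(\dt)}}}{2\bigl(1-\fr{2}{n}H_n^{(\dt)}\bigr)},
\end{align*}
which is strictly smaller than the stated $\Delta_n^{(\dt)}$ (note the extra factor $2$ in the denominator), so ``equality holds at $\Delta=\Delta_n^{(\dt)}$'' fails and your certificate does not close. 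The correct bookkeeping is as follows: at $\nu_0=\hat{\mean}_n-\Delta$, your (equivalently, the paper's) per-sample bound gives $\fr1n\ln\wealthcrp{b}_n(Y_{1:n};\nu_0)\ge \fr{1}{\nu_0^2}\bigl(Bb-(B-A)\fr{b}{1-b}\bigr)$ with $A\defeq\Delta\nu_0$ and $B\defeq\hat{\mathsf{V}}_n+\hat{\mean}_n\Delta$; choosing $b^\star=\fr{A}{2B}\in(0,\fr12]$ yields the value $\fr{A^2}{2(2B-A)\nu_0^2}$, and since $2B-A=2\hat{\mathsf{V}}_n+\Delta(\hat{\mean}_n+\Delta)$, the sufficient condition $\fr{A^2}{2(2B-A)}\ge\fr{H_n^{(\dt)}}{n}\nu_0^2$ expands to $\bigl(1-\fr{2}{n}H_n^{(\dt)}\bigr)\Delta^2-\fr{2H_n^{(\dt)}}{n}\hat{\mean}_n\Delta-\fr{4H_n^{(\dt)}}{n}\hat{\mathsf{V}}_n\ge 0$---coefficients $2$ and $4$, not $1$ and $1$---and this is precisely the quadratic whose positive root is the stated $\Delta_n^{(\dt)}$ (it holds with equality there, and your exact optimizer $b=1-\sqrt{B/(A+B)}$ only improves on $b^\star$, so the certificate then holds a fortiori). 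With these coefficients corrected, your argument goes through and matches the paper's proof; as written, the key quantitative step is false.
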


\begin{proof}
By Ville's inequality, with probability $1-\dt$, we have, for any $n\ge 1$,
\begin{align*}
\ln\frac{1}{\dt} {\ge} \ln\wealthup_t(Y_{1:n};\nu)
\stackrel{(a)}{\ge}
\ln\wealthpcrp(Y_{1:n};\nu)
={\sup_{b\in[0,1]}\ln\wealthcrp{b}(Y_{1:n};\nu)}
-\ln{\sqrt{\pi(n+1)}},
\end{align*}
which is equivalent to
\begin{align*}
\frac{1}{n}\sup_{b\in[0,1]} \sum_{t=1}^n \ln\Bigl(1-b+b\frac{Y_t}{\nu}\Bigr) 
\le \frac{1}{n} H_{n}^{(\dt)}.
\end{align*}
Here, $(a)$ follows from Eq.~\eqref{eq:up_vs_pcrp}.

Now, we apply Lemma~\ref{lem:generalized_lower_bound} for $n=1$ and obtain
\begin{align*}
\ln(1-b+bZ)
&\ge b((1-Z)^2-(1-Z)) + (1-Z)^2\ln(1-b)\\
&= b(Z^2-Z)+(Z^2-2Z+1)\ln(1-b),
\end{align*}
which holds for any $b\in[0,1)$ and $Z>0$.
Applying this inequality to each summand, we have
\begin{align*}
\fr{H_{n}^{(\dt)}}{n}
&\ge
\frac{1}{n}\sup_{b\in[0,1]} \sum_{t=1}^n \ln\Bigl(1-b+b\frac{Y_t}{\nu}\Bigr)\\
&\ge \frac{1}{\nu^2} \sup_{b\in[0,1]}\Bigl\{
((\hat{\mathsf{V}}_n+\hat{\mean}_n^2)-\hat{\mean}_n\nu)b
+((\hat{\mathsf{V}}_n+\hat{\mean}_n^2)-2\hat{\mean}_n\nu+\nu^2)\ln(1-b)\Bigr\}.\\
&= \frac{1}{\nu^2} \sup_{b\in[0,1]}\Bigl\{
B b
+(B-A)\ln(1-b)\Bigr\}\\
&\stackrel{(b)}{\ge} \frac{1}{\nu^2} \sup_{b\in[0,1]}\Bigl\{
B b
+(B-A)\frac{-b}{1-b}\Bigr\}\\
&\stackrel{(c)}{\ge} \frac{1}{\nu^2} \frac{A^2}{2(2B-A)},
\end{align*}
where $A\defeq (\hat{\mean}_n-\nu)\nu$ and $B\defeq (\hat{\mathsf{V}}_n+\hat{\mean}_n^2)-\hat{\mean}_n\nu$.
Note that $(b)$ follows from the elementary inequality $\ln(1-b) \ge \frac{-b}{1-b}$ for $b<1$, and $(c)$ follows by setting $b=\frac{A}{2B}$ to derive a lower bound.
We now wish to solve the equation 
\[
\fr{H_{n}^{(\dt)}}{n}\nu^2 = \frac{A^2}{2(2B-A)}
\]
with respect to $\nu$, which becomes equivalent to 
\[
\Bigl(1-2\fr{H_{n}^{(\dt)}}{n}\Bigr)x^2 - 2\fr{H_{n}^{(\dt)}}{n}\hat{\mean}_nx - 4\fr{H_{n}^{(\dt)}}{n}\hat{\mathsf{V}}_n=0,
\]
if we let $x\defeq \hat{\mean}_n-\nu$.
It is easy to check that $x=\hat{\mean}_n-\lcbeb{n}{\dt}(Y_{1:n})$ is the solution to this quadratic equation and thus a valid lower bound for $\mean$.
\end{proof}

\subsection{Proof for Theorem~\ref{thm:main_selection} (Regret Analysis for PUB)}

We provide a proof for $\hpi=\selectupsimple$, and the other case follows immediately by the same logic.
It suffices to show the second inequality.
Letting $2G_{n}^{(\dt)}[Y_1]$ denote the upper bound in Theorem~\ref{thm:up_lcb_rate},
we apply Theorem~\ref{thm:up_lcb_rate} to the process $\ips_{1:n}^\pi$ for each $\pi\in\Pi$ and take a union bound with $\dt\gets \dt'=\frac{\dt}{|\Pi|}$.
Under the good event with probability $\ge 1-2\dt$,
we have
\begin{align*}
\mean(\pi^*) - \mean(\selectupsimple) 
\stackrel{(a)}{\le} \mean(\pi^*) - \lcbup{n}{}(\ips_{1:n}^{\selectupsimple})  
\stackrel{(b)}{\le} \mean(\pi^*) - \lcbup{n}{}(\ips_{1:n}^{\pi^*})
\stackrel{(c)}{\le} 2G_n^{(\dt')}[\ips_1^{\pi^*}].
\end{align*}
Here, $(a)$ follows since $\mean(\ips^{\selectupsimple})\ge \lcbup{n}{}(\ips_{1:n}^{\selectupsimple})$, $(b)$ from the definition of the selection method in Eq.~\eqref{eq:up_select},
and $(c)$ from the upper bound of Theorem~\ref{thm:up_lcb_rate}.

The second part of the statement follows from the second part of Theorem~\ref{thm:up_lcb_rate} in place of the first part.
\jmlrQED

\subsection{Off-Policy Evaluation with Betting}
\label{sec:evaluation}
We can immediately apply the UP-LCB and pCRP$^\star$-LCB for off-policy evaluation as well.
Similar to \citet{Waudby-Smith--Wu--Ramdas--Karampatziakis--Mineiro2022}, we can construct the upper confidence bound (UCB) of the value of a policy using our LCB machinery, since 
\[
\breve{r}_t^\pi\defeq w_t^\pi (1-r_t) = w_t^\pi - \ips_t^\pi 
\]
is also a nonnegative random process.
Using $\EE[\breve{r}_t^\pi] = 1-\mean(\pi)$, we can construct the LCB from $\breve{r}_{1:n}^\pi$, from which we can construct the UCB of $\mean(\pi)$.
More precisely, we have:
\begin{proposition}
Pick any policy $\pi$.
With probability $\ge 1-2\dt$,
\begin{align*}
\lcbpcrp{n}{\dt}(\ips_{1:n}^\pi)
\le \lcbup{n}{\dt}(\ips_{1:n}^\pi) 
\le \mean(\pi)
\le 1-\lcbup{n}{\dt}({\breve{r}}_{1:n}^\pi)
\le 1-\lcbpcrp{n}{\dt}({\breve{r}}_{1:n}^\pi).
\end{align*}
\end{proposition}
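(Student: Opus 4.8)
The plan is to reduce the five-term chain to two invocations of the LCB validity (Proposition~\ref{prop:lcb}) combined with the deterministic ordering $\wealthup_n \ge \wealthpcrp_n$ from Eq.~\eqref{eq:up_vs_pcrp}.

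First I would record the two structural facts about the auxiliary process $\breve{r}_t^\pi = w_t^\pi(1-r_t)$. Since $r_t = r(x_t,a_t)\in[0,1]$ and $w_t^\pi\ge 0$, each $\breve{r}_t^\pi$ is nonnegative, just like $\ips_t^\pi$. Moreover $\breve{r}_t^\pi$ is exactly the importance-weighted reward associated with the flipped reward function $1-r(x,a)\in[0,1]$, so by the same unbiasedness of the IW estimator that gives $\EE[\ips_1^\pi]=\mean(\pi)$, we get $\EE[\breve{r}_1^\pi]=1-\mean(\pi)$. Because the tuples $(x_t,a_t,r_t)$ are i.i.d., each conditional mean equals the corresponding unconditional mean. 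Hence $(\breve{r}_t^\pi)_{t}$ is a nonnegative i.i.d. process with mean $1-\mean(\pi)$ while $(\ips_t^\pi)_{t}$ is one with mean $\mean(\pi)$; both satisfy the hypotheses of Proposition~\ref{prop:lcb}, provided $0<\mean(\pi)<1$ so that both means are strictly positive.

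Next I would apply the LCB validity twice. Proposition~\ref{prop:lcb} applied to $(\ips_t^\pi)_t$ yields an event $E_1$ of probability $\ge 1-\dt$ on which $\mean(\pi)\ge \lcbup{n}{\dt}(\ips_{1:n}^\pi)$; applied to $(\breve{r}_t^\pi)_t$ it yields an event $E_2$ of probability $\ge 1-\dt$ on which $1-\mean(\pi)\ge \lcbup{n}{\dt}(\breve{r}_{1:n}^\pi)$, i.e. $\mean(\pi)\le 1-\lcbup{n}{\dt}(\breve{r}_{1:n}^\pi)$. These are precisely the two inequalities flanking $\mean(\pi)$ in the middle of the chain. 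The two outermost inequalities are deterministic and come from Eq.~\eqref{eq:up_vs_pcrp}: since $\wealthup_n(\cdot;\nu)\ge\wealthpcrp_n(\cdot;\nu)$ pointwise in $\nu$, the sublevel set $\{\nu:\wealthup_n(Y_{1:n};\nu)\le 1/\dt\}$ is contained in $\{\nu:\wealthpcrp_n(Y_{1:n};\nu)\le 1/\dt\}$, so their infima obey $\lcbup{n}{\dt}(Y_{1:n})\ge \lcbpcrp{n}{\dt}(Y_{1:n})$ for any sequence---exactly the ordering recorded in Lemma~\ref{lem:lcb}. Taking $Y=\ips^\pi$ gives the leftmost inequality, and taking $Y=\breve{r}^\pi$ gives $\lcbup{n}{\dt}(\breve{r}_{1:n}^\pi)\ge \lcbpcrp{n}{\dt}(\breve{r}_{1:n}^\pi)$, hence $1-\lcbup{n}{\dt}(\breve{r}_{1:n}^\pi)\le 1-\lcbpcrp{n}{\dt}(\breve{r}_{1:n}^\pi)$, the rightmost inequality. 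Finally I would intersect $E_1$ and $E_2$ by a union bound, so that all five terms are simultaneously ordered with probability $\ge 1-2\dt$.

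There is no substantial obstacle; the only step needing care is confirming that $\breve{r}^\pi$ genuinely meets the conditional-mean hypothesis of Proposition~\ref{prop:lcb} and has a strictly positive mean, which is where the standing assumption $\mean(\pi)\in(0,1)$ enters. Everything else is bookkeeping of the two confidence-bound events together with the deterministic UP-versus-pCRP$^\star$ comparison.
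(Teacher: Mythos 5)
Your proposal is correct and follows exactly the argument the paper intends (which it leaves implicit as ``immediate''): apply the validity of the betting LCB (Proposition~\ref{prop:lcb}) once to $\ips_{1:n}^\pi$ and once to the nonnegative process $\breve{r}_{1:n}^\pi$ with $\EE[\breve{r}_1^\pi]=1-\mean(\pi)$, take a union bound for the $1-2\dt$ level, and obtain the two outer inequalities deterministically from the wealth comparison $\wealthup_n\ge\wealthpcrp_n$ in Eq.~\eqref{eq:up_vs_pcrp} (as in Lemma~\ref{lem:lcb}). Your added caveat that $\mean(\pi)\in(0,1)$ is needed so both processes have strictly positive mean is a fair point of care that the paper glosses over.
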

Unlike \citet{sakhi24logarithmic}, our guarantee provides a direct control over the width of the confidence bounds.
The following guarantee is immediate from Theorem~\ref{thm:up_lcb_rate}:
\begin{theorem}[Evaluation]
\label{thm:evaluation} 
Pick any policy $\pi$.
Let $\breve{\var}(\pi)\defeq \VV[\breve{r}_1^\pi]$.
With probability $\ge 1-4\dt$, 
\begin{align*}
-\del[3]{\sqrt{\fr{48\breve{\var}(\pi)}{n} F_n^{(\dt)}} \vee \fr{12(1-\mean(\pi))}{n}F_n^{(\dt)} }
&\le \mean(\pi) - (1-\lcbpcrp{n}{\dt}({\breve{r}}_{1:n}^\pi))\\
&\le \mean(\pi) - (1-\lcbup{n}{\dt}({\breve{r}}_{1:n}^\pi))\\
&\le 0\\
&\le \mean(\pi) - \lcbup{n}{\dt}(\ips_{1:n}^\pi)\\
&\le \mean(\pi) - \lcbpcrp{n}{\dt}(\ips_{1:n}^\pi)
\le \sqrt{\fr{48\tilde{\var}(\pi)}{n} F_n^{(\dt)}} \vee \fr{12\mean(\pi)}{n}F_n^{(\dt)}.
\end{align*}
\end{theorem}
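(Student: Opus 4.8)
The plan is to derive the entire two-sided statement from two applications of the convergence-rate bound of Theorem~\ref{thm:up_lcb_rate} (its first part, Theorem~\ref{thm:up_lcb_rate_1}): one to the importance-weighted reward process $\ips_{1:n}^\pi$, and one to the complementary process $\breve{r}_{1:n}^\pi$ with $\breve{r}_t^\pi = w_t^\pi(1-r_t)$. No new concentration argument is required; the whole content is a reflection of the one-sided LCB machinery plus careful sign bookkeeping.

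First I would record the two elementary facts that make the reduction work. Since $r_t\in[0,1]$ and $w_t^\pi\ge 0$, the process $\breve{r}_{1:n}^\pi$ is nonnegative and i.i.d., exactly as required by Theorem~\ref{thm:up_lcb_rate}. Moreover, using $\EE[w_t^\pi]=1$, its mean is $\EE[\breve{r}_1^\pi]=\EE[w_1^\pi]-\EE[\ips_1^\pi]=1-\mean(\pi)$, while its variance is $\breve{\var}(\pi)$ by definition. Next I would apply Theorem~\ref{thm:up_lcb_rate_1} to $\ips_{1:n}^\pi$ with $\mu=\mean(\pi)$ and $\sigma^2=\tilde{\var}(\pi)$; with probability $\ge 1-2\dt$ this yields the bottom half of the chain directly: the validity ordering $\lcbpcrp{n}{\dt}(\ips_{1:n}^\pi)\le \lcbup{n}{\dt}(\ips_{1:n}^\pi)\le \mean(\pi)$ (the first inequality from $\wealthup\ge\wealthpcrp$ in Eq.~\eqref{eq:up_vs_pcrp}, the second from Ville's inequality), together with the right-hand rate bound $\sqrt{48\tilde{\var}(\pi)F_n^{(\dt)}/n}\vee (12\mean(\pi)/n)F_n^{(\dt)}$. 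Subtracting these quantities from $\mean(\pi)$ produces the last three displayed inequalities and the final upper bound.

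Then I would apply the same theorem to $\breve{r}_{1:n}^\pi$ with $\mu=1-\mean(\pi)$ and $\sigma^2=\breve{\var}(\pi)$, obtaining with probability $\ge 1-2\dt$ the chain $0\le (1-\mean(\pi))-\lcbup{n}{\dt}(\breve{r}_{1:n}^\pi)\le (1-\mean(\pi))-\lcbpcrp{n}{\dt}(\breve{r}_{1:n}^\pi)\le \sqrt{48\breve{\var}(\pi)F_n^{(\dt)}/n}\vee (12(1-\mean(\pi))/n)F_n^{(\dt)}$. Negating each term and adding $\mean(\pi)-1$ reflects this into an upper-confidence statement on $\mean(\pi)$: the validity $0\le (1-\mean(\pi))-\lcbup{n}{\dt}(\breve{r}_{1:n}^\pi)$ becomes $\mean(\pi)-(1-\lcbup{n}{\dt}(\breve{r}_{1:n}^\pi))\le 0$; the ordering $\lcbpcrp{n}{\dt}(\breve{r}_{1:n}^\pi)\le\lcbup{n}{\dt}(\breve{r}_{1:n}^\pi)$ becomes $\mean(\pi)-(1-\lcbpcrp{n}{\dt}(\breve{r}_{1:n}^\pi))\le \mean(\pi)-(1-\lcbup{n}{\dt}(\breve{r}_{1:n}^\pi))$; and the rate bound flips sign to give the leftmost lower bound $-(\sqrt{48\breve{\var}(\pi)F_n^{(\dt)}/n}\vee (12(1-\mean(\pi))/n)F_n^{(\dt)})$. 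A union bound over the two applications yields the stated probability $\ge 1-4\dt$, and gluing the upper chain (from $\breve{r}$) to the lower chain (from $\ips$) at the common pivot $0$ completes the argument.

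The step I expect to demand the most care is this reflection: applying a lower confidence bound to $1-\mean(\pi)$ turns it into an upper confidence bound on $\mean(\pi)$, so every inequality direction inverts and the roles of $\lcbup$ and $\lcbpcrp$ switch sides of the interval. I would therefore be deliberate about applying the monotonicity $\lcbpcrp\le\lcbup$ (inherited from Eq.~\eqref{eq:up_vs_pcrp}) consistently after the sign flip, so that the looser pCRP$^\star$ bound correctly lands on the outer ends of the two-sided interval and the tighter UP bound on the inner ends.
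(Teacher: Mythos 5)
Your proposal is correct and is essentially the paper's own argument: the paper presents Theorem~\ref{thm:evaluation} as an immediate consequence of Theorem~\ref{thm:up_lcb_rate}, applied once to $\ips_{1:n}^\pi$ and once to the complementary nonnegative process $\breve{r}_{1:n}^\pi$ with mean $\EE[\breve{r}_1^\pi]=1-\mean(\pi)$ (using $\EE[w_1^\pi]=1$), followed by a union bound giving $1-4\dt$. Your sign-reflection bookkeeping, including that $\lcbpcrp{n}{\dt}\le\lcbup{n}{\dt}$ places the looser pCRP$^\star$ bound on the outer ends of the interval after negation, is exactly right.
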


\section{Deferred Proofs for Off-Policy Learning}
\label{app:sec:proofs_learning}

\subsection{Proof of Proposition~\ref{prop:phi} (Examples of Score Functions)}

\begin{proof}
  Logarithmic smoothing is trivial by definition.
  For freezing, 
  the upper bound side is obvious.
  For the lower bound, we need to find $c_1$ and $c_2$ such that
  \begin{align*}
    f(x) \defeq\fr{\exp(-\phi(x)) - 1 + x}{x^2} \le \fr{1}{c_1 + c_2 x} 
  \end{align*}
  For this, if $x\le 1$ then we have $f(x) = \fr{1}{1+x}$.
  If $x > 1$, then we have $f(x) = 1/x$.
  Thus, using $\onec{x > 1} \le \fr{x}{1+x}$,
  \begin{align*}
    f(x) 
    &\le \onec{x\le 1} \fr{1}{1+x} + \onec{x > 1}  \fr{1}{x}
    \\&\le \onec{x\le 1} \fr{1}{1+x} + \fr{2}{1 + x} 
    \\&\le \fr{2}{1+x}.
  \end{align*}
  Thus, we have $c_1 = c_2 = \fr12$.
  For clipping, similar to freezing, if $x\le1$ then we have $f(x) = \fr{1}{1+x}$.
  If $x>1$, then we have $f(x) = \fr{-\fr12 + x}{x^2} \le \fr 1 x$.
  We can then proceed the identical derivation to Freezing to obtain $c_1 = c_2 = \fr12$.
\end{proof}

\subsection{Proof of Theorem~\ref{thm:learning-new} (Regret Analysis for Learning Algorithm)}

To derive the desired regret bound for our general estimator $\hpi_n \defeq \arg\max_{\pi\in\Pi} \sum_{t=1}^n \phi(\beta \ips_t^\pi)$, we consider the following two martingales:
\begin{align*}
  \text{(Upper deviation):}\quad
  U_n^\pi &\defeq
  \  \prod_{t=1}^n \frac{e^{\phi(\beta \ips_t^\pi)}}{\EE[e^{\phi (\beta \ips_t^\pi)}]}, \\
  \text{(Lower deviation):}\quad
  L_n^\pi &\defeq
  \  \prod_{t=1}^n \frac{e^{-\phi (\beta \ips_t^\pi)}}{\EE[e^{-\phi (\beta \ips_t^\pi)}]}.
\end{align*}
Throughout the proof we omit the subscript $t$ from $\ips_t^\pi$ inside the expectation, and use $\ips^\pi$ for simplicity.

By applying Ville's inequality~\citep{ville39etude} and taking the union bound over $\pi\in\Pi$, we have: with probability at least $1-2\dt$,
$U_n^\pi\le \frac{1}{\dt}$ and $L_n^\pi\le \frac{1}{\dt}$
for all $\pi \in \Pi$. 
Given this good event,
we have
\begin{align}
  -\ln\left(\EE[e^{-\phi(\beta \ips^{\pi^\star})}] \right) - \frac{1}{n}\ln\frac{|\Pi|}{\dt} 
  &\stackrel{(a)}{\le} \frac{1}{n} \sum_{t=1}^n \phi(\beta \ips_t^{\pi^\star}) \nonumber\\ 
  & \stackrel{(b)}{\le} \frac{1}{n} \sum_{t=1}^n \phi(\beta \ips_t^{\hpi_n}) \nonumber\\
  & \stackrel{(c)}{\le} \ln\left(\EE[e^{\phi(\beta \ips^{\hpi_n})}] \right) + \frac{1}{n}\ln\frac{|\Pi|}{\dt},
  \label{eq:ineq_chain_kj}
\end{align}
where $(a)$ follows from $L_n^{\pi^\star}\le \frac{1}{\dt}$,
$(b)$ follows by the definition of $\hpi_n$, and $(c)$ follows from $U_n^{\hpi_n}\le \frac{1}{\dt}$. 

We now further upper- and lower-bound this inequality.
Note that
\begin{align*}
  \ln(\EE[e^{\phi(\beta \ips^{\hpi_n})}])
  = \beta\EE[\ips^{\hpi_n}] + \ln (\EE[e^{\phi(\beta \ips^{\hpi_n}) - \EE  \beta \ips^{\hpi_n}}])
\end{align*}
Thus,
\begin{align}
  \fr1n\sum_t \fr1\beta \phi(\ips_t^{\hpi_n}) - \EE[\ips^{\hpi_n}] 
  &\le \underbrace{\fr1{\beta} \ln\fr{1}{\EE[e^{\phi(\beta \ips^{\hpi_n}) - \EE  \beta \ips^{\hpi_n}}]}}_{= -F_\beta(\phi)} + \fr{1}{n\beta} \ln(1/\dt) \label{eq:log_expected_inequality_ub_kj}
\end{align}
Note that $F_\beta(\phi) \ge 0$ by $\phi(x) \le \ln(1+x)$.

For the lower bound, we have
Note that
\begin{align*}
  \ln(\EE[e^{-\phi(\beta \ips^{\pi^*})}])
  &\le \EE[e^{-\phi(\beta \ips^{\pi^*})}] - 1
  \\&\le - \EE[\beta \ips^{\pi^*}] + \EE\biggl[\fr{\beta^2(\ips^{\pi^*})^2}{c_1 + c_2 \beta \ips^{\pi^*}} \biggr].
\end{align*}
Therefore,
\begin{align}
  \frac{1}{n}\sum_t -\fr{1}{\beta} \phi(\beta \ips_t^{\pi^*} ) + \EE[ \ips^{\pi^*}]\le \beta \EE\biggl[\fr{(\ips^{\pi^*})^2}{c_1 + c_2 \beta \ips^{\pi^*}}\biggr] +  \fr{1}{n\beta} \ln(1/\dt)~.
  \label{eq:log_expected_inequality_lb_kj}
\end{align}

By combining Eq.~\eqref{eq:log_expected_inequality_ub_kj} and Eq.~\eqref{eq:log_expected_inequality_lb_kj} through Eq.~\eqref{eq:ineq_chain_kj}, we have
\begin{align*}
  v(\pi^\star)-v(\hpi_n)
  &=\EE[\ips^{\pi^\star}] - \EE[\ips^{\hpi_n} ] \\
  &\le \beta \EE\biggl[\fr{(\ips^{\pi^*})^2}{c_1 + c_2 \beta \ips^{\pi^*}}\biggr] + F_\beta(\phi) + \fr{2}{\beta n} \ln\fr{|\Pi| }{\dt},
\end{align*}
which proves the desired claim.
\jmlrQED

\section{On Experiments and Additional Results}

\subsection{On the Heavy-Tail Setup in Section~\ref{sec:counter-example}}
\label{app:sec:counter-example}
We first provide a formal statement and its proof on the nonexistence of the fourth moment in the setup of Section~\ref{sec:counter-example}.

\begin{proposition}
\label{prop:counter_example}
Consider the discrete context space $\cX =\mathbb{N}= \{1,2,\ldots\}$ and a discrete action space $\mathcal{A}=\{1,\ldots,K\}$,
 where the context probability $p(x)$ is assigned such that
$p(x=i) \propto \fr{1}{i^2}$.
Suppose that $p(r=1|x=i,a=1)\ge \tau>0$, and a behavior policy is defined as $\piref(a=1|x)\defeq \frac{1}{x^\beta}$ for $\beta\ge 1$. 
If $\pi(a=1|x)\ge c$ for any $x\in\mathcal{X}$ for some $c>0$,
then the fourth moment $\EE[(\ips_t^\pi)^4]$ does not exist.    
\end{proposition}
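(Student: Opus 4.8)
The plan is to exploit nonnegativity of $\ips_t^\pi$ and lower-bound its fourth moment by restricting the expectation to the single most favorable branch, namely the event $\{a_t=1,\ r_t=1\}$, on which the importance weight is largest. First I would write the fourth moment as a sum over contexts followed by an expectation over the action and reward,
\[
\EE[(\ips_t^\pi)^4]
= \sum_{i=1}^\infty p(x=i) \sum_{a\in\mathcal{A}} \piref(a|x=i)\, \EE\bigl[(\ips_t^\pi)^4 \mid x=i,\, a_t=a\bigr],
\]
and then discard every term except $a=1$. On the subevent $r_t=1$ (which occurs with conditional probability at least $\tau$) we have $\ips_t^\pi = \pi(a=1|x=i)/\piref(a=1|x=i) = \pi(a=1|x=i)\, i^\beta$, so using $\pi(a=1|x=i)\ge c$, $p(r=1|x=i,a=1)\ge\tau$, $p(x=i)=\frac{6}{\pi^2} i^{-2}$, and $\piref(a=1|x=i)=i^{-\beta}$, I would obtain
\[
\EE[(\ips_t^\pi)^4]
\ge \sum_{i=1}^\infty \frac{6}{\pi^2}\, i^{-2}\cdot i^{-\beta}\cdot \tau \cdot \bigl(c\, i^\beta\bigr)^4
= \frac{6\,\tau\, c^4}{\pi^2}\sum_{i=1}^\infty i^{\,3\beta-2}.
\]

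The second step is simply to read off the exponent. Since $\beta\ge 1$, we have $3\beta-2\ge 1>-1$, so the series $\sum_i i^{\,3\beta-2}$ diverges; in fact its general term does not even tend to zero. Hence the fourth moment is infinite, i.e.\ does not exist as a finite quantity, which is exactly the claim.

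There is essentially no obstacle here beyond bookkeeping: all the content lies in the observation that the action-$1$, reward-$1$ branch forces the importance weight to scale like $i^\beta$, and that the heavy-tailed context law $p(x=i)\propto i^{-2}$ is not light enough to suppress the $i^{4\beta}$ growth of the fourth power of the weight once the $i^{-\beta}$ propensity factor is incorporated. The only point to verify carefully is the exponent arithmetic $-2-\beta+4\beta = 3\beta-2$ together with the divergence threshold $\beta\ge \tfrac13$, which is comfortably implied by the hypothesis $\beta\ge 1$. The identical computation shows more generally that the $k$-th moment $\EE[(\ips_t^\pi)^k]$ already diverges once $(k-1)\beta>1$, explaining why these instances are genuinely heavy-tailed.
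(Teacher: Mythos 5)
Your proof is correct and follows essentially the same route as the paper's: both restrict the expectation to the favorable branch $a=1$ (with $r=1$ contributing probability at least $\tau$), use $\pi(a=1\mid x=i)\ge c$ and $\piref(a=1\mid x=i)=i^{-\beta}$ to reduce the fourth moment to a constant multiple of $\sum_{i\ge 1} i^{3\beta-2}$, and conclude divergence from $\beta\ge 1$ (indeed $\beta\ge \tfrac13$ suffices, as you note). Your explicit bookkeeping of the conditional probability of $\{r=1\}$ is just a slightly more pedestrian phrasing of the paper's bound $\EE[R^4\mid x,a=1]\ge\tau$; there is no substantive difference.
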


\begin{proof}
Consider
\begin{align*}
\EE[(\ips_t^\pi)^4]
&= \EE_{p(x)\piref(a|x)p(r|a,x)}\biggl[ \biggl(\fr{\pi(A|X)}{\piref(A|X)}\biggr)^4 R^4\biggr]
\\&= \EE_{p(x)}\biggl[ \sum_{a\in\{0,1\}} \fr{\pi(a\mid X)^4}{\piref(a\mid X)^3} \E_{p(r|a,X)}[R^4]\biggr]
\\&\ge \tau\EE_{p(x)}\biggl[ \fr{\pi(a=1\mid X)^4}{\piref(a=1\mid X)^3}
\biggr]
\\&\gsim \sum_{i=1}^\infty \fr{1}{i^2} \fr{\pi(a = 1\mid x=i)^4}{\piref(a=1\mid x=i)^3} 
\\&\gsim \sum_{i=1}^\infty \fr{1}{i^2} \frac{1}{(1/i^\beta)^3}
\\&\gsim \sum_{i=1}^\infty {i^{3\beta-2}} 
=\infty.
\end{align*}
This concludes the proof.
\end{proof}

In Figure~\ref{fig:counter_example_realizations}, we present some realizations of the trajectories and LCBs that correspond to the summarization in Figure~\ref{fig:unbdd_up_eb_main}.

\begin{figure}[p]
    \centering
    \subfigure[]{%
        \includegraphics[width=.8\linewidth]{figs/unbdd_up_eb_wswrkm.pdf}
    }\\
    \subfigure[]{%
        \includegraphics[width=0.4\textwidth]{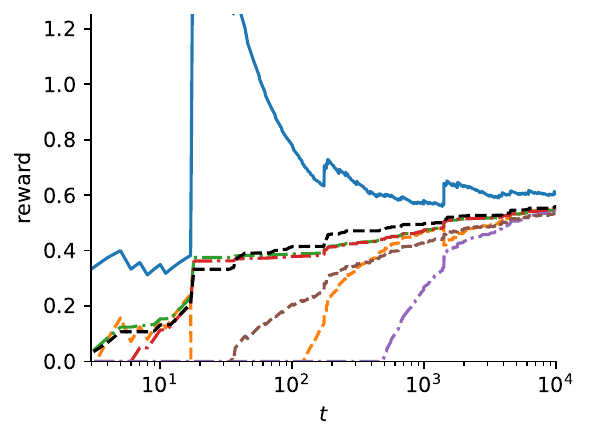}%
        \label{fig:subfig1}%
    }
    \subfigure[]{%
        \includegraphics[width=0.4\textwidth]{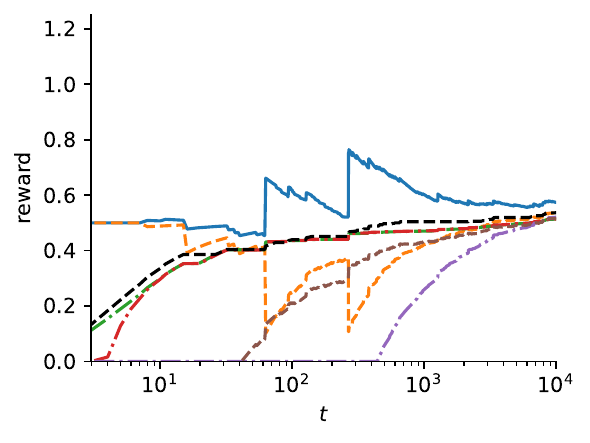}%
        \label{fig:subfig2}%
    }\\
    \subfigure[]{%
        \includegraphics[width=0.4\textwidth]{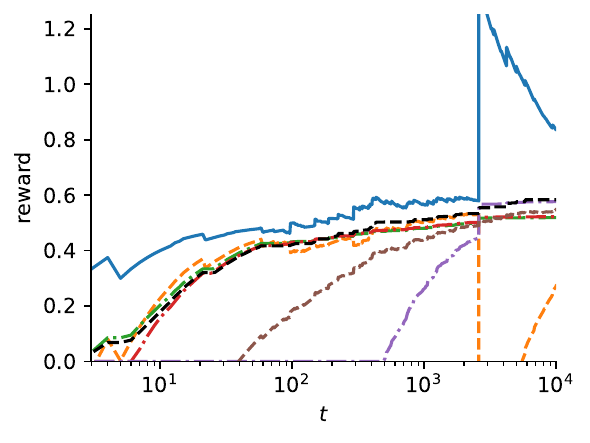}%
        \label{fig:subfig3}%
    }
    \subfigure[]{%
        \includegraphics[width=0.4\textwidth]{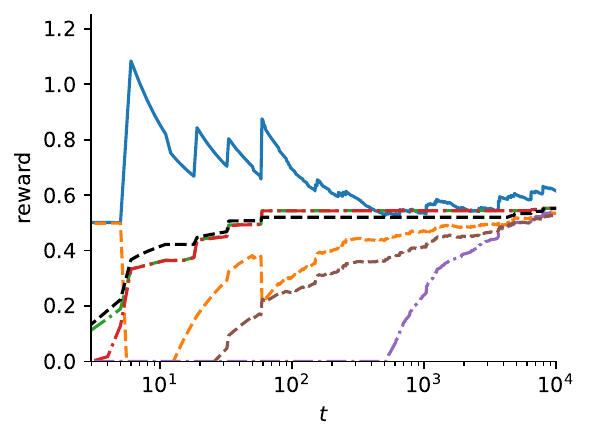}%
        \label{fig:subfig4}%
    }
    \caption{Comparison of the UP-based LCB with baseline LCBs. %
    The average behavior over $N=100$ random trials is presented in (a), and (b)-(d) show some realizations of the random runs. 
    These instances clearly demonstrate the failure cases of empirical-Bernstein-type bounds, which rely on the concentration of the empirical variance.
    }    
    \label{fig:counter_example_realizations}
\end{figure}

\subsection{On OP Learning and Selection Datasets}
Table~\ref{tab:datasets} summarizes the dimensions of each dataset.\vspace{-.5em}

\begin{table}[!ht]
\centering
\caption{Summary statistics of the datasets.}\vspace{.5em}
\begin{tabular}{c c c c}
\toprule
Dataset & PenDigits & SatImage & JPVowel \\
\midrule
\# features & 16 & 36 & 14 \\
\# classes & 10 & 6 & 9 \\
\bottomrule
\end{tabular}
\label{tab:datasets}
\end{table}\vspace{-.5em}

\subsection{OP Learning Baselines}
\label{app:sec:estimators}
The estimators we tested in the OP learning experiment are defined as follows:
\begin{align*}
\hat{\pi}_{\mathsf{PL}}&\defeq \arg\max_{\pi\in\Pi}  \sum_{t=1}^n \Bigl(\ips_t^\pi - {\beta}\sum_{a\in\mathcal{A}} \frac{\pi(a|x_t)}{\piref(a|x_t)}\Bigr),\\
\hat{\pi}_{\mathsf{ClippedIW}}&\defeq \arg\max_{\pi\in\Pi}  \sum_{t=1}^n \frac{\pi(a_t|x_t)}{\piref(a_t|x_t) \vee \beta} r_t,\\
\hat{\pi}_{\mathsf{IX}}&\defeq \arg\max_{\pi\in\Pi} \sum_{t=1}^n \frac{\pi(a_t|x_t)}{\piref(a_t|x_t) + \beta} r_t,\\
\hat{\pi}_{\mathsf{LS}}&\defeq \arg\max_{\pi\in\Pi} \sum_{t=1}^n \ln(1+\beta \ips_t^\pi),\\
\hat{\pi}_{\mathsf{LS+freezing}}&\defeq \arg\max_{\pi\in\Pi} \sum_{t=1}^n \ln(1+\beta \ips_t^\pi)\onec{x \le \beta \ips_t^\pi}.
\end{align*}
In each case, $\beta>0$ is an hyperparameter.

\subsection{Additional Experiments for OP Learning and Selection}
\label{app:sec:add_exp}
In this section, 
we report the OP learning and selection results with two more different policies $\pi_{\text{good},\eps=0.01}$ and $\pi_{\text{bad},\eps=0.1}$ for completeness.
Note that the experimental results in the main text were with the policy $\pi_{\text{good},\eps=0.1}$.

Figure~\ref{fig:learning_additional} and Table~\ref{tab:selection_additional} summarize the OP learning and selection results, respectively. 
For $\pi_{\text{good},\eps=0.01}$, the behavior of the estimators aligns with that under $\pi_{\text{good},\eps=0.1}$ in Figure~\ref{fig:learning} and Table~\ref{tab:selection}. 
Specifically, \texttt{LS+freezing} consistently improves upon \texttt{LS} in learning, and \texttt{PUB} achieves the best or near-best performance in selection.

In contrast, for $\pi_{\text{bad},\eps=0.01}$, OP learning results show mixed behavior, likely due to the poor quality of the behavior policy. In OP selection, all methods perform poorly in the small-sample regime, failing to improve upon the \texttt{IW} baseline. We particularly note that, while \texttt{EB}, \texttt{LS}, and \texttt{PUB} show comparable performance, \texttt{WSWRKM} performs substantially worse in this erratic setting.

\begin{figure*}[bht]
\centering
\includegraphics[width=0.8\linewidth]{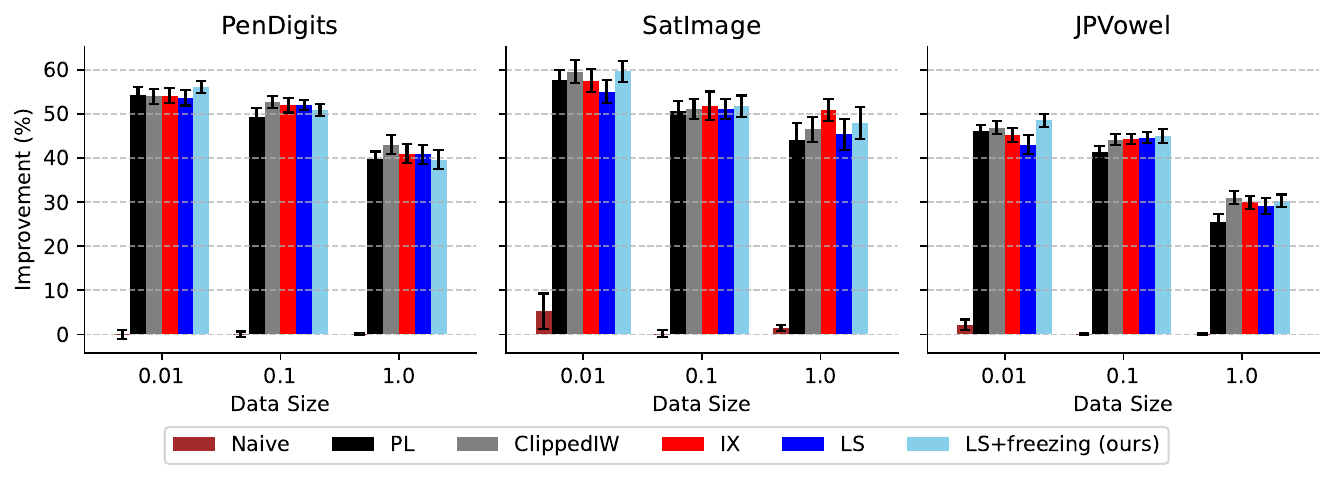}

{\small (a) Good policy + uniform policy with probability $\eps=0.01$ of choosing the good policy.}

\vspace{1em}
\includegraphics[width=0.8\linewidth]{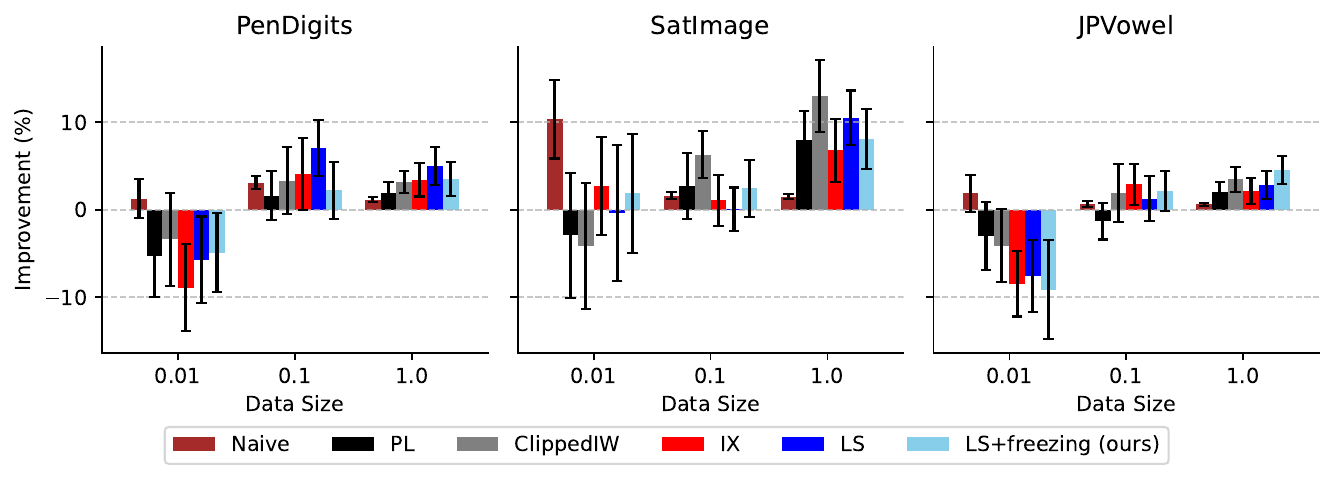}

{\small (b) Bad policy + uniform policy with probability $\eps=0.1$ of choosing the bad policy.}

\caption{Additional OP learning results. 
Compare to Figure~\ref{fig:learning} in the main text.
}
\label{fig:learning_additional}
\end{figure*}

\begin{table}[hbt]\vspace{-.5em}
  \centering
  \caption{Additional OP selection results. 
  Compare to Table~\ref{tab:selection} in the main text.
}
  {\small (a) Good policy + uniform policy with probability $\eps=0.01$ of choosing the good policy.}
  
  \vspace{0.5em}

  \scalebox{0.85}{
  \begin{tabular}{c|ccc|ccc|ccc}
    \toprule
    Dataset & \multicolumn{3}{c|}{PenDigits} & \multicolumn{3}{c|}{SatImage} & \multicolumn{3}{c}{JPVowel} \\
    \midrule
    Size & 0.01 & 0.1 & 1 & 0.01 & 0.1 & 1 & 0.01 & 0.1 & 1 \\
    \midrule
    \texttt{EB} 
    & \underline{50.90} & \underline{47.32} & \underline{40.24}
    & \textbf{52.34} & \underline{33.73} & \underline{33.98}
    & \underline{41.33} & \textbf{48.41} & \underline{32.09}
    \\
    \texttt{LS} 
    & 25.09 & 46.72 & \underline{39.30}
    & 29.40 & \underline{23.88} & \underline{35.94}
    & 18.38 & \underline{46.88} & \underline{32.15}
    \\
    \texttt{WSWRKM} 
    & \underline{35.22} & \underline{50.00} & 18.18
    & 44.01 & \textbf{35.09} & \textbf{39.00}
    & \textbf{45.76} & \underline{37.62} & \underline{31.17}
    \\
    \texttt{PUB} (ours)  
    & \textbf{51.02} & \textbf{51.35} & \textbf{41.33}
    & \underline{50.60} & \underline{32.05} & \underline{37.10}
    & \underline{35.26} & \underline{46.88} & \textbf{32.89}
    \\
    \bottomrule
  \end{tabular}
  }

  \vspace{1em}

  {\small (b) Bad policy + uniform policy with probability $\eps=0.1$ of choosing the bad policy.}

  \vspace{0.5em}

  \scalebox{0.85}{
  \begin{tabular}{c|ccc|ccc|ccc}
    \toprule
    Dataset & \multicolumn{3}{c|}{PenDigits} & \multicolumn{3}{c|}{SatImage} & \multicolumn{3}{c}{JPVowel} \\
    \midrule
    Size & 0.01 & 0.1 & 1 & 0.01 & 0.1 & 1 & 0.01 & 0.1 & 1 \\
    \midrule
    \texttt{EB} 
    & \textbf{-8.86} & \textbf{3.27} & \textbf{1.22}
    & \textbf{-8.36} & \textbf{0.93} & \textbf{2.68}
    & \textbf{-7.89} & \textbf{9.23} & \textbf{-0.79}
    \\
    \texttt{LS} 
    & \textbf{-8.86} & \textbf{3.27} & \textbf{1.22}
    & \textbf{-8.36} & \textbf{0.93} & \textbf{2.68}
    & \textbf{-7.89} & \underline{7.65} & \textbf{-0.79}
    \\
    \texttt{WSWRKM}
    & -15.79 & -0.56 & -1.50
    & -46.89 & -4.59 & 1.41 
    & -37.90 & -8.19 & -39.76
    \\
    \texttt{PUB} (ours)  
    & \textbf{-8.86} & \textbf{3.27} & \textbf{1.22}
    & \textbf{-8.36} & \textbf{0.93} & \textbf{2.68}
    & \underline{-8.66} & \underline{7.65} & \textbf{-0.79}
    \\
    \bottomrule
  \end{tabular}
  }
  \label{tab:selection_additional}
\end{table}

\end{document}